\pgfplotsset{compat=1.18} % or 1.17 if your TeX distro is older
\definecolor{lightgreen}{rgb}{0.88, 1, 0.88} % Define a very light green
\DeclareMathOperator*{\dd}{d}
\newtheorem{theorem}{Theorem}[section]
\newtheorem*{theorem*}{Theorem}
\newcommand{\methodname}{\textsc{KDM}}
\newcommand{\methodnamef}{\textsc{KDM-F}}
\definecolor{reasoning_color}{HTML}{e9e9ff}
\newcommand{\gray}[1]{\textcolor{gray}{#1}}
\definecolor{citecolor}{HTML}{0071BC}
\definecolor{linkcolor}{HTML}{ED1C24}
\title{One-Step Offline Distillation of Diffusion-based Models 
via Koopman Modeling}
\author{
Nimrod Berman\thanks{Equal contribution} \hspace{0.02em} $^{1}$  \quad
Ilan Naiman\footnotemark[1] \hspace{0.02em} $^{1}$ \quad
Moshe Eliasof $^{2}$ \quad
Hedi Zisling$^{1}$ \quad
\vspace{0.1cm}
Omri Azencot$^{1}$ \\
$^{1}$Ben-Gurion University of the Negev \quad
$^{2}$University of Cambridge\\
[0.1cm]
\url{https://sites.google.com/view/koopman-distillation-model/} \\
}
\begin{document}

\newcommand{\rev}[1]{\textcolor{red}{#1}}

\maketitle

\vspace{-3mm}
\begin{abstract}
Diffusion-based generative models have demonstrated exceptional performance, yet their iterative sampling procedures remain computationally expensive. A prominent strategy to mitigate this cost is \emph{distillation}, with \emph{offline distillation} offering particular advantages in terms of efficiency, modularity, and flexibility. In this work, we identify two key observations that motivate a principled distillation framework: (1) while diffusion models have been viewed through the lens of dynamical systems theory, powerful and underexplored tools can be further leveraged; and (2) diffusion models inherently impose structured, semantically coherent trajectories in latent space. Building on these observations, we introduce the \textit{Koopman Distillation Model (\methodname)}, a novel offline distillation approach grounded in Koopman theory---a classical framework for representing nonlinear dynamics linearly in a transformed space. \methodname~encodes noisy inputs into an embedded space where a learned linear operator propagates them forward, followed by a decoder that reconstructs clean samples. This enables single-step generation while preserving semantic fidelity. We provide theoretical justification for our approach: (1) under mild assumptions, the learned diffusion dynamics admit a finite-dimensional Koopman representation; and (2) proximity in the Koopman latent space correlates with semantic similarity in the generated outputs, allowing for effective trajectory alignment. \methodname~achieves highly competitive performance across standard \emph{offline distillation} benchmarks. 
\end{abstract}

\vspace{-3mm}
\section{Introduction}
\label{sec:intro}

% generative diffusion; costly sampling
Diffusion-based generative models~\cite{sohl2015deep}, including score-based and flow-matching variants~\cite{ho2020denoising, lipman2023flow}, have become ubiquitous across a wide range of domains---from images and videos to audio and time series~\cite{dhariwal2021diffusion, ho2022video, kong2020diffwave, coletta2023constrained, naiman2024utilizing, fadlon025diffusion, gonen2025time}. These models now surpass approaches such as GANs and VAEs in terms of sample quality, while also exhibiting greater training stability. Despite these advantages, one of their key limitations remains the high computational cost associated with sampling~\cite{nichol2021improved, salimans2022progressive, luo2023latent}. Generating a single high-fidelity sample typically requires executing a lengthy iterative process, progressively refining random noise through dozens or even hundreds of model evaluations~\cite{ho2020denoising}.

% two main approaches: faster integration and distillation; online vs. offline distillation; advantages of offline, cite GET
A widely adopted strategy to address this involves minimizing the number of inference steps~\cite{song2020denoising}, for example by leveraging improved numerical solvers or designing more expressive noise schedules, enabling high-quality generation with significantly fewer denoising iterations. Another increasingly popular alternative is \emph{distillation}~\cite{salimans2022progressive}, where a student model learns to emulate the behavior of a teacher model. Distillation approaches vary in supervision and setup: in online distillation, the student directly learns from the teacher’s predictions during training, with the teacher providing trajectory information across time steps. In contrast, \emph{offline distillation} relies on precomputed noise--image pairs or teacher-generated samples only, where the student learns without any further access to the teacher model, including its weights, evaluations, or internal states.

Offline methods inherently offer key advantages over online tools. Training on precomputed, teacher-generated noise--image pairs avoids multi-stage training and repeated teacher queries, reducing both neural function evaluations and memory footprint~\cite{geng2023one}. It also enables pre-filtering or anonymization of potentially privacy violations~\cite{fernandez2023privacy}. Consequently, once the dataset is produced, the teacher can be discarded, shrinking the attack surface and supporting secure collaboration (e.g., outsourcing training without exposing proprietary information)~\citep{carlini2023extracting,hu2023membership,nasr2023scalable}. Beyond privacy and efficiency, offline distillation is model-agnostic and architecture-general (as evidenced in our experiments), and it scales flexibly with parameter budgets. On the limitation side, offline setups require upfront generation and storage costs for the precomputed corpus and, lacking teacher access beyond the noise–image pairs, may constrain model expressivity and adaptability. By contrast, online methods may warm-start from teacher weights or query the teacher for stepwise supervision.  Despite these trade-offs, offline distillation remains a practical, private, and sustainable approach to generative-model distillation. The two paradigms can be considered complementary, and the optimal choice depends on factors such as privacy requirements, computational cost, and deployment constraints.

Towards developing a distillation method, we identify two key observations that guide our approach: (1) While denoising-based methods leverage dynamical systems perspectives for both general generative modeling~\cite{song2021score, lipman2023flow} and distillation~\cite{zhang2023fast, chen2023geometric}, a wide range of powerful tools from dynamical systems theory remain underexplored; (2) Trained diffusion models impose a structured organization on the latent space, coherently mapping noisy inputs toward clean data. Specifically, our empirical analysis suggests that semantically related images tend to originate from nearby points in the noise space.

%Later in this paper, we explicitly analyze, visualize, and exploit this observed structure. 
 
% based on the observations, we develop theory and tools associated with Koopman theory for an efficient and effective distilled model (KDM)
Building on these observations, we propose leveraging a dynamical systems approach that has not yet been explored in the context of distillation: Koopman operator theory. This theory is a classical yet recently revitalized approach for representing nonlinear systems with linear dynamics in an embedded space~\cite{koopman1931hamiltonian, rowley2009spectral}. The core idea is that by mapping the system into a suitable embedding space, its evolution becomes linear. Thus, learning the diffusion process then reduces to jointly learning these embeddings and the linear operator. While Koopman operators are typically infinite-dimensional, we prove that an exact finite-dimensional representation exists under mild assumptions~\cite{iacob2023finite}. Moreover, prior work suggests that the Koopman operator preserves the structure inherent to certain dynamical systems~\cite{goyal2025deep, boulle2024multiplicative}. This motivates the use of Koopman theory to harness the observed coherent semantic structure in diffusion dynamics, offering a principled foundation for accurate teacher model distillation. To this end, we formally connect the observed structure in diffusion models to the Koopman framework by proving that this structure is preserved within the learned dynamical representation.

%In this work, we formally connect this observation to the Koopman framework by providing a theoretical proof that, under mild conditions, the underlying dynamical structure is preserved.

%Overall, our method yields a compact, linear representation of diffusion dynamics that preserves semantic structure and enables principled distillation by aligning student trajectories with semantically meaningful teacher behavior.

% This establishes a theoretical link between proximity in the embedded space and semantic similarity in the data space—demonstrating that coherence in the learned Koopman representation translates into coherence in the final generated outputs. 

\begin{figure}[t]
    \centering
    \vspace{30pt}
    \begin{overpic}[width=\textwidth]{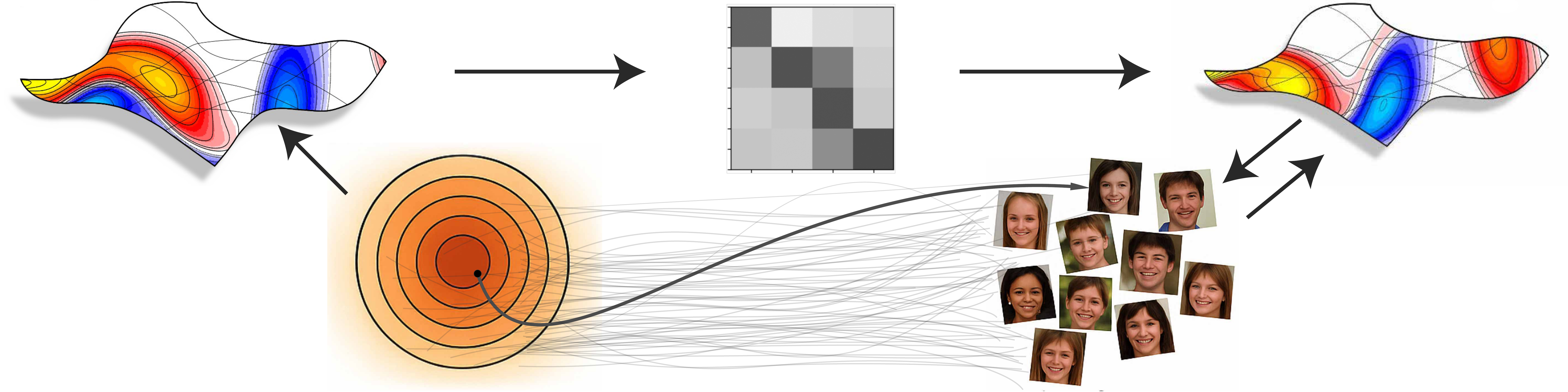}
        \put(40, 9) {$\Phi_T$} \put(42, 23) {$C_\mu$}
        \put(17, 12) {$E_\varphi$} \put(82, 10) {$E_\phi$} \put(77, 16) {$D_\psi$}
    \end{overpic}
    \vspace{-3mm}
    \caption{A Gaussian sample (bottom left) evolves into a clean image (bottom right) via nonlinear dynamics \( \Phi_T \). Leveraging the Koopman framework, learned encoders \( E_\varphi \) and \( E_\phi \) transform noise and image into an embedding space where the evolution becomes linear under \( C_\mu \) (top).}
    \label{fig:kdm_illus}
    \vspace{-5mm}
\end{figure}

% practical details on the approach and its benefits; our evaluation
In practice, we propose \methodname~(\emph{Koopman Distillation Model}), a framework for modeling diffusion processes through linear evolution in a learned embedding space. Given a noisy input, an encoder maps it into an embedding space where a learned Koopman operator evolves it forward, and a decoder reconstructs the clean sample. A schematic illustration of our framework is presented in Fig.~\ref{fig:kdm_illus}. The training objective encourages semantic preservation (via reconstruction loss), accurate latent linear dynamics (via Koopman consistency), and fidelity in output space (via prediction and adversarial losses). This design naturally yields fast, single-step generation, while preserving structural coherence distillation. We evaluate our method on the offline distillation benchmark, while adding two higher-resolution datasets, and achieve highly competitive results in both unconditional and conditional generation. These results demonstrate the effectiveness of our Koopman-based formulation in bridging the performance gap with online methods, offering a fast and high-fidelity offline diffusion model distillation. \textbf{Our key contributions are:}
\vspace{-2mm}
\begin{enumerate}[leftmargin=*, labelindent=5pt]
    \item \emph{Observing diffusion dynamics}. We identify,  analyze and formalize two key observations: (i) tools from dynamical systems theory, specifically Koopman operator theory, remain underexplored in diffusion modeling and distillation; (ii) training induces coherent semantic organization in noise space. These insights motivate a structure-promoting dynamics-aware approach to distillation. 
    
    \item \emph{A principled and practical Koopman-based distillation framework}. We develop a theoretical foundation proving the existence of finite-dimensional Koopman representations and semantic structure preservation under mild assumptions and introduce a simple, scalable encoder–linear-dynamics–decoder architecture that preserves the teacher generation dynamics semantic structure.

    \item \emph{Highly competitive results}. We conduct a comprehensive unconditional and conditional evaluation showing that our method outperforms prior offline distillation approaches, achieving FID improvement, while enabling efficient, single-step generation.
\end{enumerate}

%%%%%%%%%%%%%%%%%%%%%%%%%%%%%
\vspace{-3mm}
\section{Related Work}
\label{sec:related}

% short paragraph on generative modeling and diffusion models
\paragraph{Generative Modeling.} Deep generative models have made significant progress in recent years, with diffusion models emerging as a leading framework for high-quality image, audio, and molecular generation. Building on principles from nonequilibrium thermodynamics and score matching~\cite{sohl2015deep,song2021score}, these models define a forward process that gradually corrupts data with noise and learn to reverse it via a neural network-based denoising process. Variants like DDPMs~\cite{ho2020denoising} and score-based generative models~\cite{song2021score} have shown competitive performance across a wide range of benchmarks, often surpassing VAEs and GANs~\cite{kingma2014auto, goodfellow2014generative} in sample fidelity and diversity.

\vspace{-2mm}
% longer paragraph on speeding sampling via better integration and distillation
\paragraph{Accelerating Diffusion Sampling.} Diffusion models generate high-quality samples but are slow due to iterative sampling. One approach improves integration using higher-order solvers and expressive noise schedules~\cite{song2020denoising,lu2022dpm}, achieving high-quality results in significantly fewer steps. Another class of approaches seeks to distill a student model from a pre-trained teacher, reducing the sampling process to a single or few steps. Online distillation techniques~\cite{luhman2021knowledge, salimans2022progressive} supervise the student directly with teacher predictions during training, but require continuous access to the full teacher model, resulting in high memory and compute overhead. Recent methods demonstrate high-quality one-step generation across CIFAR-10, ImageNet, and text-to-image benchmarks~\cite{yin2024one,zhou2024score}, using objectives based on distribution matching~\cite{yin2024one}, score identity~\cite{zhou2024score}, adversarial training~\cite{sauer2024fast, sauer2024adversarial}, and consistency models~\cite{song2023consistency, kim2024consistency} among others~\cite{chen2023geometric, luo2023diff, nguyen2024swiftbrush, zhou2024simple, berman2025said}. Offline distillation methods~\cite{geng2023one} rely on teacher-generated noise-data pairs and decouple student training from the teacher. Aside from the concurrent work~\cite{turan2025unfolding}, current distillation methods largely unexplored Koopman-based perspectives and their relation to the teacher’s underlying dynamics. 

\vspace{-2mm}
% final paragraph on dynamical systems via Koopman
\paragraph{Koopman-Based Modeling.}
Koopman operator theory provides a powerful framework for modeling nonlinear dynamical systems via linear operators acting on observable functions~\cite{koopman1931hamiltonian,rowley2009spectral}. This view has been applied to tasks such as deep learning of dynamical systems~\cite{takeishi2017learning, lusch2018deep, yeung2019learning, otto2019linearly, naiman2023operator, naiman2024generative}, sequential disentanglement~\cite{berman2023multifactor, barami2025disentanglement}, and control~\cite{han2022desko}. A related strand of work focuses on identifying coherent sets---regions of the state space that evolve coherently over time---using Koopman-based embeddings~\cite{mezic2005spectral, froyland2010coherent}. While these works exploit the Koopman structure for interpretability or stability, our work focuses on using Koopman-based modeling for distilling diffusion models. In this context, we demonstrate that the learned embedding space in diffusion models encodes semantically meaningful structure, and we leverage this to design a principled offline distillation framework. %\nimrod{We only discussed about practical comparison, I think we should also relate to theoretical works and their relation to us.}

\vspace{-2mm}
\section{Mathematical Background}
\label{sec:background}

% In this section we provide an overview of the mathematical concepts used in this paper, specifically on determinisic diffusion sampling, and the Koopman operator theory. 

% a short paragraph on deterministic diffusion with EDM; discuss only sampling
\paragraph{Deterministic Diffusion Sampling.} We consider deterministic sampling in diffusion models, as introduced in the EDM framework~\cite{karras2022elucidating}. A pre-trained diffusion model defines a time-indexed denoising vector field $f_\theta: \mathbb{R}^n \times [0, T] \rightarrow \mathbb{R}^n$, which maps a noisy latent state $x_t \in \mathbb{R}^n$ at time $t \in [0, T]$ toward the data manifold (at time $0$). Sampling is performed by solving the reverse-time ordinary differential equation (ODE),
\vspace{-1mm}
\begin{equation} \label{eq:classical_dynamics}
    \frac{\dd x_t}{\dd t} = f_\theta(x_t, t) \ , \quad x_T \sim \mathcal{N}(0, \sigma_T^2 \mathbf{I}) \ ,
\end{equation}
integrated backward from a Gaussian sample $x_T$ at time $T$ to a clean sample $x_0$ at time $t = 0$. The function $f_\theta$ is typically trained to approximate the score function $\nabla_{x_t} \log p_t(x_t)$ or a reparameterization thereof, such as the probability flow~\cite{song2021score}. These deterministic formulations enable sampling via high-order ODE solvers and provide a tractable setting for distillation and trajectory analysis~\cite{karras2022elucidating}.

\vspace{-2mm}
% a short paragraph on Koopman; showing classic dynamical systems and Koopman-based
\paragraph{Koopman Operator Theory.} Let $\Phi_t: \mathbb{R}^n \rightarrow \mathbb{R}^n$ define a (possibly time-dependent) dynamical system mapping states forward in time, i.e., $x_t = \Phi_{t}(x_0)$. The Koopman operator $\mathcal{K}_t$~\cite{koopman1931hamiltonian} is a linear operator that acts on scalar-valued \emph{observables} $g: \mathbb{R}^n \rightarrow \mathbb{C}$ via composition,
\begin{equation} \label{eq:koopman_dynamics}
    \mathcal{K}_t g := g \circ \Phi_t \ ,
\end{equation}
mapping each observable to its pullback along the dynamics. While the original system $\Phi_t$ may be nonlinear, $\mathcal{K}_t$ is linear on the space of observables $L^2(\mathbb{C}^n)$ (with respect to a suitable measure)~\cite{eisner2015operator}. In classical Koopman theory, this operator is infinite-dimensional. However, if there exists a finite-dimensional subspace $\mathcal{O} = \text{span}\{g_1, \dots, g_d\} \subset L^2(\mathbb{C}^n)$ that is invariant under $\mathcal{K}_t$ for all $t \in [0, T]$, then the dynamics of $\Phi_t$ can be exactly represented in this lifted space via a family of linear operators $C_t \in \mathbb{R}^{d \times d}$, $t\in [0, T]$, such that for all $g \in \mathcal{O}, \, \mathcal{K}_t g = C_t g$. This allows modeling nonlinear dynamics through linear evolution in a learned observable space, a principle that underpins our Koopman-based distillation framework.

\vspace{-2mm}
\section{Koopman Distillation of Diffusion Models}
\label{sec:method}

We propose \textit{\methodname}, a principled offline distillation framework for diffusion models based on Koopman operator theory. The key idea is to model the dynamics of the generative process, specifically the transformation from a noisy sample $x_T$ to its clean counterpart $x_0$, as a single-step \emph{linear} evolution in a learned embedding space. This approach enables fast and structure-preserving generation in a single forward pass, without requiring access to the full diffusion trajectory or teacher model at training time.

% establish the two main observations

\vspace{-2mm}
\paragraph{Dynamical System Perspective on Diffusion Models.} We build upon two key observations regarding diffusion models. \emph{First}, diffusion can be understood as a dynamical system, where the evolution of a noisy data point $x_t$ is governed by a time-dependent velocity field, see Eq.~\eqref{eq:classical_dynamics}. This continuous-time formulation admits an integrated flow map $\Phi_t$ such that
$x_t = \Phi_t(x_0)$, where $\Phi_t$ evolves the initial condition $x_0$ forward in time. This formulation makes it natural to propose adopting a Koopman operator perspective, a novel approach in the context of distillation, which enables the lifting of nonlinear dynamics into a linear evolution in function space under suitable choices of observables, see Eq.~\eqref{eq:koopman_dynamics}.

The \emph{second} observation considers the structure of the trained diffusion dynamics. Score‑based diffusion models learn the score $\nabla_{x_t} \log p_t(x_t)$ of noise‑corrupted data at multiple noise levels and then reverse that specific corruption process using Langevin dynamics \cite{song2021score} or learned reverse kernels \cite{ho2020denoising} to transform Gaussian noise into images. This framework has been unified through the lens of stochastic differential equations (SDEs) by~\cite{song2021score}. Flow matching models~\cite{lipman2023flow} learn a continuous velocity field that transports a simple reference distribution directly into the data distribution, where sampling is typically performed by integrating the learned ordinary differential equation (ODE). In what follows, we study the structure of the dynamical mapping arising from diffusion-based learning.

We pose the following question: \emph{Is there semantic structure in the generative process of diffusion models?}  In general, prior to training, there is no assurance of semantic structure between the noise and data distributions (see Fig.~\ref{fig:structured_dynamics}, left). However, after training, we observe the emergence of a coherent and semantically meaningful mapping from the Gaussian noise distribution to the data distribution (see Fig.~\ref{fig:structured_dynamics}, right). Specifically, by utilizing a simple 2D checkerboard toy dataset, we train an EDM~\cite{karras2022elucidating} model and generate 50,000 samples, recording both the initial noise vectors and their corresponding generated outputs on the checkerboard. We then color-code each sample by its target checker cell and assign the same color to its originating noise vector. This visualization reveals that samples originating from the same semantic region in data space tend to cluster together in noise space, indicating structured and coherent dynamics. Notably, we observe a similar pattern in both flow matching model and our \methodname. This emergence of coherent structure, together with prior evidence that the Koopman operator preserves structure in certain dynamical systems~\cite{goyal2025deep, boulle2024multiplicative}, highlights the relevance of Koopman theory to diffusion dynamics. Indeed, our theoretical analysis (Sec.~\ref{sec:thoery}) establishes a formal link between this observed structure and Koopman theory, motivating its adoption as a principled framework for modeling these dynamics.

% clustering numbers on CIFAR10

\begin{figure}[t]
    \centering
    \vspace{30pt}
    \begin{overpic}[width=\textwidth]{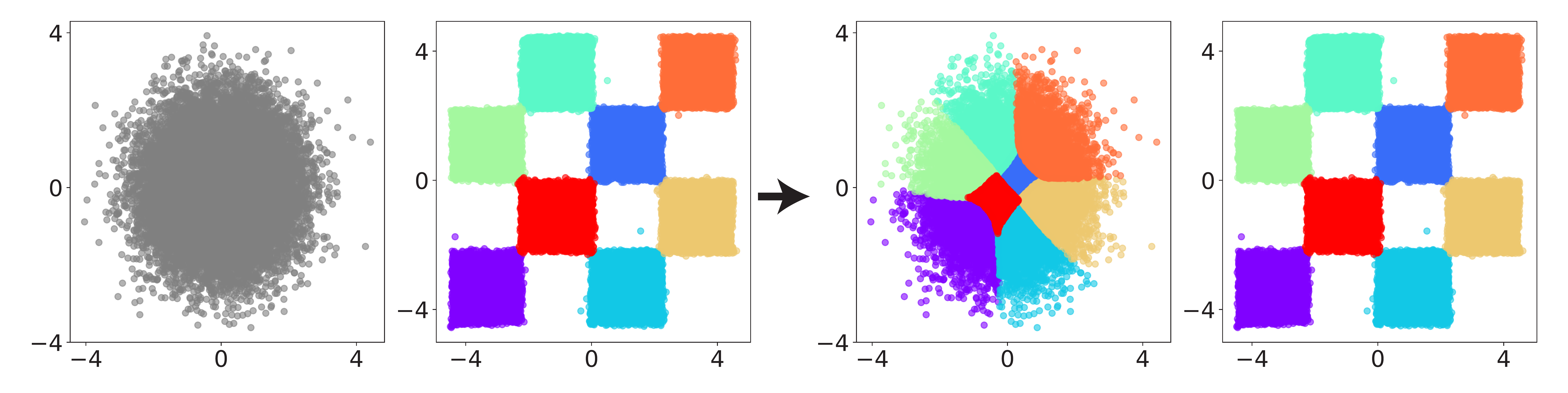}
        \put(5, 22){\scriptsize $t=T$} \put(28.5, 22){\scriptsize $t=0$}
        \put(55, 22){\scriptsize $t=T$} \put(78.5, 22){\scriptsize $t=0$}
        \put(0, 8){\rotatebox{90}{untrained}}
        \put(99, 17){\rotatebox{-90}{trained}}
    \end{overpic}
    \caption{Untrained noise distributions are semantically unstructured (left). Training EDM on the data reveals the emergence of coherent clusters in noise space, i.e., at time $T$ (right). }
    \label{fig:structured_dynamics}
    \vspace{-3mm}
\end{figure}

\vspace{-2mm} \paragraph{Koopman-based Distillation.}
Our Koopman distillation framework is designed by the following principle: given pairs of noisy and clean data $(x_T, x_0)$, where $x_0 = \Phi_T^{-1}(x_T)$ for some unknown nonlinear map $\Phi_t^{-1}$, we seek to model this transformation using a finite-dimensional linear operator in an embedding space, following Koopman theory. To achieve this, we introduce observable functions $\{g_1, g_2, ..., g_d \}$, parameterized as \emph{encoder networks} $E_\phi, E_\varphi$ for $x_0, x_T$, respectively, that map states from the original data space into a latent observable space. In this space, we assume the dynamics can be modeled linearly by a finite-dimensional Koopman operator linear layer, $C_\eta$. Finally, a corresponding \emph{decoder network} $D_\psi$ maps the evolved observables back to the data space. The overall modeling objective becomes:
\vspace{-1mm}
\begin{equation}
    x_0 \approx D_\psi(C_\eta \, E_\varphi(x_T)) \ .
\end{equation}
This formulation distills the potentially complex, nonlinear generative process into a simple one-step linear map in the embedded observables space, while maintaining fidelity to the original dynamics.

\vspace{-2mm}
\paragraph{Koopman's Loss Functions.} In what follows, $d(\cdot, \cdot)$ denotes a suitable distance function, such as mean squared error (MSE) or a perceptual loss (e.g., LPIPS~\cite{zhang2018unreasonable}). To effectively learn the components \( E_\phi \), \( D_\psi \), and \( C_\eta \), we define a composite loss comprising three terms (see similar objectives in~\cite{lusch2018deep, azencot2020forecasting}):
\begin{equation}
    \mathcal{L}_{\text{Koopman}} = \mathcal{L}_{\text{rec}} + \mathcal{L}_{\text{lat}} + \mathcal{L}_{\text{pred}} \ .
\end{equation}
The first term, the \emph{reconstruction loss} \( \mathcal{L}_{\text{rec}} \), ensures that the encoder-decoder pair \( (E_\phi, D_\psi) \) can faithfully reconstruct \( x_0 \):
\begin{equation}
    \mathcal{L}_{\text{rec}} = d\big(x_0, D_\psi(E_\phi(x_0)\big) \ .
\end{equation}
This term regularizes the embedding by encouraging \( E_\phi \) to preserve sufficient semantic information about the data, while not encoding any dynamic features. The second term, the \emph{latent dynamics loss} \( \mathcal{L}_{\text{lat}} \), enforces consistency of the linear evolution in the observable space:
\begin{equation}
    \mathcal{L}_{\text{lat}} = d\big(E_\phi(x_0), C_\eta E_\varphi(x_T)\big) \ ,
\end{equation}
where \( C_\eta \) is the Koopman operator in latent space. This term ensures that applying \( C_\eta \) to the embedded noisy state \( E_\varphi(x_T) \) approximates the embedded clean state \( E_\phi(x_0) \). $C_\eta$ is either implemented with a regular learning linear layer or by a new approach we develop that factorizes the matrix, which enables eigenspectrum control (\methodnamef). We describe the decomposition and its merits in App.~\ref{sec:decomposed_koopman_matrix}. 
 
Finally, the \emph{prediction loss} \( \mathcal{L}_{\text{pred}} \) evaluates the full end-to-end prediction after encoding, applying the Koopman operator, and decoding:
\vspace{-1mm}
\begin{equation}
    \mathcal{L}_{\text{pred}} = d\big(x_0, D_\psi(C_\eta E_\varphi(x_T))\big) \ ,
\end{equation}
encouraging the system to accurately predict \( x_0 \) from \( x_T \) in a single forward pass through the Koopman pipeline. Together, these terms provide complementary supervision signals.

\vspace{-2mm} \paragraph{Adversarial Loss.} Adversarial frameworks are widely employed in distillation methods \cite{sauer2023adversarial, kim2024consistency, ladd2024latent, sdxllightning2024progressive}. For example, \cite{kim2024consistency} leverages an adversarial loss to enhance the fidelity of one-step generation. Motivated by these approaches, we introduce an auxiliary adversarial loss to further refine the distillation process. Specifically, we introduce a discriminator network \( D_\gamma \), parameterized by \( \gamma \), and define the adversarial loss as:
\begin{equation}
    \mathcal{L}_{\text{adv}}(\phi, \psi, \gamma) = \mathbb{E}_{x_0 \sim p_{\text{data}}}[\log D_\gamma(x_0)] + \mathbb{E}_{x_T \sim p_{\text{data}}}[\log(1 - D_\gamma(\hat{x}_0))] \ ,
\end{equation}
where \( \hat{x}_0 = D_\psi(C_\eta E_\varphi(x_T)) \) is the Koopman-predicted sample. Thus, the final training objective is:
\begin{equation}
    \mathcal{L} = \mathcal{L}_{\text{Koopman}} + \lambda_{\text{adv}} \mathcal{L}_{\text{adv}} \ ,
\end{equation}
where \( \lambda_{\text{adv}} \) balances the adversarial component. In practice, throughout our experiments, we choose  $\lambda_{\text{adv}} = 0.01$. Importantly, we utilized a simple $D_\gamma$ with four convolutional layers. By enhancing the perceptual quality of the predictions without altering the fundamental Koopman modeling, this adversarial augmentation strengthens the overall generation performance while preserving the structured distillation at the heart of our method~\cite{blau2018perception}. We present losses and more ablation studies in App.~\ref{sec:ablation_studies}. Additionally, A pseudo-code of our \methodname~is provided in App.~\ref{sec:psudo-code}.

\vspace{-2mm} \paragraph{Control Modeling for Conditional Dynamics.} A crucial aspect of generative modeling with denoising models is the ability to condition generation on auxiliary signals or labels~\cite{dhariwal2021diffusion}. Koopman theory provides a natural framework to incorporate such conditioning through control signals that influence the Koopman operator~\cite{azencot2013operator, brunton2022modern}. Formally, let $c$ denote the control signal or conditioning variable (e.g., a class label) that we want the generative process to be conditioned on. Our goal is to model the conditional dynamics $\Phi(x_T, c)$. In the Koopman framework, we extend the observable and evolution operators to incorporate $c$, and formulate the generation as:
\[
D_\psi\big(C_\eta E_\phi(x_T, c) + C_{\mu}(c), \, c\big) \approx \Phi(x_T, c),
\]
where $C_{\mu}(c)$ is a control-dependent modulation term (e.g., a linear transformation of $c$). This formulation enables our model to adapt its generative trajectory based on the conditioning signal, seamlessly integrating the conditioning mechanism into the Koopman dynamics. We apply this control modeling in our conditional generation experiments.

\vspace{-2mm}
\section{Theoretical Properties of \methodname}
\label{sec:thoery}
\vspace{-2mm}

We now state the \textbf{first} key theoretical result, establishing the existence of an exact finite-dimensional Koopman representation under mild conditions. Specifically, for an analytic dynamical system, it is possible to approximate its nonlinear evolution to arbitrary accuracy by a finite-dimensional linear operator acting on lifted coordinates. In particular, a small number of polynomial observables can capture the system's dynamics with controllable approximation error. This result is significant because it means we can faithfully replicate the complex behavior of the teacher model using a simpler, linear system in a transformed space. In practical terms, it ensures our student can learn to follow the teacher’s dynamics with high accuracy.

\begin{theorem}[Finite Koopman Operator for Analytic Dynamics]
\label{thm:finite_koopman}
Let \( \Phi : \mathbb{R}^n \to \mathbb{R}^n \) be an analytic map, and let \( x_T \sim \mathcal{N}(0, I_n) \). Then, for any \( \epsilon > 0 \), there exists a linear operator \( C \in \mathbb{R}^{d \times d} \) such that
\vspace{-1mm}
\begin{equation}
    \mathbb{E}\left[\| \xi(\Phi(x_T)) - C \xi(x_T) \|^2\right] \leq \epsilon \ ,
\end{equation}
where $\xi : \mathbb{R}^n \rightarrow \mathbb{R}^d$ denotes the observable lifting map. Moreover, the required dimension \( d \) grows at most polynomially with \( 1/\epsilon \), and the mapping \( \Phi \) can be approximated arbitrarily well using multivariate polynomials up to degree \( d \).
\end{theorem}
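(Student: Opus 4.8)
The plan is to realize the lifting map $\xi$ as the vector of monomials (equivalently, multivariate Hermite polynomials, which differ only by a fixed invertible change of basis that is absorbed into $C$) of total degree at most $k$ in the $n$ coordinates. Then the lifted dimension is $d = \binom{n+k}{k}$, and the linear span of the components of $\xi$ is exactly $\mathcal{P}_k$, the space of polynomials of degree $\le k$. With this choice, producing a $C$ for which $C\xi(x)$ equals a prescribed vector of degree-$\le k$ polynomials is automatic (one reads off the coefficients), and $C$ is square because both $\xi(x)$ and each target component live in $\mathcal{P}_k$. The problem therefore reduces to a pure approximation question: for each component $\xi_i\circ\Phi$, find the best degree-$\le k$ polynomial approximant in the Gaussian-weighted space $L^2(\gamma)$ with $\gamma=\mathcal{N}(0,I_n)$, and collect these coefficients into the rows of $C$.

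The second step casts this as orthogonal projection. The Hermite basis $\{H_\alpha\}$ is orthonormal and complete in $L^2(\gamma)$. Provided $\xi_i\circ\Phi\in L^2(\gamma)$ — this is where a mild growth hypothesis on $\Phi$ enters, e.g.\ polynomial growth, which makes all relevant moments finite — the projection $\Pi_k(\xi_i\circ\Phi)$ onto $\mathcal{P}_k=\mathrm{span}\{H_\alpha:|\alpha|\le k\}$ is the optimal degree-$\le k$ approximant, and by Parseval its squared error equals the tail mass $\sum_{|\alpha|>k}|\langle \xi_i\circ\Phi, H_\alpha\rangle|^2$, which tends to $0$ by completeness. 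Summing over the $d$ components gives $\mathbb{E}\,\|\xi(\Phi(x_T))-C\xi(x_T)\|^2=\sum_i\|\xi_i\circ\Phi-\Pi_k(\xi_i\circ\Phi)\|_{L^2(\gamma)}^2\to 0$ as $k\to\infty$, which yields a $C$ achieving error $\le\epsilon$.

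For the quantitative claim, I would invoke analyticity: composing a monomial with an analytic $\Phi$ produces an analytic observable, and analytic functions of controlled growth have Hermite coefficients decaying geometrically, $|\langle h,H_\alpha\rangle|\lesssim \rho^{|\alpha|}$ for some $\rho<1$. Hence the projection error decays exponentially, $\|h-\Pi_k h\|^2\lesssim \rho^{2k}$, so degree $k=O(\log(1/\epsilon))$ suffices and $d=\binom{n+k}{k}$, being polynomial in $k$, is polylogarithmic and in particular polynomial in $1/\epsilon$; the companion statement that $\Phi$ is approximated by degree-$d$ polynomials is precisely the truncation of the Taylor/Hermite expansion of $\Phi$ itself. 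A self-contained alternative replaces the Hermite decay estimate by a two-scale bound: approximate $\Phi$ uniformly by its degree-$k$ Taylor polynomial on a ball $B_R$ (analyticity gives uniform convergence on compacts) and control the complementary region by Gaussian concentration with $R=\Theta(\sqrt{\log(1/\epsilon)})$.

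The hard part will be reconciling the \emph{local} nature of analyticity with the \emph{global} Gaussian expectation. Analyticity governs $\Phi$ on compact sets, but the expectation integrates over all of $\mathbb{R}^n$, where an analytic map can grow faster than the Gaussian decays (for instance $e^{|x|^2}$ is analytic but not in $L^2(\gamma)$). The crux is therefore to state and exploit the mild growth assumption so that $\xi\circ\Phi\in L^2(\gamma)$ and the tails of the approximation error remain summable, and to verify that the decay rate $\rho$ (or the radius $R$) can be chosen \emph{uniformly} across all $d$ monomial observables $\xi_i$, so that the componentwise bounds assemble into the stated total bound with the claimed polynomial dependence of $d$ on $1/\epsilon$.
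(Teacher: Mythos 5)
Your route is genuinely different from the paper's, and in one respect sounder. The paper proceeds in three steps: (i) approximate $\Phi$ uniformly on a ball $B_R$ by a Taylor polynomial $p$; (ii) invoke Iacob et al.\ to get an \emph{exact} finite-dimensional lift, $\xi(p(x)) = C\xi(x)$ for all $x$; (iii) bound the error on $B_R$ via a Lipschitz constant of $\xi$ and handle the complement $B_R^c$ by Gaussian tail mass, choosing $\eta$ then $\delta$. Your main argument instead defines $C$ row-by-row as the $L^2(\gamma)$ orthogonal projection of each $\xi_i\circ\Phi$ onto $\mathcal{P}_k$ in the Hermite basis, so the error is exactly the Parseval tail and optimality is automatic. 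This buys you two things: you never need the exact-invariance claim $\xi\circ p = C\xi$ (which is delicate, since composing degree-$\le k$ monomials with a polynomial $p$ of degree $\ge 2$ leaves $\mathcal{P}_k$, so the cited exactness holds only for special classes of polynomial systems, not an arbitrary Taylor approximant), and your $C$ is the $L^2(\gamma)$-optimal operator rather than one inherited from a uniform approximation. Note also that your ``self-contained alternative'' (Taylor on $B_R$ plus Gaussian concentration with $R = \Theta(\sqrt{\log(1/\epsilon)})$) is essentially the paper's proof, so you have both arguments in hand. Your diagnosis of the local-versus-global tension is correct and, in fact, applies to the paper's own proof: its bound $\|\xi(\Phi(x_T)) - C\xi(x_T)\| \le 2M$ on $B_R^c$ is asserted from ``polynomial growth of $\xi$ and Gaussian decay,'' but $\xi(\Phi(x))$ involves $\Phi$ itself, and for an analytic map like $\Phi(x) = (e^{x_1^2}, 0, \dots, 0)$ even $\mathbb{E}\|\xi(\Phi(x_T))\|^2$ is infinite; both proofs tacitly need your polynomial-growth hypothesis (harmless for the intended application, since the UNet realizing $\Phi$ is globally Lipschitz).

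One technical correction to your quantitative step: plain analyticity, even with moderate growth, does not give geometric Hermite decay $|\langle h, H_\alpha\rangle| \lesssim \rho^{|\alpha|}$. By Hille-type theorems, analyticity in a strip yields decay of order $e^{-c\sqrt{|\alpha|}}$; the geometric rate requires entire extensions of restricted order and type. This only changes your degree bound from $k = O(\log(1/\epsilon))$ to $k = O(\log^2(1/\epsilon))$, so $d = \binom{n+k}{k} = O(k^n)$ remains polylogarithmic --- comfortably within the theorem's ``at most polynomial in $1/\epsilon$'' claim --- but the lemma you invoke should be stated with the correct hypothesis, or you should fall back on your two-scale truncation argument, which needs nothing beyond compact-set Taylor convergence and Gaussian concentration.
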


\vspace{-1mm}
The proof builds on the analytic formulation of \( \Phi \), allowing a convergent multivariate Taylor expansion~\cite{trefethen2019approximation}, and on exact finite-dimensional Koopman operators for polynomial dynamics~\cite{iacob2023finite},  see App.~\ref{appendix:proof_finite_koopman}. 

Our \textbf{second} theoretical result concerns the structure induced by the reverse diffusion dynamics, under a set of assumptions established by Thm.~\ref{thm:finite_koopman}. Under these conditions, the theorem states that proximity in the Koopman coordinate representation at the initial noise time implies semantic similarity between the corresponding samples at data time. In practice, the theorem says that if two inputs start out close together in the transformed space, the images they generate will be semantically similar. This provides a theoretical foundation for the structure-preserving behavior of our model, as observed empirically in Sec.~\ref{sec:latent_space_structure}.

\begin{theorem}[Semantic Proximity via Koopman-Invariant Coordinates]
\label{thm:semantic_proximity_main}
Let $\Phi_t: \mathbb{R}^n \to \mathbb{R}^n$ be the reverse diffusion flow of a trained model (from time $T$ to $0$), and let $\mathcal{O} \subset L^2(\mathbb{R}^n)$ be a finite-dimensional subspace such that:
\begin{enumerate}[itemsep=0pt, topsep=0pt]
    \item $\mathcal{K}_t \mathcal{O} \subset \mathcal{O}$ for all $t \in [0, T]$,
    \item $\exists\, C_t: \mathbb{R}^d \to \mathbb{R}^d$ such that $\mathcal{K}_t \xi = \xi \circ \Phi_t = C_t \xi$ for all $\xi \in \mathcal{O}$,
    \item $\Phi_T$ is locally Lipschitz.
\end{enumerate}

For any $x_T^1, x_T^2 \in \mathbb{R}^n$, define $x_0^j := \Phi_T(x_T^j)$ for $j = 1, 2$, and let $\xi: \mathbb{R}^n \to \mathbb{R}^d$ be the vector-valued observable $\xi(x) := (\xi_1(x), \dots, \xi_d(x)) \in \mathcal{O}$. Then:
\[
\|x_0^1 - x_0^2\| \leq L \cdot \|\Phi_T(x_T^1) - \Phi_T(x_T^2)\| \leq L \cdot \|C_T\| \cdot \|\xi(x_T^1) - \xi(x_T^2)\| .
\]
\end{theorem}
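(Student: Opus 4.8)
The plan is to reduce the entire statement to the single linearization identity furnished by hypothesis~2, namely $\xi \circ \Phi_t = C_t \xi$, and then to track how distances transform under this identity at the terminal time $t = T$. Since the vector-valued observable $\xi = (\xi_1,\dots,\xi_d)$ has all its components in the invariant subspace $\mathcal{O}$, hypothesis~1 guarantees that $\mathcal{K}_T \xi = \xi \circ \Phi_T$ again lies in $\mathcal{O}$, and hypothesis~2 identifies this pullback with the finite-dimensional matrix action $C_T \xi$. Evaluating at the two initial noise points gives the exact relations $\xi(x_0^j) = \xi(\Phi_T(x_T^j)) = C_T\,\xi(x_T^j)$ for $j = 1,2$, which is the crux of the argument and the only place the Koopman structure is genuinely used.

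First I would subtract these two identities to obtain $\xi(x_0^1) - \xi(x_0^2) = C_T\big(\xi(x_T^1) - \xi(x_T^2)\big)$, a purely linear relation in the lifted coordinates. Taking Euclidean norms and applying submultiplicativity of the operator norm yields
\[
\|\xi(x_0^1) - \xi(x_0^2)\| \;\le\; \|C_T\|\cdot\|\xi(x_T^1) - \xi(x_T^2)\| ,
\]
which makes precise the sense in which proximity in Koopman coordinates at time $T$ propagates to proximity of the lifted clean samples. Note that this step requires no regularity beyond linearity; it is the operator norm, not any Lipschitz property, that controls the propagation.

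The remaining — and most delicate — step is to pass from the observable difference $\|\xi(x_0^1) - \xi(x_0^2)\|$ back to the state difference $\|x_0^1 - x_0^2\|$, i.e.\ to invert the lift. Here I would invoke hypothesis~3: local Lipschitzness of $\Phi_T$ together with the regularity of the embedding ensures that, on the bounded region containing $x_0^1$ and $x_0^2$, the observable map admits a Lipschitz-controlled local inverse, giving $\|x_0^1 - x_0^2\| \le L\,\|\xi(x_0^1) - \xi(x_0^2)\|$ for a finite constant $L$ on that neighborhood. The cleanest way to make this unconditional is to adopt the standard Koopman convention in which the full state is itself among the observables, i.e.\ $\xi_i(x) = x_i$ for $i \le n$; then $\|x_0^1 - x_0^2\| \le \|\xi(x_0^1) - \xi(x_0^2)\|$ follows by coordinate inclusion. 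Chaining this with the displayed operator-norm bound, and recording that $\Phi_T(x_T^j) = x_0^j$ so that $\|\Phi_T(x_T^1) - \Phi_T(x_T^2)\| = \|x_0^1 - x_0^2\|$ recovers the middle term of the asserted chain, produces $\|x_0^1 - x_0^2\| \le L\,\|\Phi_T(x_T^1) - \Phi_T(x_T^2)\| \le L\,\|C_T\|\,\|\xi(x_T^1) - \xi(x_T^2)\|$.

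I expect the invertibility-of-the-lift step to be the main obstacle, since the Koopman identity only controls observables: without either a full-state embedding or a quantitative local inverse for $\xi$, the bound on the genuine state distance $\|x_0^1 - x_0^2\|$ does not follow from the observable-space estimate alone. The local Lipschitz hypothesis is exactly what licenses a finite $L$ on the relevant compact neighborhood, and I would make the dependence of $L$ on that neighborhood explicit — treating the estimate as local rather than global — to keep the statement fully rigorous.
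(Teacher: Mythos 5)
Your proof is correct in substance, but it takes a genuinely different route from the paper's, and it is worth noting that yours is the one that actually produces the constant appearing in the theorem. You use the Koopman identity on the observable side---subtracting $\xi(x_0^j) = C_T\,\xi(x_T^j)$ to get $\xi(x_0^1)-\xi(x_0^2) = C_T\bigl(\xi(x_T^1)-\xi(x_T^2)\bigr)$, with submultiplicativity supplying the $\|C_T\|$ factor---and then close by inverting the lift at time $0$, via a quantitative local inverse for $\xi$ or the full-state-embedding convention $\xi_i(x)=x_i$ for $i\le n$. The paper instead works entirely at the input side: it asserts that $\xi$ is locally bi-Lipschitz near typical $x_T$ (so $\|x_T^1-x_T^2\| \le C_2\|\xi(x_T^1)-\xi(x_T^2)\|$), applies local Lipschitzness of $\Phi_T$, and concludes $\|\Phi_T(x_T^1)-\Phi_T(x_T^2)\| \le L\,C_2\,\|\xi(x_T^1)-\xi(x_T^2)\|$; the Koopman linearity $\xi(\Phi_T(x_T^j)) = C_T\,\xi(x_T^j)$ is recorded at the end but never folded into the inequality chain, so the paper's derived constant is $L\,C_2$ rather than the stated $L\,\|C_T\|$. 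Both routes require an inverse-Lipschitz control relating observable distance back to state distance, which does not follow from hypotheses 1--3 alone: the paper needs it at time $T$ and introduces it through the unproved clause that the $\xi_j$ ``capture independent semantic directions,'' whereas you need it at time $0$ and flag it explicitly, proposing the full-state embedding as the clean fix---a move the paper itself endorses in its remark that the theorem also holds for $\xi = \mathrm{id}$. Your implicit observation that the first inequality is degenerate (since $x_0^j = \Phi_T(x_T^j)$, the middle term equals the left term, so that inequality holds trivially once $L \ge 1$) is likewise correct and sharpens the reading of the statement.
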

The proof builds on the observation that diffusion model dynamics tend to preserve semantic similarity between inputs and outputs. By leveraging the Koopman operator’s spectral properties, we construct a finite-dimensional space of functions that remains stable under the model’s evolution. Within this space, the Koopman coordinates $\xi$ provide a locally bi-Lipschitz representation, while the reverse diffusion map $\Phi_T$ is Lipschitz continuous. Together, these properties ensure that semantically similar inputs remain close throughout the diffusion process. Note that the theorem holds also for $\xi := \text{id}$, i.e., the identity map~\cite{mauroy2020introduction, dogra2020optimizing}. The full proof is given in App.~\ref{appendix:proof_semantic_koopman}.

Importantly, we observe in practice an average spectral norm of the Koopman operator $C_T$ of $0.88 \pm 0.18$, indicating a reasonably tight bound and supporting the practical utility of our theory. Moreover, our framework enables explicit control over this norm via eigenvalue penalties on the Koopman operator. As detailed in App.~\ref{sec:decomposed_koopman_matrix}, we factorize $C_{T}$ to permit direct constraints on its spectrum (e.g., radius or banding), providing a principled regularizer on $\|C_{T}\|$.

\newcommand{\hh}{0.495\linewidth}
\newcommand{\hrlabel}[1]{\hfill\makebox[0mm]{#1}\hfill}

\newcommand{\denhruler}{\makebox[\hh]{\scriptsize\hrlabel{$\sigma{=}$0}\hrlabel{0.05}\hrlabel{0.1}\hrlabel{0.2}\hrlabel{0.3}\hrlabel{0.4}\hrlabel{0.5}\hrlabel{0.6}}}

% \begin{figure}[t]%
%     \label{fig:qual_vicinity}
%     \centering%
%     \footnotesize%
%     \denhruler\hfill\denhruler\\%
%     \includegraphics[width=\hh, trim=0 0 0 0, clip]{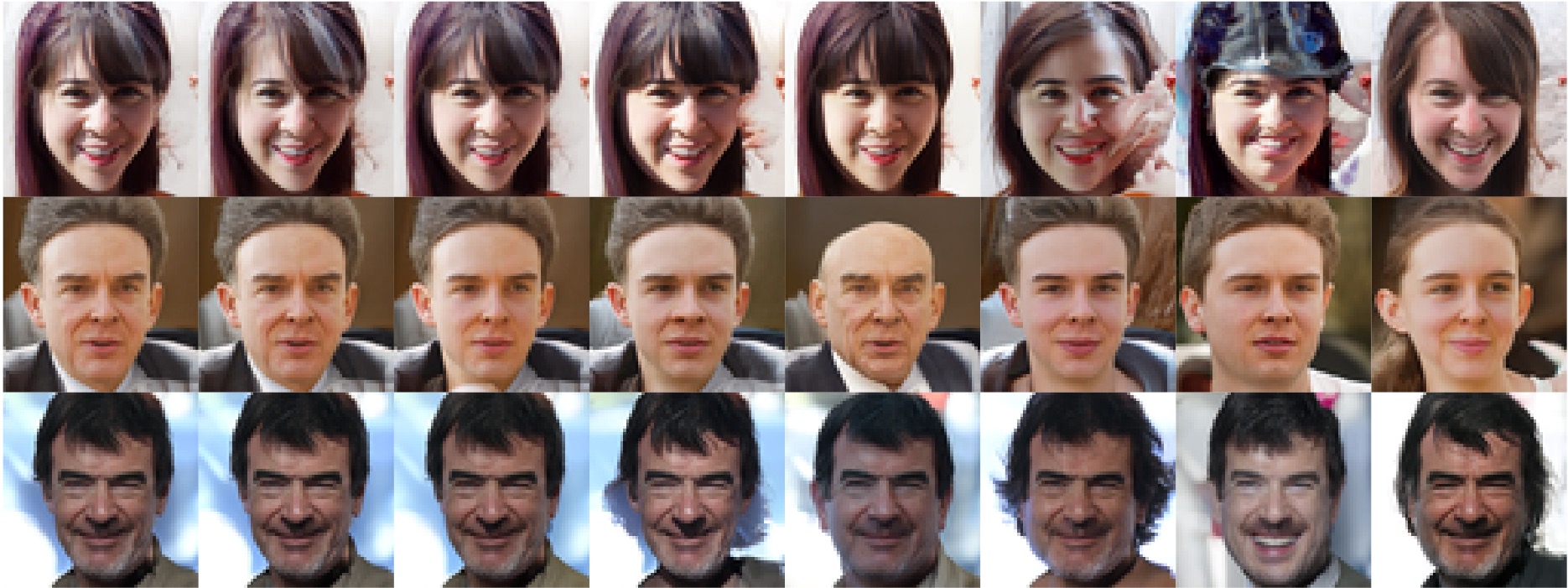}\hfill%
%     \includegraphics[width=\hh, trim=0 0 0 0, clip]{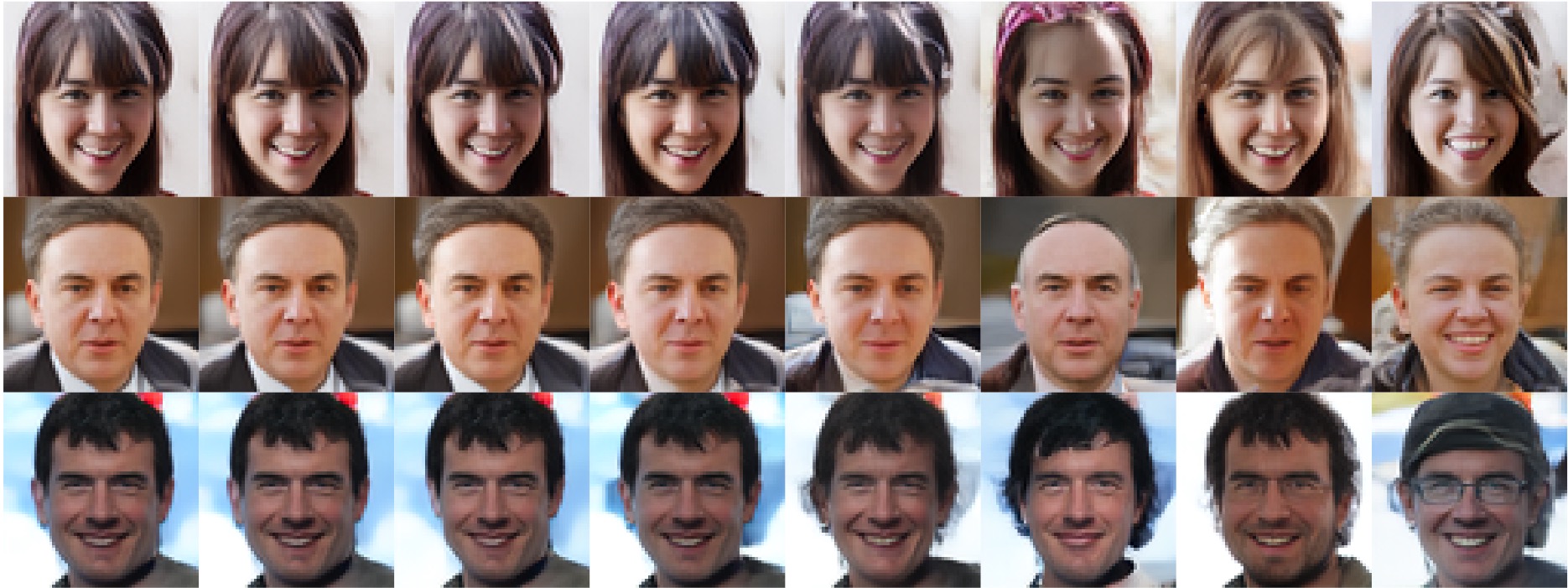}\\%
%     \makebox[\hh]{(a) Vicinity of the noise sampled in EDM~\cite{karras2022elucidating} }\hfill%
%     \makebox[\hh]{(b) Vicinity of the noise sampled in our method}\\%
%     \vspace{-4mm}
% \end{figure}

\section{Experiments}

We begin by further investigating the emergent dynamics structure that map noise to images, and its presence in \methodname~(Sec.~\ref{sec:latent_space_structure}). We then extensively assess our method on the offline distillation benchmark (Sec.~\ref{sec:one_step_generation}), including analyses of computational complexity, data and parameter scaling, and denoiser-agnostic utility. Additionally, an ablation study of the loss functions and adversarial losses stability is provided in App.~\ref{sec:ablation_studies}. Our code is in \url{https://github.com/azencot-group/KDM}.

\paragraph{Setup and evaluation.} We follow the protocol of GET~\cite{geng2023one} for generating noisy-clean pairs and for evaluation, using the CIFAR-10~\cite{krizhevsky2009learning}, FFHQ~64$\times$64~\cite{karras2019style}, and AFHQv2~64$\times$64~\cite{choi2020stargan} datasets. Our training setup matches that of PD~\cite{salimans2022progressive} and GET~\cite{geng2023one}. For evaluation, we report image quality using Fr\'echet Inception Distance (FID) and Inception Score (IS), computed over 50k samples. Both encoders $E_\varphi$ and $E_\phi$, as well as the decoder $D_\psi$, are implemented as compact UNets with reduced output channels to ensure a similar parameter size to the other single-step methods. The discriminator consists of four simple Conv2D layers. Full implementation details are provided in App.~\ref{sec:method_impl_details}.

\subsection{Observations of Emergent Latent Structure}
\label{sec:latent_space_structure}

\begin{figure}[t]


    \vspace{-1mm}
    \centering
    \footnotesize
    \denhruler\hfill\denhruler\\[2pt]
    \begin{minipage}{\hh}
        \includegraphics[width=\linewidth]{figs/edm_qual_vicinity.jpg}\\
    \end{minipage}\hfill
    \begin{minipage}{\hh}
        \includegraphics[width=\linewidth]{figs/kdm_qual_vicinity.jpg}\\
    \end{minipage}
    \vspace{-4mm}
    \caption{Visualization of vicinity comparison across different noise levels $\sigma$. Left: EDM~\cite{karras2022elucidating}. Right: our method. Each image corresponds to a small perturbation around the original noise sample.}
    \label{fig:qual_vicinity_combined}
\end{figure}

\paragraph{Real-world data dynamics: local structure analysis.} Extending the 2D toy experiment from Sec.~\ref{sec:method}, we now explore the high-dimensional FFHQ 64$\times$64 dataset. Since dimensionality reduction can obscure fine-grained patterns, we instead analyze the local neighborhood of a single latent sample $x_T \sim \mathcal{N}(0, I_n)$. We generate perturbed versions $\hat{x}_T = x_T + \varepsilon \cdot \sigma$ with $\varepsilon \sim \mathcal{N}(0,I_n)$, and normalize each $\hat{x}_T$ using its own mean and standard deviation. This setup tests whether semantic structure emerges locally around $x_T$. We vary $\sigma \in \{0,0.05,\dots,0.60\}$ and visualize the outputs in Fig.~\ref{fig:qual_vicinity_combined} (left). For all $\sigma < 0.3$, the images remain semantically aligned with the original image (when $\sigma=0$), indicating local smoothness and manifold adherence. As $\sigma$ grows, semantics diverge but retain global coherence even at $\sigma = 0.60$. The same experiment on \methodname~(Fig.~\ref{fig:qual_vicinity_combined}, right) reveals similar behavior, affirming that \methodname~preserves the local semantic structure of the original EDM. Our results show that both models organize latent space into meaningful neighborhoods, even in high-dimensional settings.

\paragraph{Visual analysis of dynamics distillation.} Beyond preserving local structure, \methodname~also approximately reproduces the noise-to-image mapping of EDM. When starting from the same noise vector $x_T \sim \mathcal{N}(0, I_n)$, sampled using seeds not seen during training, both EDM (with 79 NFEs) and \methodname~(with a single NFE) generate perceptually similar outputs (see Fig.~\ref{fig:qual_comp_with_edm}). This suggests that \methodname~generalizes well beyond its training data and effectively distills the mapping from noise to data. We observe this consistency across multiple datasets and include further qualitative results in App.~\ref{sec:additional_qual_res_high_res}. To quantify this, we generate 50k samples using both methods and compare outputs using LPIPS and MSE. \methodname~achieves scores of 0.13 (LPIPS) and 0.009 (MSE), in contrast to 0.48 and 0.57 when comparing randomly permuted images, demonstrating the similarity between the models' generation. To complement quantitative metrics, we conduct a human study where participants choose the more realistic image collage between our model and EDM (Fig.~\ref{fig:qual_comp_with_edm}; details in App.~\ref{sec:human_eval_details}). Evaluators showed no clear preference: $55\%$ of 300 choices favored our model, highlighting its strong fidelity.

\begin{figure}[t]
    \centering
    \begin{minipage}{\hh}
        \includegraphics[width=\linewidth, trim=0 435 0 0, clip]{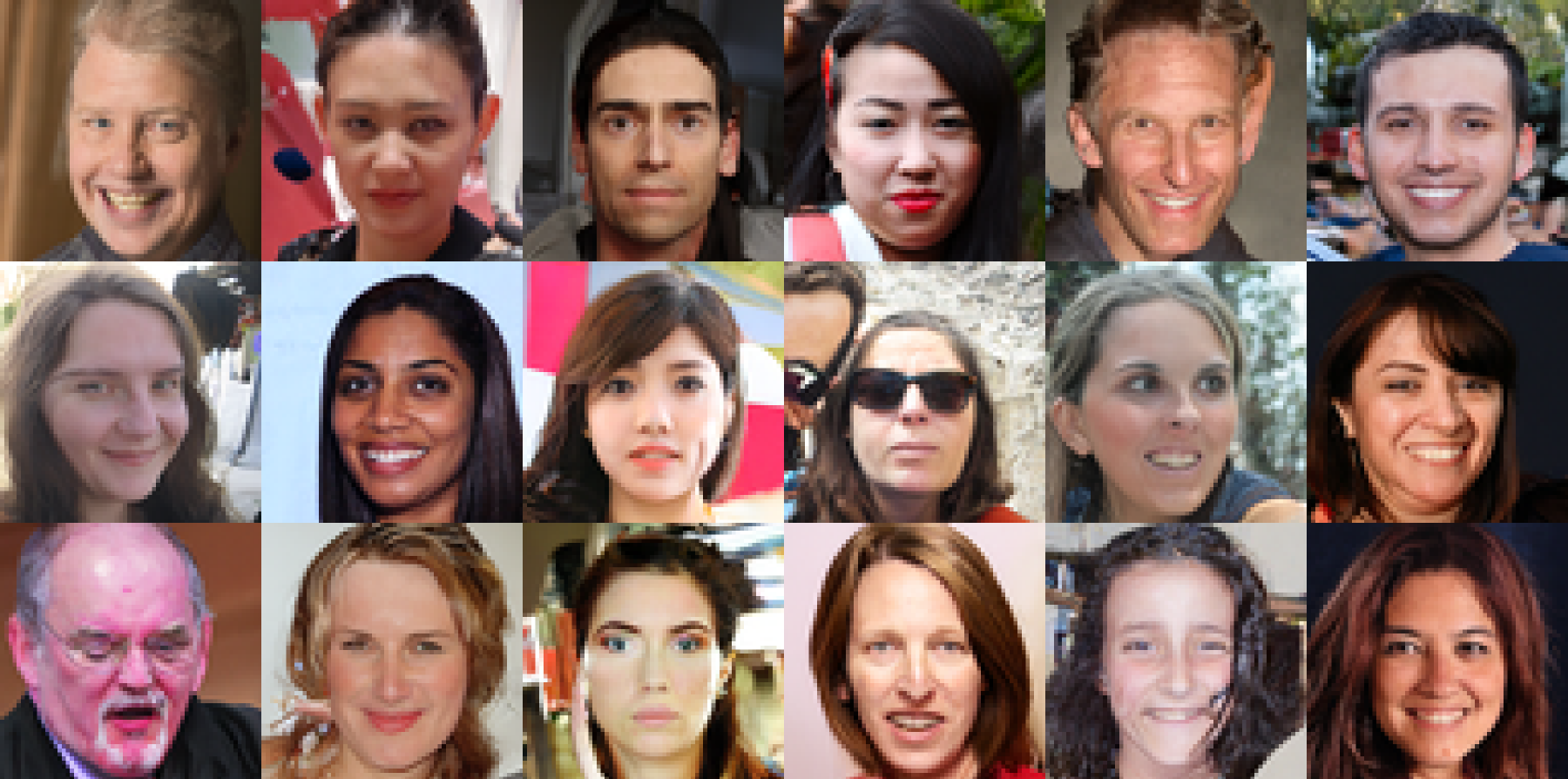}\\
    \end{minipage}\hfill
    \begin{minipage}{\hh}
        \includegraphics[width=\linewidth, trim=0 435 0 0, clip]{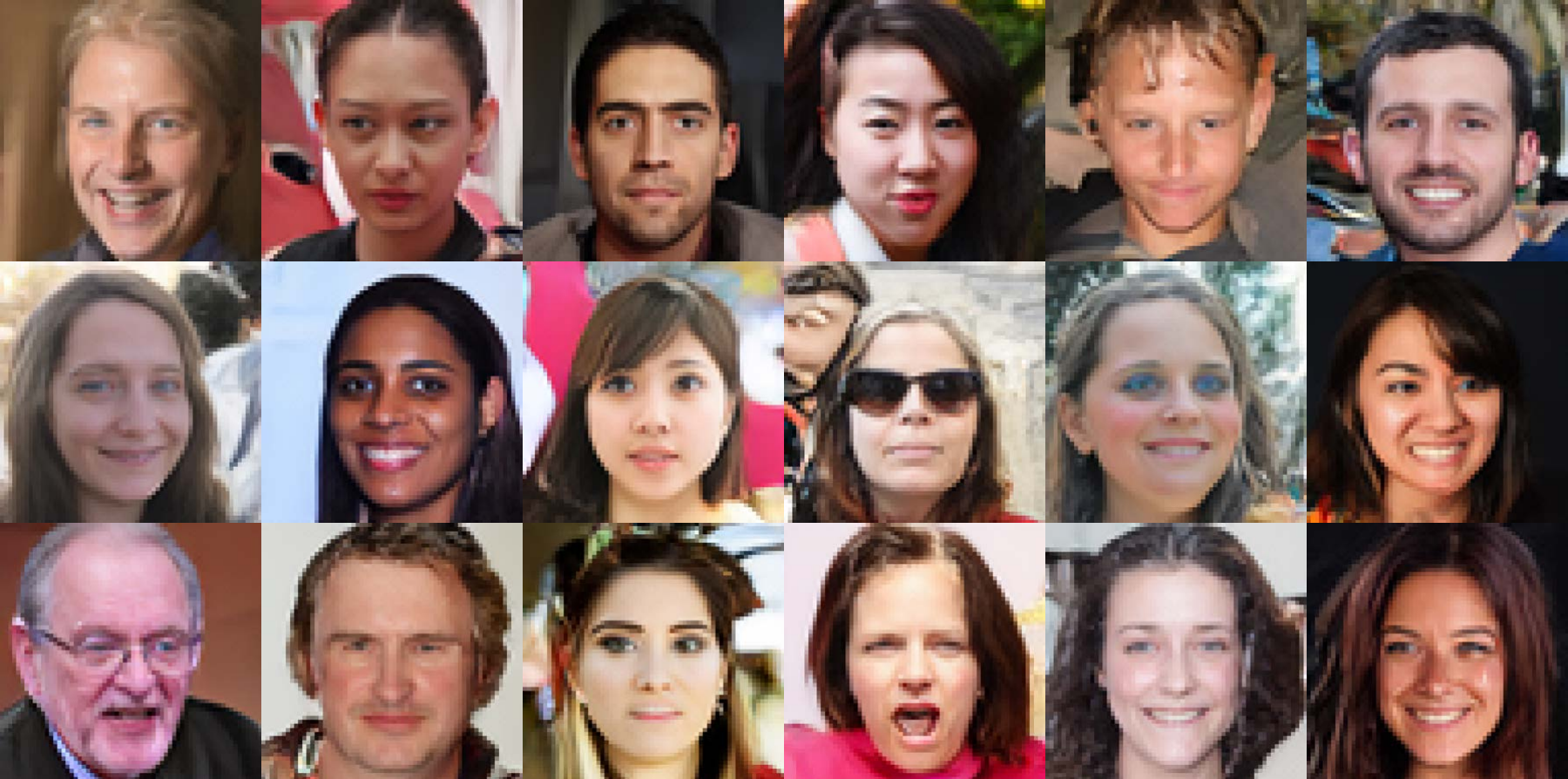}\\
    \end{minipage}
    \vspace{-4mm}
    \caption{Sampled images using 79 NFEs with EDM~\cite{karras2022elucidating} (left), and using 1 NFE with KDM (right).}
    \label{fig:qual_comp_with_edm}
    \vspace{-6mm}
\end{figure}

\paragraph{Outliers and their relation to the dynamics structure.} We analyze a recurring failure mode in both the teacher and distilled models: the generation of outlier samples that fall outside the designated checkerboard regions. These out-of-distribution points often appear in the final outputs and reflect a limitation in capturing the intended data distribution. To investigate their origin, we trace these outliers back through the generation process and identify two primary sources: (1) the tails of the Gaussian prior and (2) regions near decision boundaries between adjacent checkerboard cells. Fig.~\ref{fig:outlliers_model_vs_fid_data_vs_fid}C shows this behavior for the distilled model (black crosses denote outliers); additional comparisons with the teacher are provided in App.~\ref{sec:outlier_analysis_cont}. These findings suggest that the model’s dynamics are more error-prone in high-uncertainty regions, offering potential directions for improvement. Integrating more robust sampling techniques or leveraging self-supervised signals may help mitigate such failures. This insight could also inform future efforts in uncertainty estimation.

\begin{figure*}[b]
    \vspace{-2mm}
    \centering
    \begin{minipage}[t]{0.26\textwidth}
        \centering
        \begin{overpic}[width=\linewidth]{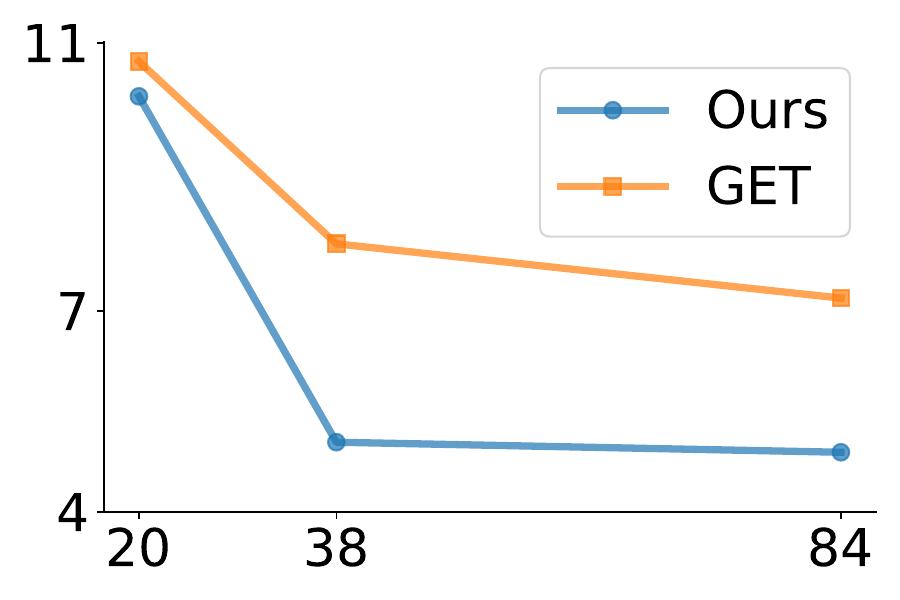}
            \put(0,70){FID $\downarrow$}
            \put(27,-5){\small Model Size (M)}
        \end{overpic}
    \end{minipage}
    \hfill
    \begin{minipage}[t]{0.26\textwidth}
        \centering
        \begin{overpic}[width=\linewidth]{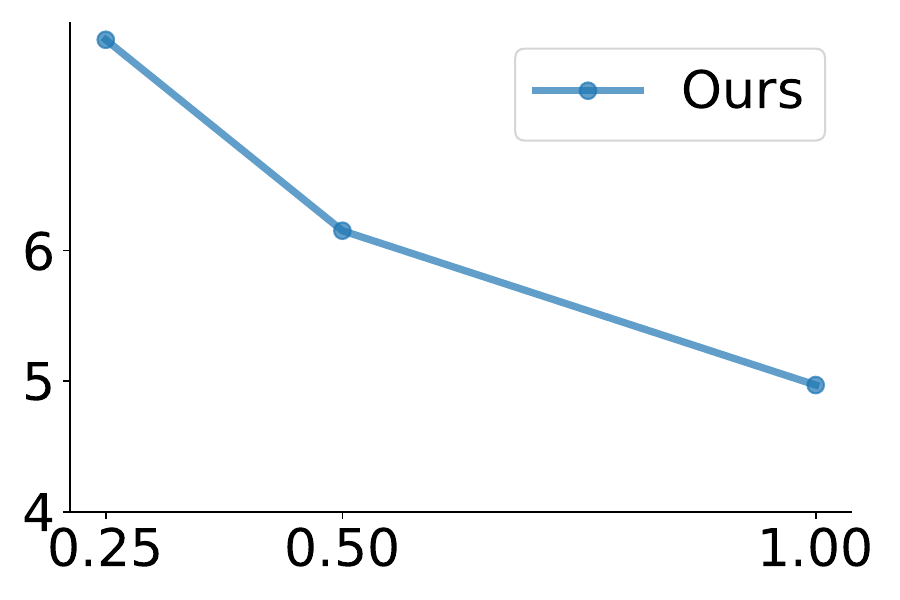}
            \put(0,70){FID $\downarrow$}
            \put(27,-5){\small Data Size (M)}
            \put(80,72){B}
            \put(-25,72){A}
        \end{overpic}
    \end{minipage}
    \hfill
    \begin{minipage}[t]{0.43\textwidth}
        \centering
        \begin{overpic}[width=\linewidth, trim=0 0 450 10, clip]{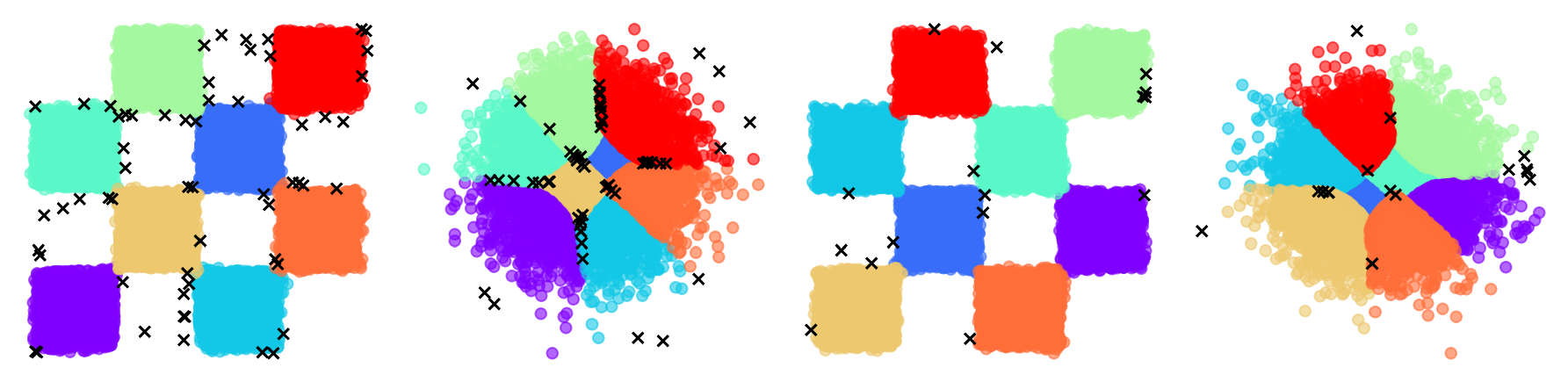}
        \put(95,45){C}
        \end{overpic}
    \end{minipage}
    \vspace{1mm}
    \caption{A) Model size vs FID. B) Data Size vs FID. C) Outlier analysis}
    \label{fig:outlliers_model_vs_fid_data_vs_fid}
\end{figure*}

\subsection{One-step Generation Results}
\label{sec:one_step_generation}

\paragraph{Unconditional and conditional generation with CIFAR-10.} We evaluate on the standard \emph{offline distillation} benchmark of~\citep{geng2023one}, which distills a pre-trained EDM on CIFAR-10. Whereas the EDM teacher samples with 35 function evaluations (NFEs), our student attains comparable quality with a \emph{single} NFE. We also adapt two recent SOTA baselines, IMM~\citep{zhou2025inductive} and RF-2++~\citep{lee2024improving}, to the same strictly offline setting, where only teacher-generated data are available and the original model weights (and online queries) are \emph{not} accessible (the RF-2++ variant corresponds to its “w/o pre-train” configuration). All methods are evaluated under identical protocols. Tab.~\ref{table:unconditional-cifar} reports unconditional generation results, where our method achieves highly competitive performance, surpassing the previous offline baseline in both FID and IS. 
In particular, we observe $\approx25\%$ reduction in FID over GET. Tab.~\ref{table:conditional-cifar} shows results for class-conditional generation, where our model yields $\approx40\%$ improvement in FID and superior IS compared to GET. Although offline distillation offers advantages in efficiency, privacy, and flexible deployment~\cite{fernandez2023privacy, geng2023one}, a performance gap with online methods remains. This may partly be because online approaches leverage the full generation trajectory, whereas, in the offline pairwise setup, methods only observe two endpoints or just the generated data, placing them at a natural disadvantage. Our results demonstrate a substantial step toward closing this gap. 

Furthermore, unlike prior methods that rely on architectural assumptions~\cite{geng2024consistency, lee2024improving}, our approach is denoiser-agnostic and successfully distills both EDM and flow matching (FM) models (see Tab.~\ref{table:unconditional-cifar}), demonstrating broad applicability. Finally, we implemented a factorized KDM (KDM-F). Since every matrix $C_\eta$ is diagonalizable up to an arbitrarily small perturbation of the entries~\cite{axler2015linear}, we can write: $C_\eta = P \Lambda P^{-1}$ where $P \in \mathbb{C}^{d\times d}$ is an invertible matrix and $\Lambda = \text{diag}(\lambda_1, \dots, \lambda_d) \in \mathbb{C}^{d\times d}$. While enabling efficient implementation of spectral penalties on $\Lambda$ (see complexity analysis below), this approach preserves strong performance and accuracy.

\begin{table*}[t]
\centering
\begin{minipage}[t]{0.47\textwidth}
\footnotesize
    \caption{Generative performance on unconditional CIFAR-10.}
    %\resizebox{\textwidth}{!}{
    \setlength{\tabcolsep}{2pt}
    \begin{tabular}{lcccc}
    \toprule 
    Models $(\downarrow$) / Metrics ($\rightarrow$) & NFE $\downarrow$ & FID $\downarrow$ & IS $\uparrow$  \\ 
    \midrule
    \gray{Diffusion Models} \\
    \midrule
    DDPM~\cite{ho2020denoising}            &  1000 & 3.17 & 9.46  \\ 
    Score SDE~\cite{song2020denoising}  & 2000 & 2.2 & 9.89   \\
    EDM~\cite{karras2019style}              &  35 & 2.04 & 9.84   \\ 
    \midrule
    \gray{Continuous Flows} \\
    \midrule
    Flow Matching (Diffusion)~\cite{lipman2022flow} &  183 & 8.06 & - \\
    Flow Matching (OT)~\cite{tong2023improving}        &  50 & 4.94 & -\\
    2-rf++~\cite{lee2024improving} & 1 & 3.07 & - \\
    \midrule
    \gray{GANs} \\
    \midrule
    % StyleGAN2~\cite{karras2020analyzing} &  1 & 8.32 & 9.18            \\
    StyleGAN-XL~\cite{karras2020training} & 1 & 1.85  & -            \\
    \midrule
    \gray{Diffusion Distillation} \\
    \midrule
    % KD~\cite{luhman2021knowledge}      & 1 & 9.36 & 8.36     \\
    PD~\cite{salimans2022progressive}                       & 1 & 9.12 & -        \\
    DFNO~\cite{zheng2023fast}     & 1 & 4.12 & -        \\
    % TRACT-EDM~\cite{berthelot2023tract}             & 1 & 4.17 & -        \\
    % CD-EDM (LPIPS)~\cite{song2023consistency} & 2 & 2.93 & 9.75  \\
    % CD-EDM (LPIPS)~\cite{song2023consistency} & 1 & 3.55 & 9.48  \\
    iCD-EDM (LPIPS)~\cite{song2023improved}  &1  & 2.83 & 9.54 \\
    CTM (LPIPS)~\cite{song2023improved} & 1 & 1.98 & - \\
    ECM~\cite{geng2024consistency} & 1 &  4.54 & - \\
    \midrule
    \textbf{Offline Distillation}                                              \\
    \midrule
    GET~\cite{geng2023one}          & 1 & 6.91  & 9.16             \\
    2-rf++ w/o pre-train~\cite{lee2024improving} & 1 & 6.32 & 9.01 \\
    IMM~\citep{zhou2025inductive}  & 1 & 4.81  & 9.16             \\
    \methodname~(EDM)          & 1 & \textbf{4.65}  & \textbf{9.21}             \\
    \methodname~(FM)          & 1 & 5.91  & 8.86             \\
    \methodnamef~(EDM)       & 1 & \textbf{4.68}  & 9.08             \\
    
    \bottomrule
    \end{tabular}
    %}
    \label{table:unconditional-cifar}
\end{minipage}
\hfill
\begin{minipage}[t]{0.47\textwidth}
\centering
\footnotesize
    \caption{Generative performance on class-conditional CIFAR-10.}
    %\vspace{-0.1cm}
        %\resizebox{\textwidth}{!}{
    \setlength{\tabcolsep}{2pt}
    \begin{tabular}{lcccc}
    \toprule 
    Models $(\downarrow$) / Metrics ($\rightarrow$) & NFE $\downarrow$ & FID $\downarrow$ & IS $\uparrow$  \\ 
    \midrule
    \gray{GANs} \\
    \midrule
     BigGAN \cite{brock2018large} &  1 & 14.73 & 9.22 \\
     StyleGAN2-ADA \cite{karras2020training} & 1 & 2.42 & 10.14 \\
    \midrule
    \gray{Diffusion Distillation} \\
    \midrule
    % Guided Dist. ($w = 0$)~\cite{meng2023distillation}   &  1 & 8.34  & 8.63    \\
    Guided Dist. ($w = 0.3$)~\cite{meng2023distillation} &  1 & 7.34  & 8.90    \\
    % Guided Distillation ($w = 1$)~\cite{meng2023distillation}   &  1 & 8.62  & 9.21    \\
    % Guided Distillation ($w = 2$)~\cite{meng2023distillation}   &  1 & 13.23 & 9.23    \\
    ECM~\cite{geng2024consistency} & 1 & 3.81 & - \\
    \midrule
    \textbf{Offline Distillation} \\
    \midrule
    GET~\cite{geng2023one} & 1 & 6.25 & 9.40  \\
    \methodname~(EDM) & 1 & \textbf{3.56} & \textbf{9.54}  \\
    \methodnamef~(EDM) & 1 & \textbf{3.24} & \textbf{9.68}  \\
    \bottomrule
    \label{table:conditional-cifar}
    \end{tabular}
    %}    
    \vspace{0.1cm}
    \centering
    \caption{Generative performance on unconditional generation on FFHQ~64$\times$64 and AFHQv2~64$\times$64. *Indicates our reproduction.}
    %\renewcommand{\arraystretch}{0.5}
    %\resizebox{\textwidth}{!}{
    \setlength{\tabcolsep}{4pt}
    \begin{tabular}{lccc}
    \toprule
    Models $(\downarrow$) / Metrics ($\rightarrow$)            & NFE $\downarrow$ & FID $\downarrow$ & IS 
    $\uparrow$  \\ 
    \midrule
    \gray{FFHQ~64$\times$64}                                     \\
    \midrule
    EDM*              & 79  & 2.47 & 3.37          \\
    % GET & 1 & -- & -- \\
    \methodname~(EDM)               & 1  & 6.54 & 3.12          \\
    \midrule
    \gray{AFHQv2~64$\times$64}                                     \\
    \midrule
    EDM*              & 79  & 2.02 & 9.04          \\
    % GET                & 1 & -- & -- \\
    \methodname~(EDM)         & 1  & 4.85 & 8.25          \\
    \bottomrule
    \label{table:additional_uncond_datasets}
    \end{tabular}
    %}
    \\
    %\vspace{-0.12cm}
\end{minipage}
\vspace{-7mm}
\end{table*}

\vspace{-2mm}
\paragraph{Unconditional generation with FFHQ and AFHQv2.}   To assess the scalability of \methodname~to higher-resolution data, we evaluate our method on FFHQ and AFHQv2, distilling each into a single-step generator and assessing performance both quantitatively (Tab.~\ref{table:additional_uncond_datasets}) and qualitatively (App.~\ref{sec:additional_qual_res_high_res}). Our method yields FID scores approximately 2.5$\times$ higher than EDM, similar to the degradation observed on CIFAR-10. Due to the high computational cost, we omit GET results: training a single model takes 5--7 weeks on an A6000 GPU, which is technically infeasible in our computational resources. We also exclude ImageNet, as large-scale training with EDM~\cite{karras2022elucidating} is computationally infeasible for us.

\vspace{-2mm}
\paragraph{Scalability in parameters and data.} Offline distillation inherently supports architectural independence from the teacher model, allowing flexible scaling of the student to meet diverse deployment needs~\cite{geng2023one, geng2024consistency}. To assess this, we evaluate our method against GET across three model sizes—20M, 38M, and 84M parameters (Fig.~\ref{fig:outlliers_model_vs_fid_data_vs_fid}A). Our method consistently outperforms GET in FID scores at all scales, with especially notable gains at higher capacities (e.g., 4.89 vs. 7.19 at 84M), demonstrating its effectiveness for both lightweight and high-capacity settings. In parallel, we investigate how performance scales with training data size, a key factor in offline distillation. Following GET~\cite{geng2023one}, we train on 250K, 500K, and 1M samples and report FID scores on unconditional CIFAR-10 in Fig.~\ref{fig:outlliers_model_vs_fid_data_vs_fid}B. Results show that more data yields better generation quality. For instance, increasing the dataset from 250K to 1M samples results in a $\approx33\%$ FID reduction, showing the importance of data scale. While our method remains data-efficient, additional data can enhance performance.

\paragraph{Complexity analysis.}  Offline distillation can offer advantages over online methods, both memory-wise and time-wise (see App.~\ref{sec:cost_efficiency}). \methodname~further improves over GET. As shown in Tab.~\ref{tab:complexity}, it matches GET in parameter count but achieves a $4\times$ speedup per training iteration and over $8\times$ faster sampling. Notably, \methodnamef~(App.~\ref{sec:decomposed_koopman_matrix}) maintains this efficiency even with spectral regularization. Specifically, we evaluate KDM and KDM-F with an additional eigen-loss (EL) term, observing that KDM-F + EL requires training time comparable to KDM (without EL), whereas KDM + EL incurs a longer training duration. These results underscore the practical scalability of our approach. 

\vspace{-2mm}
\begin{wraptable}{r}{0.5\columnwidth} % Table on the right, 37% of column width
    \vspace{-1mm}
    \centering
    \caption{Parameters in millions, time in seconds. RTX 4090 GPU used with batch size 128.}
    \label{tab:complexity}
    \resizebox{\linewidth}{!}{ % Use \linewidth within wraptable
    \begin{tabular}{@{}lcccc@{}}
        \toprule
        \textbf{Metric} & \textbf{GET} & \textbf{\methodname} & \textbf{\methodname{} + EL} & \textbf{\methodnamef{} + EL} \\
        \midrule
        Params (M)   & 62 & 62 & 62 & 65 \\
        Train (s) & 3.60 & 0.91 & 4.82 & 0.97 \\
        Sampling (s) & 937 & 114 & 114 & 116 \\
        \bottomrule
    \end{tabular}
    }
    \vspace{-3mm}
\end{wraptable}

\section{Discussion}

In this work, we present two key observations. First, we reinterpret the distillation process through the lens of dynamical systems theory. Second, we uncover the semantical structure in the mapping from latent space to data space in trained diffusion dynamics. Motivated by these insights, we propose a Koopman-based framework for distilling diffusion models, bridging operator theory and generative modeling. We theoretically establish the existence of a Koopman operator that captures the underlying generative process, and demonstrate, both theoretically and empirically, that our method preserves its structure. This perspective also offers insights into potential failure modes. Beyond its conceptual contributions, our approach is practical and efficient, achieving highly competitive single-step offline distillation. We further analyze its robustness and scalability. We hope this work lays the foundation for future directions, including leveraging richer training signals and extending distillation techniques to advanced text-to-image generative models. See App.~\ref{sec:additional_discussion} for further discussion.

\section*{Acknowledgments}
This research was partially supported by the Lynn and William Frankel Center of the Computer Science Department, Ben-Gurion University of the Negev, an ISF grant 668/21, an ISF equipment grant, and by the Israeli Council for Higher Education (CHE) via the Data Science Research Center, Ben-Gurion University of the Negev, Israel.

\clearpage

{
\small

\bibliographystyle{abbrv}
\bibliography{refs}

\begin{thebibliography}{10}

\bibitem{albergo2023stochastic}
M.~S. Albergo, M.~Goldstein, N.~M. Boffi, R.~Ranganath, and E.~Vanden-Eijnden.
\newblock Stochastic interpolants with data-dependent couplings.
\newblock {\em arXiv preprint arXiv:2310.03725}, 2023.

\bibitem{axler2015linear}
S.~Axler.
\newblock {\em Linear algebra done right}.
\newblock Springer, 2015.

\bibitem{azencot2013operator}
O.~Azencot, M.~Ben-Chen, F.~Chazal, and M.~Ovsjanikov.
\newblock An operator approach to tangent vector field processing.
\newblock In {\em Computer Graphics Forum}, volume~32, pages 73--82. Wiley Online Library, 2013.

\bibitem{azencot2020forecasting}
O.~Azencot, N.~B. Erichson, V.~Lin, and M.~Mahoney.
\newblock Forecasting sequential data using consistent {Koopman} autoencoders.
\newblock In {\em International Conference on Machine Learning}, pages 475--485. PMLR, 2020.

\bibitem{barami2025disentanglement}
T.~Barami, N.~Berman, I.~Naiman, H.~Amos, R.~Ezra, and O.~Azencot.
\newblock Disentanglement beyond static vs. dynamic: A benchmark and evaluation framework for multi-factor sequential representations.
\newblock In {\em Advances in Neural Information Processing Systems (NeurIPS) 39, Datasets and Benchmarks Track}, 2025.

\bibitem{berman2025said}
N.~Berman, A.~Hallak, and A.~Shocher.
\newblock Who said neural networks aren't linear?
\newblock {\em arXiv preprint arXiv:2510.08570}, 2025.

\bibitem{berman2023multifactor}
N.~Berman, I.~Naiman, and O.~Azencot.
\newblock Multifactor sequential disentanglement via structured {Koopman} autoencoders.
\newblock In {\em The Eleventh International Conference on Learning Representations, {ICLR}}, 2023.

\bibitem{blau2018perception}
Y.~Blau and T.~Michaeli.
\newblock The perception-distortion tradeoff.
\newblock In {\em Proceedings of the IEEE conference on computer vision and pattern recognition}, pages 6228--6237, 2018.

\bibitem{boulle2024multiplicative}
N.~Boull{\'e} and M.~J. Colbrook.
\newblock Multiplicative dynamic mode decomposition.
\newblock {\em arXiv preprint arXiv:2405.05334}, 2024.

\bibitem{brock2018large}
A.~Brock, J.~Donahue, and K.~Simonyan.
\newblock Large scale gan training for high fidelity natural image synthesis.
\newblock {\em arXiv preprint arXiv:1809.11096}, 2018.

\bibitem{brunton2022modern}
S.~L. Brunton and J.~N. Kutz.
\newblock {\em Modern Koopman Theory for Dynamical Systems}.
\newblock SIAM, 2022.

\bibitem{carlini2023extracting}
N.~Carlini, J.~Hayes, M.~Nasr, M.~Jagielski, V.~Sehwag, F.~Tramer, B.~Balle, D.~Ippolito, and E.~Wallace.
\newblock Extracting training data from diffusion models.
\newblock In {\em 32nd USENIX security symposium (USENIX Security 23)}, pages 5253--5270, 2023.

\bibitem{chen2023geometric}
D.~Chen, Z.~Zhou, J.-P. Mei, C.~Shen, C.~Chen, and C.~Wang.
\newblock A geometric perspective on diffusion models.
\newblock {\em arXiv preprint arXiv:2305.19947}, 2023.

\bibitem{chen2018sample}
Y.~Chen and U.~Vaidya.
\newblock Sample complexity for nonlinear dynamics.
\newblock {\em arXiv preprint arXiv:1810.11747}, 2018.

\bibitem{choi2020stargan}
Y.~Choi, Y.~Uh, J.~Yoo, and J.-W. Ha.
\newblock Stargan v2: Diverse image synthesis for multiple domains.
\newblock In {\em Proceedings of the IEEE/CVF conference on computer vision and pattern recognition}, pages 8188--8197, 2020.

\bibitem{coletta2023constrained}
A.~Coletta, S.~Gopalakrishnan, D.~Borrajo, and S.~Vyetrenko.
\newblock On the constrained time-series generation problem.
\newblock {\em Advances in Neural Information Processing Systems}, 36:61048--61059, 2023.

\bibitem{de2021diffusion}
V.~De~Bortoli, J.~Thornton, J.~Heng, and A.~Doucet.
\newblock Diffusion schr{\"o}dinger bridge with applications to score-based generative modeling.
\newblock {\em Advances in Neural Information Processing Systems}, 34:17695--17709, 2021.

\bibitem{dhariwal2021diffusion}
P.~Dhariwal and A.~Nichol.
\newblock Diffusion models beat gans on image synthesis.
\newblock {\em Advances in neural information processing systems}, 34:8780--8794, 2021.

\bibitem{dogra2020optimizing}
A.~S. Dogra and W.~Redman.
\newblock Optimizing neural networks via koopman operator theory.
\newblock {\em Advances in Neural Information Processing Systems}, 33:2087--2097, 2020.

\bibitem{eisner2015operator}
T.~Eisner, B.~Farkas, M.~Haase, and R.~Nagel.
\newblock {\em Operator theoretic aspects of ergodic theory}, volume 272.
\newblock Springer, 2015.

\bibitem{fadlon025diffusion}
G.~Fadlon, I.~Arbiv, N.~Berman, and O.~Azencot.
\newblock A diffusion model for regular time series generation from irregular data with completion and masking.
\newblock In {\em Advances in Neural Information Processing Systems (NeurIPS) 39}, 2025.

\bibitem{fernandez2023privacy}
V.~Fernandez, P.~Sanchez, W.~H.~L. Pinaya, G.~Jacenk{\'o}w, S.~A. Tsaftaris, and J.~Cardoso.
\newblock Privacy distillation: reducing re-identification risk of multimodal diffusion models.
\newblock {\em arXiv preprint arXiv:2306.01322}, 2023.

\bibitem{froyland2010coherent}
G.~Froyland, S.~Lloyd, and A.~Quas.
\newblock Coherent structures and isolated spectrum for perron--frobenius cocycles.
\newblock {\em Ergodic Theory and Dynamical Systems}, 30(3):729--756, 2010.

\bibitem{geng2023one}
Z.~Geng, A.~Pokle, and J.~Z. Kolter.
\newblock One-step diffusion distillation via deep equilibrium models.
\newblock {\em Advances in Neural Information Processing Systems}, 36:41914--41931, 2023.

\bibitem{geng2024consistency}
Z.~Geng, A.~Pokle, W.~Luo, J.~Lin, and J.~Z. Kolter.
\newblock Consistency models made easy.
\newblock {\em arXiv preprint arXiv:2406.14548}, 2024.

\bibitem{gonen2025time}
T.~Gonen, I.~Pemper, I.~Naiman, N.~Berman, and O.~Azencot.
\newblock Time series generation under data scarcity: A unified generative modeling approach.
\newblock In {\em Advances in Neural Information Processing Systems (NeurIPS) 39}, 2025.

\bibitem{goodfellow2014generative}
I.~Goodfellow, J.~Pouget-Abadie, M.~Mirza, B.~Xu, D.~Warde-Farley, S.~Ozair, A.~Courville, and Y.~Bengio.
\newblock Generative adversarial nets.
\newblock In {\em Advances in Neural Information Processing Systems}, pages 2672--2680, 2014.

\bibitem{goyal2025deep}
P.~Goyal, S.~Y{\i}ld{\i}z, and P.~Benner.
\newblock Deep learning for structure-preserving universal stable koopman-inspired embeddings for nonlinear canonical hamiltonian dynamics.
\newblock {\em Machine Learning: Science and Technology}, 6(1):015063, 2025.

\bibitem{han2022desko}
M.~Han, J.~Euler-Rolle, and R.~K. Katzschmann.
\newblock Desko: Stability-assured robust control with a deep stochastic koopman operator.
\newblock In {\em International conference on learning representations (ICLR)}, 2022.

\bibitem{ho2020denoising}
J.~Ho, A.~Jain, and P.~Abbeel.
\newblock Denoising diffusion probabilistic models.
\newblock In {\em Advances in Neural Information Processing Systems}, volume~33, pages 6840--6851, 2020.

\bibitem{ho2022video}
J.~Ho, T.~Salimans, A.~Gritsenko, W.~Chan, M.~Norouzi, and D.~J. Fleet.
\newblock Video diffusion models.
\newblock {\em Advances in Neural Information Processing Systems}, 35:8633--8646, 2022.

\bibitem{hu2023membership}
H.~Hu and J.~Pang.
\newblock Membership inference of diffusion models.
\newblock {\em arXiv preprint arXiv:2301.09956}, 2023.

\bibitem{iacob2023finite}
L.~C. Iacob, M.~Schoukens, and R.~T{\'o}th.
\newblock Finite dimensional koopman form of polynomial nonlinear systems.
\newblock {\em IFAC-PapersOnLine}, 56(2):6423--6428, 2023.

\bibitem{jayasumana2024rethinking}
S.~Jayasumana, S.~Ramalingam, A.~Veit, D.~Glasner, A.~Chakrabarti, and S.~Kumar.
\newblock Rethinking fid: Towards a better evaluation metric for image generation.
\newblock {\em arXiv preprint arXiv:2401.09603}, 2024.

\bibitem{karras2022elucidating}
T.~Karras, M.~Aittala, T.~Aila, and S.~Laine.
\newblock Elucidating the design space of diffusion-based generative models.
\newblock {\em Advances in neural information processing systems}, 35:26565--26577, 2022.

\bibitem{karras2020training}
T.~Karras, M.~Aittala, J.~Hellsten, S.~Laine, J.~Lehtinen, and T.~Aila.
\newblock Training generative adversarial networks with limited data.
\newblock {\em Advances in neural information processing systems}, 33:12104--12114, 2020.

\bibitem{karras2019style}
T.~Karras, S.~Laine, and T.~Aila.
\newblock A style-based generator architecture for generative adversarial networks.
\newblock In {\em Proceedings of the IEEE/CVF conference on computer vision and pattern recognition}, pages 4401--4410, 2019.

\bibitem{kim2024consistency}
D.~Kim, C.~Lai, W.~Liao, N.~Murata, Y.~Takida, T.~Uesaka, Y.~He, Y.~Mitsufuji, and S.~Ermon.
\newblock Consistency trajectory models: Learning probability flow {ODE} trajectory of diffusion.
\newblock In {\em The Twelfth International Conference on Learning Representations, {ICLR}}. OpenReview.net, 2024.

\bibitem{kingma2015adam}
D.~P. Kingma and J.~Ba.
\newblock Adam: {A} method for stochastic optimization.
\newblock In Y.~Bengio and Y.~LeCun, editors, {\em 3rd International Conference on Learning Representations, {ICLR} 2015, San Diego, CA, USA, May 7-9, 2015, Conference Track Proceedings}, 2015.

\bibitem{kingma2014auto}
D.~P. Kingma and M.~Welling.
\newblock Auto-encoding variational bayes.
\newblock In {\em International Conference on Learning Representations (ICLR)}, 2014.

\bibitem{kong2020diffwave}
Z.~Kong, W.~Ping, J.~Huang, K.~Zhao, and B.~Catanzaro.
\newblock Diffwave: A versatile diffusion model for audio synthesis.
\newblock {\em arXiv preprint arXiv:2009.09761}, 2020.

\bibitem{koopman1931hamiltonian}
B.~O. Koopman.
\newblock Hamiltonian systems and transformation in hilbert space.
\newblock {\em Proceedings of the National Academy of Sciences}, 17(5):315--318, 1931.

\bibitem{krizhevsky2009learning}
A.~Krizhevsky, G.~Hinton, et~al.
\newblock Learning multiple layers of features from tiny images.
\newblock Technical report, Toronto, ON, Canada, 2009.

\bibitem{kynkaanniemi2022role}
T.~Kynk{\"a}{\"a}nniemi, T.~Karras, M.~Aittala, T.~Aila, and J.~Lehtinen.
\newblock The role of imagenet classes in fréchet inception distance.
\newblock {\em arXiv preprint arXiv:2203.06026}, 2022.

\bibitem{lee2024improving}
S.~Lee, Z.~Lin, and G.~Fanti.
\newblock Improving the training of rectified flows.
\newblock {\em Advances in Neural Information Processing Systems}, 37:63082--63109, 2024.

\bibitem{lipman2022flow}
Y.~Lipman, R.~T. Chen, H.~Ben-Hamu, M.~Nickel, and M.~Le.
\newblock Flow matching for generative modeling.
\newblock {\em arXiv preprint arXiv:2210.02747}, 2022.

\bibitem{lipman2023flow}
Y.~Lipman, E.~Hajiramezanali, M.~Mahdavi, A.~Benson, A.~Vahdat, S.~Koyejo, and D.~Yang.
\newblock Flow matching for generative modeling.
\newblock In {\em International Conference on Machine Learning (ICML)}, 2023.

\bibitem{lozano2022information}
A.~Lozano-Dur{\'a}n and G.~Arranz.
\newblock Information-theoretic formulation of dynamical systems: causality, modeling, and control.
\newblock {\em Physical Review Research}, 4(2):023195, 2022.

\bibitem{lu2022dpm}
C.~Lu, Y.~Zhou, F.~Bao, J.~Chen, C.~Li, and J.~Zhu.
\newblock Dpm-solver: A fast ode solver for diffusion probabilistic model sampling in around 10 steps.
\newblock {\em Advances in Neural Information Processing Systems}, 35:5775--5787, 2022.

\bibitem{luhman2021knowledge}
E.~Luhman and T.~Luhman.
\newblock Knowledge distillation in iterative generative models for improved sampling speed.
\newblock {\em arXiv preprint arXiv:2101.02388}, 2021.

\bibitem{luo2023latent}
S.~Luo, Y.~Tan, L.~Huang, J.~Li, and H.~Zhao.
\newblock Latent consistency models: Synthesizing high-resolution images with few-step inference.
\newblock {\em arXiv preprint arXiv:2310.04378}, 2023.

\bibitem{luo2023diff}
W.~Luo, T.~Hu, S.~Zhang, J.~Sun, Z.~Li, and Z.~Zhang.
\newblock Diff-instruct: A universal approach for transferring knowledge from pre-trained diffusion models.
\newblock {\em Advances in Neural Information Processing Systems}, 36:76525--76546, 2023.

\bibitem{lusch2018deep}
B.~Lusch, J.~N. Kutz, and S.~L. Brunton.
\newblock Deep learning for universal linear embeddings of nonlinear dynamics.
\newblock {\em Nature communications}, 9(1):4950, 2018.

\bibitem{mauroy2020introduction}
A.~Mauroy, Y.~Susuki, and I.~Mezi{\'c}.
\newblock Introduction to the {Koopman} operator in dynamical systems and control theory.
\newblock {\em The koopman operator in systems and control: concepts, methodologies, and applications}, pages 3--33, 2020.

\bibitem{meng2023distillation}
C.~Meng, R.~Rombach, R.~Gao, D.~Kingma, S.~Ermon, J.~Ho, and T.~Salimans.
\newblock On distillation of guided diffusion models.
\newblock In {\em Proceedings of the IEEE/CVF Conference on Computer Vision and Pattern Recognition}, pages 14297--14306, 2023.

\bibitem{mezic2005spectral}
I.~Mezi{\'c}.
\newblock Spectral properties of dynamical systems, model reduction and decompositions.
\newblock {\em Nonlinear Dynamics}, 41:309--325, 2005.

\bibitem{muthirayan2021online}
D.~Muthirayan and P.~P. Khargonekar.
\newblock Online learning robust control of nonlinear dynamical systems.
\newblock {\em arXiv preprint arXiv:2106.04092}, 2021.

\bibitem{naiman2023operator}
I.~Naiman and O.~Azencot.
\newblock An operator theoretic approach for analyzing sequence neural networks.
\newblock In {\em Proceedings of the AAAI conference on artificial intelligence}, volume~37, pages 9268--9276, 2023.

\bibitem{naiman2024utilizing}
I.~Naiman, N.~Berman, I.~Pemper, I.~Arbiv, G.~Fadlon, and O.~Azencot.
\newblock Utilizing image transforms and diffusion models for generative modeling of short and long time series.
\newblock {\em Advances in Neural Information Processing Systems}, 37:121699--121730, 2024.

\bibitem{naiman2024generative}
I.~Naiman, N.~B. Erichson, P.~Ren, M.~W. Mahoney, and O.~Azencot.
\newblock Generative modeling of regular and irregular time series data via {Koopman} {VAE}s.
\newblock In {\em The Twelfth International Conference on Learning Representations}, 2024.

\bibitem{nasr2023scalable}
M.~Nasr, N.~Carlini, J.~Hayase, M.~Jagielski, A.~F. Cooper, D.~Ippolito, C.~A. Choquette-Choo, E.~Wallace, F.~Tram{\`e}r, and K.~Lee.
\newblock Scalable extraction of training data from (production) language models.
\newblock {\em arXiv preprint arXiv:2311.17035}, 2023.

\bibitem{nguyen2024swiftbrush}
T.~H. Nguyen and A.~Tran.
\newblock Swiftbrush: One-step text-to-image diffusion model with variational score distillation.
\newblock In {\em Proceedings of the IEEE/CVF Conference on Computer Vision and Pattern Recognition}, pages 7807--7816, 2024.

\bibitem{nichol2021improved}
A.~Q. Nichol and P.~Dhariwal.
\newblock Improved denoising diffusion probabilistic models.
\newblock In {\em International conference on machine learning}, pages 8162--8171. PMLR, 2021.

\bibitem{otani2023toward}
M.~Otani, R.~Togashi, Y.~Sawai, R.~Ishigami, Y.~Nakashima, E.~Rahtu, J.~Heikkil{\"a}, and S.~Satoh.
\newblock Toward verifiable and reproducible human evaluation for text-to-image generation.
\newblock In {\em Proceedings of the IEEE/CVF Conference on Computer Vision and Pattern Recognition}, pages 12345--12354, 2023.

\bibitem{otto2019linearly}
S.~E. Otto and C.~W. Rowley.
\newblock Linearly recurrent autoencoder networks for learning dynamics.
\newblock {\em SIAM Journal on Applied Dynamical Systems}, 18(1):558--593, 2019.

\bibitem{philipp2023error}
F.~Philipp, M.~Schaller, K.~Worthmann, S.~Peitz, and F.~Nueske.
\newblock Error bounds for kernel-based approximations of the koopman operator.
\newblock {\em arXiv preprint arXiv:2301.08637}, 2023.

\bibitem{rowley2009spectral}
C.~W. Rowley, I.~Mezi{\'c}, S.~Bagheri, P.~Schlatter, and D.~S. Henningson.
\newblock Spectral analysis of nonlinear flows.
\newblock {\em Journal of fluid mechanics}, 641:115--127, 2009.

\bibitem{salimans2022progressive}
T.~Salimans and J.~Ho.
\newblock Progressive distillation for fast sampling of diffusion models.
\newblock {\em arXiv preprint arXiv:2202.00512}, 2022.

\bibitem{sauer2024fast}
A.~Sauer, F.~Boesel, T.~Dockhorn, A.~Blattmann, P.~Esser, and R.~Rombach.
\newblock Fast high-resolution image synthesis with latent adversarial diffusion distillation.
\newblock In {\em SIGGRAPH Asia 2024 Conference Papers}, pages 1--11, 2024.

\bibitem{sauer2023adversarial}
A.~Sauer, D.~Lorenz, A.~Blattmann, and R.~Rombach.
\newblock Adversarial diffusion distillation.
\newblock {\em arXiv preprint arXiv:2311.17042}, 2023.

\bibitem{sauer2024adversarial}
A.~Sauer, D.~Lorenz, A.~Blattmann, and R.~Rombach.
\newblock Adversarial diffusion distillation.
\newblock In {\em European Conference on Computer Vision}, pages 87--103. Springer, 2024.

\bibitem{sohl2015deep}
J.~Sohl-Dickstein, E.~Weiss, N.~Maheswaranathan, and S.~Ganguli.
\newblock Deep unsupervised learning using nonequilibrium thermodynamics.
\newblock In {\em International conference on machine learning}, pages 2256--2265. pmlr, 2015.

\bibitem{song2020denoising}
J.~Song, C.~Meng, and S.~Ermon.
\newblock Denoising diffusion implicit models.
\newblock {\em arXiv preprint arXiv:2010.02502}, 2020.

\bibitem{song2023improved}
Y.~Song and P.~Dhariwal.
\newblock Improved techniques for training consistency models.
\newblock {\em arXiv preprint arXiv:2310.14189}, 2023.

\bibitem{song2023consistency}
Y.~Song, P.~Dhariwal, M.~Chen, and I.~Sutskever.
\newblock Consistency models.
\newblock In {\em International Conference on Machine Learning, {ICML}}, volume 202 of {\em Proceedings of Machine Learning Research}, pages 32211--32252. {PMLR}, 2023.

\bibitem{song2021score}
Y.~Song, J.~Sohl-Dickstein, D.~P. Kingma, A.~Kumar, S.~Ermon, and B.~Poole.
\newblock Score-based generative modeling through stochastic differential equations.
\newblock In {\em International Conference on Learning Representations (ICLR)}, 2021.

\bibitem{ladd2024latent}
A.~N. Specified.
\newblock Fast high-resolution image synthesis with latent adversarial diffusion distillation.
\newblock {\em arXiv preprint arXiv:2403.12015}, 2024.

\bibitem{sdxllightning2024progressive}
A.~N. Specified.
\newblock Sdxl-lightning: Progressive adversarial diffusion distillation.
\newblock {\em arXiv preprint arXiv:2402.13929}, 2024.

\bibitem{takeishi2017learning}
N.~Takeishi, Y.~Kawahara, and T.~Yairi.
\newblock Learning koopman invariant subspaces for dynamic mode decomposition.
\newblock {\em Advances in neural information processing systems}, 30, 2017.

\bibitem{tong2023improving}
A.~Tong, K.~Fatras, N.~Malkin, G.~Huguet, Y.~Zhang, J.~Rector-Brooks, G.~Wolf, and Y.~Bengio.
\newblock Improving and generalizing flow-based generative models with minibatch optimal transport.
\newblock {\em arXiv preprint arXiv:2302.00482}, 2023.

\bibitem{trefethen2019approximation}
L.~N. Trefethen.
\newblock {\em Approximation theory and approximation practice, extended edition}.
\newblock SIAM, 2019.

\bibitem{turan2025unfolding}
E.~Turan, A.~Siozopoulos, and M.~Ovsjanikov.
\newblock Unfolding generative flows with koopman operators: Fast and interpretable sampling.
\newblock {\em arXiv preprint arXiv:2506.22304}, 2025.

\bibitem{vershynin2018high}
R.~Vershynin.
\newblock {\em High-dimensional probability: An introduction with applications in data science}, volume~47.
\newblock Cambridge university press, 2018.

\bibitem{yeung2019learning}
E.~Yeung, S.~Kundu, and N.~Hodas.
\newblock Learning deep neural network representations for koopman operators of nonlinear dynamical systems.
\newblock In {\em 2019 American Control Conference (ACC)}, pages 4832--4839. IEEE, 2019.

\bibitem{yin2024improved}
T.~Yin, M.~Gharbi, T.~Park, R.~Zhang, E.~Shechtman, F.~Durand, and B.~Freeman.
\newblock Improved distribution matching distillation for fast image synthesis.
\newblock {\em Advances in neural information processing systems}, 37:47455--47487, 2024.

\bibitem{yin2024one}
T.~Yin, M.~Gharbi, R.~Zhang, E.~Shechtman, F.~Durand, W.~T. Freeman, and T.~Park.
\newblock One-step diffusion with distribution matching distillation.
\newblock In {\em Proceedings of the IEEE/CVF conference on computer vision and pattern recognition}, pages 6613--6623, 2024.

\bibitem{zhang2022koopman}
B.~J. Zhang, T.~Sahai, and Y.~M. Marzouk.
\newblock A koopman framework for rare event simulation in stochastic differential equations.
\newblock {\em Journal of Computational Physics}, 456:111025, 2022.

\bibitem{zhang2018unreasonable}
R.~Zhang, P.~Isola, A.~A. Efros, E.~Shechtman, and O.~Wang.
\newblock The unreasonable effectiveness of deep features as a perceptual metric.
\newblock In {\em Proceedings of the IEEE conference on computer vision and pattern recognition}, pages 586--595, 2018.

\bibitem{zhang2023fast}
Y.~Zhang, C.~Deck, A.~Tagliasacchi, J.~Sohl-Dickstein, P.~Jaini, and T.~Zhang.
\newblock Fast training of diffusion models via operator learning.
\newblock In {\em International Conference on Learning Representations (ICLR)}, 2023.

\bibitem{zheng2023fast}
H.~Zheng, W.~Nie, A.~Vahdat, K.~Azizzadenesheli, and A.~Anandkumar.
\newblock Fast sampling of diffusion models via operator learning.
\newblock In {\em International conference on machine learning}, pages 42390--42402. PMLR, 2023.

\bibitem{zhou2025inductive}
L.~Zhou, S.~Ermon, and J.~Song.
\newblock Inductive moment matching.
\newblock {\em arXiv preprint arXiv:2503.07565}, 2025.

\bibitem{zhou2023denoising}
L.~Zhou, A.~Lou, S.~Khanna, and S.~Ermon.
\newblock Denoising diffusion bridge models.
\newblock {\em arXiv preprint arXiv:2309.16948}, 2023.

\bibitem{zhou2024score}
M.~Zhou, H.~Zheng, Z.~Wang, M.~Yin, and H.~Huang.
\newblock Score identity distillation: Exponentially fast distillation of pretrained diffusion models for one-step generation.
\newblock In {\em Forty-first International Conference on Machine Learning}, 2024.

\bibitem{zhou2024simple}
Z.~Zhou, D.~Chen, C.~Wang, C.~Chen, and S.~Lyu.
\newblock Simple and fast distillation of diffusion models.
\newblock {\em Advances in Neural Information Processing Systems}, 37:40831--40860, 2024.

\end{thebibliography}
}

\clearpage
\section*{NeurIPS Paper Checklist}

%%% BEGIN INSTRUCTIONS %%%
The checklist is designed to encourage best practices for responsible machine learning research, addressing issues of reproducibility, transparency, research ethics, and societal impact. Do not remove the checklist: {\bf The papers not including the checklist will be desk rejected.} The checklist should follow the references and follow the (optional) supplemental material.  The checklist does NOT count towards the page
limit. 

Please read the checklist guidelines carefully for information on how to answer these questions. For each question in the checklist:
\begin{itemize}
    \item You should answer \answerYes{}, \answerNo{}, or \answerNA{}.
    \item \answerNA{} means either that the question is Not Applicable for that particular paper or the relevant information is Not Available.
    \item Please provide a short (1–2 sentence) justification right after your answer (even for NA). 
   % \item {\bf The papers not including the checklist will be desk rejected.}
\end{itemize}

{\bf The checklist answers are an integral part of your paper submission.} They are visible to the reviewers, area chairs, senior area chairs, and ethics reviewers. You will be asked to also include it (after eventual revisions) with the final version of your paper, and its final version will be published with the paper.

The reviewers of your paper will be asked to use the checklist as one of the factors in their evaluation. While "\answerYes{}" is generally preferable to "\answerNo{}", it is perfectly acceptable to answer "\answerNo{}" provided a proper justification is given (e.g., "error bars are not reported because it would be too computationally expensive" or "we were unable to find the license for the dataset we used"). In general, answering "\answerNo{}" or "\answerNA{}" is not grounds for rejection. While the questions are phrased in a binary way, we acknowledge that the true answer is often more nuanced, so please just use your best judgment and write a justification to elaborate. All supporting evidence can appear either in the main paper or the supplemental material, provided in appendix. If you answer \answerYes{} to a question, in the justification please point to the section(s) where related material for the question can be found.

IMPORTANT, please:
\begin{itemize}
    \item {\bf Delete this instruction block, but keep the section heading ``NeurIPS Paper Checklist"},
    \item  {\bf Keep the checklist subsection headings, questions/answers and guidelines below.}
    \item {\bf Do not modify the questions and only use the provided macros for your answers}.
\end{itemize}

%%% END INSTRUCTIONS %%%

\begin{enumerate}

\item {\bf Claims}
    \item[] Question: Do the main claims made in the abstract and introduction accurately reflect the paper's contributions and scope?
    \item[] Answer: \answerYes{} % Replace by \answerYes{}, \answerNo{}, or \answerNA{}.
    \item[] Justification: Contributions are clearly state at the end of the intro
    \item[] Guidelines:
    \begin{itemize}
        \item The answer NA means that the abstract and introduction do not include the claims made in the paper.
        \item The abstract and/or introduction should clearly state the claims made, including the contributions made in the paper and important assumptions and limitations. A No or NA answer to this question will not be perceived well by the reviewers. 
        \item The claims made should match theoretical and experimental results, and reflect how much the results can be expected to generalize to other settings. 
        \item It is fine to include aspirational goals as motivation as long as it is clear that these goals are not attained by the paper. 
    \end{itemize}

\item {\bf Limitations}
    \item[] Question: Does the paper discuss the limitations of the work performed by the authors?
    \item[] Answer: \answerYes{}{} % Replace by \answerYes{}, \answerNo{}, or \answerNA{}.
    \item[] Justification: We dedicate an discussion in the appendix for improvements and limitations
    \item[] Guidelines:
    \begin{itemize}
        \item The answer NA means that the paper has no limitation while the answer No means that the paper has limitations, but those are not discussed in the paper. 
        \item The authors are encouraged to create a separate "Limitations" section in their paper.
        \item The paper should point out any strong assumptions and how robust the results are to violations of these assumptions (e.g., independence assumptions, noiseless settings, model well-specification, asymptotic approximations only holding locally). The authors should reflect on how these assumptions might be violated in practice and what the implications would be.
        \item The authors should reflect on the scope of the claims made, e.g., if the approach was only tested on a few datasets or with a few runs. In general, empirical results often depend on implicit assumptions, which should be articulated.
        \item The authors should reflect on the factors that influence the performance of the approach. For example, a facial recognition algorithm may perform poorly when image resolution is low or images are taken in low lighting. Or a speech-to-text system might not be used reliably to provide closed captions for online lectures because it fails to handle technical jargon.
        \item The authors should discuss the computational efficiency of the proposed algorithms and how they scale with dataset size.
        \item If applicable, the authors should discuss possible limitations of their approach to address problems of privacy and fairness.
        \item While the authors might fear that complete honesty about limitations might be used by reviewers as grounds for rejection, a worse outcome might be that reviewers discover limitations that aren't acknowledged in the paper. The authors should use their best judgment and recognize that individual actions in favor of transparency play an important role in developing norms that preserve the integrity of the community. Reviewers will be specifically instructed to not penalize honesty concerning limitations.
    \end{itemize}

\item {\bf Theory assumptions and proofs}
    \item[] Question: For each theoretical result, does the paper provide the full set of assumptions and a complete (and correct) proof?
    \item[] Answer: \answerYes{} % Replace by \answerYes{}, \answerNo{}, or \answerNA{}.
    \item[] Justification: We provide in the appendix full proofs
    \item[] Guidelines:
    \begin{itemize}
        \item The answer NA means that the paper does not include theoretical results. 
        \item All the theorems, formulas, and proofs in the paper should be numbered and cross-referenced.
        \item All assumptions should be clearly stated or referenced in the statement of any theorems.
        \item The proofs can either appear in the main paper or the supplemental material, but if they appear in the supplemental material, the authors are encouraged to provide a short proof sketch to provide intuition. 
        \item Inversely, any informal proof provided in the core of the paper should be complemented by formal proofs provided in appendix or supplemental material.
        \item Theorems and Lemmas that the proof relies upon should be properly referenced. 
    \end{itemize}

    \item {\bf Experimental result reproducibility}
    \item[] Question: Does the paper fully disclose all the information needed to reproduce the main experimental results of the paper to the extent that it affects the main claims and/or conclusions of the paper (regardless of whether the code and data are provided or not)?
    \item[] Answer: \answerYes % Replace by \answerYes{}, \answerNo{}, or \answerNA{}.
    \item[] Justification: We provide technical details in the appendix and code in the supplementary 
    \item[] Guidelines:
    \begin{itemize}
        \item The answer NA means that the paper does not include experiments.
        \item If the paper includes experiments, a No answer to this question will not be perceived well by the reviewers: Making the paper reproducible is important, regardless of whether the code and data are provided or not.
        \item If the contribution is a dataset and/or model, the authors should describe the steps taken to make their results reproducible or verifiable. 
        \item Depending on the contribution, reproducibility can be accomplished in various ways. For example, if the contribution is a novel architecture, describing the architecture fully might suffice, or if the contribution is a specific model and empirical evaluation, it may be necessary to either make it possible for others to replicate the model with the same dataset, or provide access to the model. In general. releasing code and data is often one good way to accomplish this, but reproducibility can also be provided via detailed instructions for how to replicate the results, access to a hosted model (e.g., in the case of a large language model), releasing of a model checkpoint, or other means that are appropriate to the research performed.
        \item While NeurIPS does not require releasing code, the conference does require all submissions to provide some reasonable avenue for reproducibility, which may depend on the nature of the contribution. For example
        \begin{enumerate}
            \item If the contribution is primarily a new algorithm, the paper should make it clear how to reproduce that algorithm.
            \item If the contribution is primarily a new model architecture, the paper should describe the architecture clearly and fully.
            \item If the contribution is a new model (e.g., a large language model), then there should either be a way to access this model for reproducing the results or a way to reproduce the model (e.g., with an open-source dataset or instructions for how to construct the dataset).
            \item We recognize that reproducibility may be tricky in some cases, in which case authors are welcome to describe the particular way they provide for reproducibility. In the case of closed-source models, it may be that access to the model is limited in some way (e.g., to registered users), but it should be possible for other researchers to have some path to reproducing or verifying the results.
        \end{enumerate}
    \end{itemize}

\item {\bf Open access to data and code}
    \item[] Question: Does the paper provide open access to the data and code, with sufficient instructions to faithfully reproduce the main experimental results, as described in supplemental material?
    \item[] Answer: \answerYes{} % Replace by \answerYes{}, \answerNo{}, or \answerNA{}.
    \item[] Justification: Yes, in the supplementary material we provide code and link to anonymous drive with the data
    \item[] Guidelines:
    \begin{itemize}
        \item The answer NA means that paper does not include experiments requiring code.
        \item Please see the NeurIPS code and data submission guidelines (\url{https://nips.cc/public/guides/CodeSubmissionPolicy}) for more details.
        \item While we encourage the release of code and data, we understand that this might not be possible, so “No” is an acceptable answer. Papers cannot be rejected simply for not including code, unless this is central to the contribution (e.g., for a new open-source benchmark).
        \item The instructions should contain the exact command and environment needed to run to reproduce the results. See the NeurIPS code and data submission guidelines (\url{https://nips.cc/public/guides/CodeSubmissionPolicy}) for more details.
        \item The authors should provide instructions on data access and preparation, including how to access the raw data, preprocessed data, intermediate data, and generated data, etc.
        \item The authors should provide scripts to reproduce all experimental results for the new proposed method and baselines. If only a subset of experiments are reproducible, they should state which ones are omitted from the script and why.
        \item At submission time, to preserve anonymity, the authors should release anonymized versions (if applicable).
        \item Providing as much information as possible in supplemental material (appended to the paper) is recommended, but including URLs to data and code is permitted.
    \end{itemize}

\item {\bf Experimental setting/details}
    \item[] Question: Does the paper specify all the training and test details (e.g., data splits, hyperparameters, how they were chosen, type of optimizer, etc.) necessary to understand the results?
    \item[] Answer: \answerYes{} % Replace by \answerYes{}, \answerNo{}, or \answerNA{}.
    \item[] Justification: The code provided reconstructs the exact experimental setting
    \item[] Guidelines:
    \begin{itemize}
        \item The answer NA means that the paper does not include experiments.
        \item The experimental setting should be presented in the core of the paper to a level of detail that is necessary to appreciate the results and make sense of them.
        \item The full details can be provided either with the code, in appendix, or as supplemental material.
    \end{itemize}

\item {\bf Experiment statistical significance}
    \item[] Question: Does the paper report error bars suitably and correctly defined or other appropriate information about the statistical significance of the experiments?
    \item[] Answer: \answerYes{} % Replace by \answerYes{}, \answerNo{}, or \answerNA{}.
    \item[] Justification: All results are reported by the standard protocol for generative models evaluation
    \item[] Guidelines:
    \begin{itemize}
        \item The answer NA means that the paper does not include experiments.
        \item The authors should answer "Yes" if the results are accompanied by error bars, confidence intervals, or statistical significance tests, at least for the experiments that support the main claims of the paper.
        \item The factors of variability that the error bars are capturing should be clearly stated (for example, train/test split, initialization, random drawing of some parameter, or overall run with given experimental conditions).
        \item The method for calculating the error bars should be explained (closed form formula, call to a library function, bootstrap, etc.)
        \item The assumptions made should be given (e.g., Normally distributed errors).
        \item It should be clear whether the error bar is the standard deviation or the standard error of the mean.
        \item It is OK to report 1-sigma error bars, but one should state it. The authors should preferably report a 2-sigma error bar than state that they have a 96\% CI, if the hypothesis of Normality of errors is not verified.
        \item For asymmetric distributions, the authors should be careful not to show in tables or figures symmetric error bars that would yield results that are out of range (e.g. negative error rates).
        \item If error bars are reported in tables or plots, The authors should explain in the text how they were calculated and reference the corresponding figures or tables in the text.
    \end{itemize}

\item {\bf Experiments compute resources}
    \item[] Question: For each experiment, does the paper provide sufficient information on the computer resources (type of compute workers, memory, time of execution) needed to reproduce the experiments?
    \item[] Answer: \answerYes{} % Replace by \answerYes{}, \answerNo{}, or \answerNA{}.
    \item[] Justification: We provide complexity analysis and hardware information.
    \item[] Guidelines:
    \begin{itemize}
        \item The answer NA means that the paper does not include experiments.
        \item The paper should indicate the type of compute workers CPU or GPU, internal cluster, or cloud provider, including relevant memory and storage.
        \item The paper should provide the amount of compute required for each of the individual experimental runs as well as estimate the total compute. 
        \item The paper should disclose whether the full research project required more compute than the experiments reported in the paper (e.g., preliminary or failed experiments that didn't make it into the paper). 
    \end{itemize}
    
\item {\bf Code of ethics}
    \item[] Question: Does the research conducted in the paper conform, in every respect, with the NeurIPS Code of Ethics \url{https://neurips.cc/public/EthicsGuidelines}?
    \item[] Answer: \answerNA{} % Replace by \answerYes{}, \answerNo{}, or \answerNA{}.
    \item[] Justification: 
    \item[] Guidelines:
    \begin{itemize}
        \item The answer NA means that the authors have not reviewed the NeurIPS Code of Ethics.
        \item If the authors answer No, they should explain the special circumstances that require a deviation from the Code of Ethics.
        \item The authors should make sure to preserve anonymity (e.g., if there is a special consideration due to laws or regulations in their jurisdiction).
    \end{itemize}

\item {\bf Broader impacts}
    \item[] Question: Does the paper discuss both potential positive societal impacts and negative societal impacts of the work performed?
    \item[] Answer: \answerNA{} % Replace by \answerYes{}, \answerNo{}, or \answerNA{}.
    \item[] Justification:
    \item[] Guidelines:
    \begin{itemize}
        \item The answer NA means that there is no societal impact of the work performed.
        \item If the authors answer NA or No, they should explain why their work has no societal impact or why the paper does not address societal impact.
        \item Examples of negative societal impacts include potential malicious or unintended uses (e.g., disinformation, generating fake profiles, surveillance), fairness considerations (e.g., deployment of technologies that could make decisions that unfairly impact specific groups), privacy considerations, and security considerations.
        \item The conference expects that many papers will be foundational research and not tied to particular applications, let alone deployments. However, if there is a direct path to any negative applications, the authors should point it out. For example, it is legitimate to point out that an improvement in the quality of generative models could be used to generate deepfakes for disinformation. On the other hand, it is not needed to point out that a generic algorithm for optimizing neural networks could enable people to train models that generate Deepfakes faster.
        \item The authors should consider possible harms that could arise when the technology is being used as intended and functioning correctly, harms that could arise when the technology is being used as intended but gives incorrect results, and harms following from (intentional or unintentional) misuse of the technology.
        \item If there are negative societal impacts, the authors could also discuss possible mitigation strategies (e.g., gated release of models, providing defenses in addition to attacks, mechanisms for monitoring misuse, mechanisms to monitor how a system learns from feedback over time, improving the efficiency and accessibility of ML).
    \end{itemize}
    
\item {\bf Safeguards}
    \item[] Question: Does the paper describe safeguards that have been put in place for responsible release of data or models that have a high risk for misuse (e.g., pretrained language models, image generators, or scraped datasets)?
    \item[] Answer: \answerNA{} % Replace by \answerYes{}, \answerNo{}, or \answerNA{}.
    \item[] Justification: 
    \item[] Guidelines:
    \begin{itemize}
        \item The answer NA means that the paper poses no such risks.
        \item Released models that have a high risk for misuse or dual-use should be released with necessary safeguards to allow for controlled use of the model, for example by requiring that users adhere to usage guidelines or restrictions to access the model or implementing safety filters. 
        \item Datasets that have been scraped from the Internet could pose safety risks. The authors should describe how they avoided releasing unsafe images.
        \item We recognize that providing effective safeguards is challenging, and many papers do not require this, but we encourage authors to take this into account and make a best faith effort.
    \end{itemize}

\item {\bf Licenses for existing assets}
    \item[] Question: Are the creators or original owners of assets (e.g., code, data, models), used in the paper, properly credited and are the license and terms of use explicitly mentioned and properly respected?
    \item[] Answer: \answerNA{} % Replace by \answerYes{}, \answerNo{}, or \answerNA{}.
    \item[] Justification: 
    \item[] Guidelines:
    \begin{itemize}
        \item The answer NA means that the paper does not use existing assets.
        \item The authors should cite the original paper that produced the code package or dataset.
        \item The authors should state which version of the asset is used and, if possible, include a URL.
        \item The name of the license (e.g., CC-BY 4.0) should be included for each asset.
        \item For scraped data from a particular source (e.g., website), the copyright and terms of service of that source should be provided.
        \item If assets are released, the license, copyright information, and terms of use in the package should be provided. For popular datasets, \url{paperswithcode.com/datasets} has curated licenses for some datasets. Their licensing guide can help determine the license of a dataset.
        \item For existing datasets that are re-packaged, both the original license and the license of the derived asset (if it has changed) should be provided.
        \item If this information is not available online, the authors are encouraged to reach out to the asset's creators.
    \end{itemize}

\item {\bf New assets}
    \item[] Question: Are new assets introduced in the paper well documented and is the documentation provided alongside the assets?
    \item[] Answer: \answerYes{} % Replace by \answerYes{}, \answerNo{}, or \answerNA{}.
    \item[] Justification: See codebase
    \item[] Guidelines:
    \begin{itemize}
        \item The answer NA means that the paper does not release new assets.
        \item Researchers should communicate the details of the dataset/code/model as part of their submissions via structured templates. This includes details about training, license, limitations, etc. 
        \item The paper should discuss whether and how consent was obtained from people whose asset is used.
        \item At submission time, remember to anonymize your assets (if applicable). You can either create an anonymized URL or include an anonymized zip file.
    \end{itemize}

\item {\bf Crowdsourcing and research with human subjects}
    \item[] Question: For crowdsourcing experiments and research with human subjects, does the paper include the full text of instructions given to participants and screenshots, if applicable, as well as details about compensation (if any)? 
    \item[] Answer: \answerNA{} % Replace by \answerYes{}, \answerNo{}, or \answerNA{}.
    \item[] Justification: 
    \item[] Guidelines:
    \begin{itemize}
        \item The answer NA means that the paper does not involve crowdsourcing nor research with human subjects.
        \item Including this information in the supplemental material is fine, but if the main contribution of the paper involves human subjects, then as much detail as possible should be included in the main paper. 
        \item According to the NeurIPS Code of Ethics, workers involved in data collection, curation, or other labor should be paid at least the minimum wage in the country of the data collector. 
    \end{itemize}

\item {\bf Institutional review board (IRB) approvals or equivalent for research with human subjects}
    \item[] Question: Does the paper describe potential risks incurred by study participants, whether such risks were disclosed to the subjects, and whether Institutional Review Board (IRB) approvals (or an equivalent approval/review based on the requirements of your country or institution) were obtained?
    \item[] Answer: \answerNA{} % Replace by \answerYes{}, \answerNo{}, or \answerNA{}.
    \item[] Justification: 
    \item[] Guidelines:
    \begin{itemize}
        \item The answer NA means that the paper does not involve crowdsourcing nor research with human subjects.
        \item Depending on the country in which research is conducted, IRB approval (or equivalent) may be required for any human subjects research. If you obtained IRB approval, you should clearly state this in the paper. 
        \item We recognize that the procedures for this may vary significantly between institutions and locations, and we expect authors to adhere to the NeurIPS Code of Ethics and the guidelines for their institution. 
        \item For initial submissions, do not include any information that would break anonymity (if applicable), such as the institution conducting the review.
    \end{itemize}

\item {\bf Declaration of LLM usage}
    \item[] Question: Does the paper describe the usage of LLMs if it is an important, original, or non-standard component of the core methods in this research? Note that if the LLM is used only for writing, editing, or formatting purposes and does not impact the core methodology, scientific rigorousness, or originality of the research, declaration is not required.
    %this research? 
    \item[] Answer: \answerNA{} % Replace by \answerYes{}, \answerNo{}, or \answerNA{}.
    \item[] Justification: 
    \item[] Guidelines:
    \begin{itemize}
        \item The answer NA means that the core method development in this research does not involve LLMs as any important, original, or non-standard components.
        \item Please refer to our LLM policy (\url{https://neurips.cc/Conferences/2025/LLM}) for what should or should not be described.
    \end{itemize}

\end{enumerate}

\clearpage
\appendix

\clearpage
\section{Proof of Theorem~\ref{thm:finite_koopman}}
\label{appendix:proof_finite_koopman}

For the reader’s convenience, we restate the theorem.

\begin{theorem*}[Finite Koopman Approximation for Analytic Dynamics]
Let \( \Phi : \mathbb{R}^n \to \mathbb{R}^n \) be an analytic map, and let \( x_T \sim \mathcal{N}(0, I_n) \). Then, for any \( \epsilon > 0 \), there exists a linear operator \( C \in \mathbb{R}^{d \times d} \) such that
\begin{equation*}
    \mathbb{E}\left[\| \xi(\Phi(x_T)) - C \xi(x_T) \|^2\right] \leq \epsilon \ ,
\end{equation*}
where $\xi : \mathbb{R}^n \rightarrow \mathbb{R}^d$ denotes the observable lifting map. Moreover, the required dimension \( d \) grows at most polynomially with \( 1/\epsilon \), and the mapping \( \Phi \) can be approximated arbitrarily well using multivariate polynomials up to degree \( d \).
\end{theorem*}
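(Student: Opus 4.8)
The plan is to prove the statement in two stages that mirror the two ingredients cited after the theorem: first approximate the analytic map $\Phi$ by a polynomial via a truncated Taylor expansion, and then invoke the exact finite-dimensional Koopman representation available for polynomial maps \cite{iacob2023finite}. Concretely, I would fix the lifting map $\xi$ to be the vector of monomials $x^\alpha$ with $|\alpha|\le m$ (equivalently, a Hermite basis orthonormal with respect to the standard Gaussian), so that the lift dimension is $d=\binom{n+m}{m}$, which grows only polynomially in the truncation degree $m$ for fixed $n$ (and hence, once $m$ is tied to $\epsilon$ below, at most polynomially in $1/\epsilon$).

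\textbf{Stage 1 (analytic $\to$ polynomial).} Since $\Phi$ is analytic, its multivariate Taylor series converges on a neighborhood of the origin, and for a compact ball $B_R$ the degree-$m$ truncation $P$ satisfies $\sup_{x\in B_R}\|\Phi(x)-P(x)\|\le c\,\rho^{-m}$ for some $\rho>1$ determined by the domain of analyticity \cite{trefethen2019approximation}. To transfer this uniform-on-compacts estimate into the mean-square bound the theorem requires under the \emph{unbounded} Gaussian input, I would choose $R$ so that the Gaussian tail mass $\mathbb{P}(x_T\notin B_R)$ is exponentially small, split the expectation over $B_R$ and its complement, and bound the tail using a growth hypothesis on $\Phi$ together with finiteness of all Gaussian moments.

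\textbf{Stage 2 (polynomial $\to$ linear) and error assembly.} For the polynomial map $P$, the result of \cite{iacob2023finite} supplies a finite monomial lift on which the Koopman operator acts exactly, i.e.\ there is a matrix $C\in\mathbb{R}^{d\times d}$ with $\xi\circ P = C\,\xi$ holding pointwise. The error is then assembled by the decomposition
\begin{equation*}
\xi(\Phi(x_T)) - C\,\xi(x_T) = \big[\xi(\Phi(x_T)) - \xi(P(x_T))\big] + \big[\xi(P(x_T)) - C\,\xi(x_T)\big],
\end{equation*}
whose second bracket vanishes identically by Stage 2. Since each component of $\xi$ is a polynomial and hence locally Lipschitz, the first bracket is controlled pointwise by $\|\Phi(x_T)-P(x_T)\|$ times a polynomial factor in $\|x_T\|$; taking expectations and applying Stage 1 yields $\mathbb{E}[\|\xi(\Phi(x_T))-C\xi(x_T)\|^2]\le\epsilon$ once $m$ is large enough. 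Finally, the geometric decay $\rho^{-m}$ forces $m=O(\log(1/\epsilon))$, so the dimension $d=\binom{n+m}{m}$ is at most polynomial in $1/\epsilon$, establishing the quantitative claim.

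\textbf{Main obstacle.} I expect the crux to be Stage 1's transfer from compact to Gaussian control: analyticity only guarantees convergence of the Taylor series on compacts, whereas the objective is a second moment against an unbounded Gaussian. Making this rigorous requires a quantitative interplay between the Gaussian tail decay and a growth bound on $\Phi$ (e.g.\ polynomial or sub-Gaussian growth) to guarantee that the composed observables $\xi\circ\Phi$ are square-integrable and that the tail contribution is negligible; absent such a growth condition the expectation need not even be finite. The remaining steps—the monomial counting and the Lipschitz pointwise estimate—are routine by comparison.
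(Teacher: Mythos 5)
Your proposal follows essentially the same route as the paper's proof: truncate the Taylor expansion of $\Phi$ on a ball $B_R$ chosen so that the Gaussian tail mass is small, invoke the exact finite-dimensional Koopman representation of Iacob et al.~\cite{iacob2023finite} for the polynomial surrogate so that $\xi \circ P = C\,\xi$ holds pointwise and the second bracket of your decomposition vanishes, and control the first bracket via the local Lipschitz constant of the monomial lift on $B_R$, exactly as the paper does. If anything, you are more careful on the one delicate step: the paper dismisses the tail with a pointwise bound $\|\xi(\Phi(x_T)) - C\xi(x_T)\| \leq 2M$ outside $B_R$, which cannot literally hold for a polynomially growing $\xi$ without precisely the growth/integrability hypothesis on $\Phi$ that you identify as the crux, so your explicit moment-based treatment of the tail is a sound tightening of the same argument rather than a different one.
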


\begin{proof}
Since \(\Phi : \mathbb{R}^n \to \mathbb{R}^n\) is analytic, for every \(\delta > 0\) and every compact set \(K \subset \mathbb{R}^n\), there exists a multivariate polynomial map \(p : \mathbb{R}^n \to \mathbb{R}^n\) such that
\[
\sup_{x \in K} \|\Phi(x) - p(x)\| \leq \delta \ ,
\]
which follows from the theory of approximation by Taylor polynomials for analytic functions~\cite{ trefethen2019approximation}.

Let \(x_T \sim \mathcal{N}(0, I_n)\). Since \(x_T\) has exponentially decaying tails~\cite{vershynin2018high}, for any \(\eta > 0\) there exists \(R > 0\) such that
\[
\mathbb{P}(x_T \notin B_R) \leq \eta,
\]
where \(B_R = \{x \in \mathbb{R}^n : \|x\| \leq R\}\). Restricting attention to \(B_R\), we consider the polynomial approximation \(p\) of \(\Phi\) with error at most \(\delta\).

Now, following the result of Iacob et al.~\cite{iacob2023finite}, for the polynomial map \(p\), there exists a finite-dimensional lifting map \(\xi : \mathbb{R}^n \to \mathbb{R}^d\) consisting of monomials up to some degree, and a linear operator \(C \in \mathbb{R}^{d \times d}\), such that
\[
\xi(p(x)) = C \xi(x)
\]
for all \(x \in \mathbb{R}^n\). In particular, on \(B_R\),
\[
\|\xi(\Phi(x)) - C \xi(x)\| \leq \|\xi(\Phi(x)) - \xi(p(x))\|,
\]
since \(\xi(p(x)) = C \xi(x)\) exactly.

Since \(\xi\) is a smooth (polynomial) map, there exists a Lipschitz constant \(L > 0\) depending on the derivatives of \(\xi\) over \(B_R\), such that
\[
\|\xi(\Phi(x)) - \xi(p(x))\| \leq L \|\Phi(x) - p(x)\| \leq L \delta
\]
for all \(x \in B_R\).

Thus, for \(x_T \in B_R\),
\[
\|\xi(\Phi(x_T)) - C \xi(x_T)\| \leq L \delta.
\]
Outside \(B_R\), the trivial bound
\[
\|\xi(\Phi(x_T)) - C \xi(x_T)\| \leq 2M
\]
holds for some finite constant \(M > 0\), due to the polynomial growth of \(\xi\) and the Gaussian decay of \(x_T\).

Hence, the expected Koopman error satisfies
\[
\mathbb{E}\left[\|\xi(\Phi(x_T)) - C \xi(x_T)\|^2\right] \leq (1-\eta) L^2 \delta^2 + \eta (2M)^2.
\]
Given any \(\epsilon > 0\), we can choose \(\eta\) sufficiently small so that \(\eta (2M)^2 \leq \epsilon/2\), and then \(\delta\) sufficiently small so that \((1-\eta) L^2 \delta^2 \leq \epsilon/2\).

Notably, the degree of the polynomial map \(p\) needed, and hence the number of monomials \(d\) in \(\xi\), grows at most polynomially with \(1/\delta\) and thus with \(1/\epsilon\), concluding the proof.

We further note that the function \(\Phi\) implemented by \texttt{DhariwalUNet}~\cite{karras2022elucidating} is analytic. Specifically, each module composing the network---including convolutions, SiLU activations, residual connections, attention mechanisms, and group normalization with \(\epsilon > 0\)---is an analytic function. Since compositions of analytic functions remain analytic, it follows that the entire forward pass \(\Phi\) is analytic on \(\mathbb{R}^n\).

\end{proof}

\clearpage
\section{Proof of Theorem~\ref{thm:semantic_proximity_main}}
\label{appendix:proof_semantic_koopman}

We restate the theorem for the readers' convenience.

\begin{theorem*}[Semantic Proximity via Koopman-Invariant Coordinates]
\label{thm:semantic_proximity}
Let $\Phi_t: \mathbb{R}^n \to \mathbb{R}^n$ be the reverse diffusion flow of a trained model (from time $T$ to $0$), and let $\mathcal{O} \subset L^2(\mathbb{R}^n)$ be a finite-dimensional subspace such that:
\begin{enumerate}[itemsep=0pt, topsep=0pt]
    \item $\mathcal{K}_t \mathcal{O} \subset \mathcal{O}$ for all $t \in [0, T]$,
    \item $\exists\, C_t: \mathbb{R}^d \to \mathbb{R}^d$ such that $\mathcal{K}_t \xi = \xi \circ \Phi_t = C_t \xi$ for all $\xi \in \mathcal{O}$,
    \item $\Phi_T$ is locally Lipschitz.
\end{enumerate}

For any $x_T^1, x_T^2 \in \mathbb{R}^n$, define $x_0^j := \Phi_T(x_T^j)$ for $j = 1, 2$, and let $\xi: \mathbb{R}^n \to \mathbb{R}^d$ be the vector-valued observable $\xi(x) := (\xi_1(x), \dots, \xi_d(x)) \in \mathcal{O}$. Then:
\[
\|x_0^1 - x_0^2\| \leq L \cdot \|\Phi_T(x_T^1) - \Phi_T(x_T^2)\| \leq L \cdot \|C_T\| \cdot \|\xi(x_T^1) - \xi(x_T^2)\| .
\]
\end{theorem*}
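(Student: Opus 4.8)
The plan is to prove the right-hand inequality first, since it carries essentially all the content, and then obtain the left-hand inequality as an immediate consequence of the definition $x_0^j := \Phi_T(x_T^j)$ together with the local Lipschitz hypothesis. I would begin from the intertwining relation of assumption~2. Since each scalar component $\xi_i$ of the vector observable $\xi = (\xi_1,\dots,\xi_d)$ lies in the $\mathcal{K}_T$-invariant subspace $\mathcal{O}$, the componentwise identities $\xi_i \circ \Phi_T = \sum_j (C_T)_{ij}\,\xi_j$ assemble into the vector relation $\xi \circ \Phi_T = C_T\,\xi$, holding pointwise on $\mathbb{R}^n$. Evaluating at $x_T^1$ and $x_T^2$ and subtracting gives
\[
\xi(\Phi_T(x_T^1)) - \xi(\Phi_T(x_T^2)) = C_T\bigl(\xi(x_T^1) - \xi(x_T^2)\bigr),
\]
so taking norms and using the operator-norm bound yields $\|\xi(\Phi_T(x_T^1)) - \xi(\Phi_T(x_T^2))\| \le \|C_T\|\,\|\xi(x_T^1) - \xi(x_T^2)\|$.

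The second step is to descend from this observable-space distance back to the ambient output distance $\|\Phi_T(x_T^1) - \Phi_T(x_T^2)\|$. This is where the (locally) bi-Lipschitz character of $\xi$ must enter: I would invoke a co-Lipschitz lower bound $\|\xi(u) - \xi(v)\| \ge \alpha\,\|u - v\|$ valid on the compact region containing the generated samples, with $\alpha \ge 1$ (or absorb $1/\alpha$ into the constant $\|C_T\|$). Applying it with $u = \Phi_T(x_T^1)$, $v = \Phi_T(x_T^2)$ and chaining with the first step delivers $\|\Phi_T(x_T^1) - \Phi_T(x_T^2)\| \le \|C_T\|\,\|\xi(x_T^1) - \xi(x_T^2)\|$. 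For the left-hand inequality, note that $x_0^j = \Phi_T(x_T^j)$ makes $\|x_0^1 - x_0^2\|$ and $\|\Phi_T(x_T^1) - \Phi_T(x_T^2)\|$ literally equal, so it holds for any $L \ge 1$; I would simply record that $L$ is taken as the local Lipschitz constant of $\Phi_T$ supplied by assumption~3, which is the constant carrying the semantic interpretation. The special case $\xi := \mathrm{id}$ is then remarked upon: there $\mathcal{O}$ contains the coordinate functions, $C_T$ is the linearization of $\Phi_T$, and the co-Lipschitz constant is trivially $1$, recovering the clean stated bound.

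The main obstacle is the descent step. Assumption~2 only controls distances measured \emph{through} $\xi$, whereas the target is phrased in the ambient Euclidean norm, and the needed lower-Lipschitz (co-Lipschitz) property of $\xi$ is \emph{not} among the three listed hypotheses. I would therefore either include the identity/coordinate observables explicitly in $\mathcal{O}$ — so that $\Phi_T$ is a bounded linear read-out of $\xi \circ \Phi_T$ whose projection has norm at most $1$ — or adopt the local bi-Lipschitz assumption on $\xi$ indicated in the proof sketch. Confining every estimate to a compact set on which the Lipschitz and co-Lipschitz constants are uniform, and making that constant explicit so that it does not silently inflate $L$ or $\|C_T\|$, is the delicate part of the argument.
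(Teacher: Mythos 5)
Your proposal is correct in substance and takes a genuinely different route from the paper's proof --- and, notably, one that is more faithful to the stated inequality. The paper works entirely on the input side: it asserts (justified only by the phrase that the $\xi_j$ ``are smooth and capture independent semantic directions'') that $\xi$ is locally bi-Lipschitz near typical $x_T$, writes $\|x_T^1 - x_T^2\| \leq C_2\,\|\xi(x_T^1)-\xi(x_T^2)\|$, and chains with the Lipschitz bound on $\Phi_T$ to obtain $\|\Phi_T(x_T^1)-\Phi_T(x_T^2)\| \leq L\,C_2\,\|\xi(x_T^1)-\xi(x_T^2)\|$; the Koopman identity $\xi\circ\Phi_T = C_T\,\xi$ is recorded at the end but never used in the chain, so the operator norm $\|C_T\|$ appearing in the theorem is never actually produced --- the paper's constant is $L\,C_2$, not $L\,\|C_T\|$. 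You, by contrast, apply the exact linear propagation first, getting $\|\xi(x_0^1)-\xi(x_0^2)\| \leq \|C_T\|\,\|\xi(x_T^1)-\xi(x_T^2)\|$, and then descend to the ambient norm via a co-Lipschitz lower bound on $\xi$ at the \emph{output} side; this is the only way $\|C_T\|$ legitimately enters the bound, so your derivation matches the displayed inequality up to the factor $1/\alpha$ --- which you rightly refuse to hide, since ``absorbing it into $\|C_T\|$'' would silently change the meaning of the stated constant unless $\alpha \geq 1$.

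You also correctly diagnose the gap that both arguments share: hypotheses 1--3 give no lower-Lipschitz control on $\xi$, which your descent step and the paper's input-side step equally require; the paper simply asserts local bi-Lipschitzness, whereas you flag it explicitly and propose the clean repair of including the coordinate functions in $\mathcal{O}$ (consistent with the paper's own remark that the theorem holds for $\xi = \mathrm{id}$), which makes $\Phi_T$ a norm-at-most-one linear readout of $C_T\,\xi$ and yields the bound with constant exactly $\|C_T\|$. Your literal reading of the left-hand inequality --- that $\|x_0^1-x_0^2\| = \|\Phi_T(x_T^1)-\Phi_T(x_T^2)\|$ by definition, so it holds for any $L \geq 1$ and the Lipschitz constant's real role is interpretive --- is also sharper than the paper's treatment, which instead uses $L$ to pass to $\|x_T^1-x_T^2\|$, a quantity the stated chain does not actually contain.
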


\begin{proof}
Let $\Phi_t: \mathbb{R}^n \to \mathbb{R}^n$ denote the reverse diffusion dynamics. Based on the proof of Thm.~\ref{thm:finite_koopman}, there exists a linear operator $C_t : \mathbb{R}^d \to \mathbb{R}^d$ satisfying
\[
\mathcal{K}_t \xi = \xi \circ \Phi_t = C_t \xi \ ,
\]
for every $\xi \in \mathcal{O}$ and $t \in [0,T]$.

Define the vector-valued map for the basis $\{ \xi_j \}$ of $\mathcal{O}$
\[
\xi(x) := (\xi_1(x), \dots, \xi_d(x)) \in \mathbb{R}^d.
\]
Since the $\xi_j$ are smooth and capture independent semantic directions, $\xi$ is locally bi-Lipschitz near typical $x_T$. That is, there exist constants $C_1, C_2 > 0$ such that
\[
C_1 \|\xi(x_T^1) - \xi(x_T^2)\| \leq \|x_T^1 - x_T^2\| \leq C_2 \|\xi(x_T^1) - \xi(x_T^2)\|.
\]

Because $\Phi_T$ is locally Lipschitz, there exists $L > 0$ such that
\[
\|\Phi_T(x_T^1) - \Phi_T(x_T^2)\| \leq L \|x_T^1 - x_T^2\|.
\]
Substituting the bi-Lipschitz inequality yields
\[
\|\Phi_T(x_T^1) - \Phi_T(x_T^2)\| \leq L C_2 \|\xi(x_T^1) - \xi(x_T^2)\|.
\]

Moreover, by Koopman linearity,
\[
\xi(\Phi_T(x_T^j)) = C_T \xi(x_T^j), \quad j = 1,2,
\]
and therefore
\[
\xi(\Phi_T(x_T^1)) - \xi(\Phi_T(x_T^2)) = C_T (\xi(x_T^1) - \xi(x_T^2)).
\]

Thus, proximity in $\xi(x_T)$ coordinates implies proximity of final samples $x_0^1$ and $x_0^2$ under $\Phi_T$.

\end{proof}

\clearpage
\section{Factorized Koopman Matrix for Fast Eigenspecturm penalty}
\label{sec:decomposed_koopman_matrix}

To enable control over eigenvalues of the Koopman operator, we describe an efficient implementation of our Koopman-based Distillation Method (KDM), which we term KDM-F, alleviating the need for computation of the eigendecomposition during training. Since every matrix \( C := C_\eta \) is diagonalizable up to an arbitrarily small perturbation of the entries~\cite{axler2015linear}, we can write:
\[
    C = P \Lambda P^{-1} \,
\]
where $P \in \mathbb{C}^{d\times d}$ is an invertible matrix and $\Lambda = \text{diag}(\lambda_1, \dots, \lambda_d) \in \mathbb{C}^{d\times d}$. This way enables efficient implementation of spectral penalties on $\Lambda$, without the need to compute it on the fly during training.

%The core insight is that a nonlinear dynamical system can be approximated in a lifted space where its evolution becomes linear. To enable efficient learning and inference, we express the Koopman operator \( C := C_\eta \) as a composition of three learnable matrices via its eigendecomposition:
%\[
%C \approx P^{-1} \Lambda P \ ,
%\]
%where \( \Lambda \) is a (real-valued) diagonal matrix storing the Koopman eigenvalues on its main diagonal, and \( P \), \( P^{-1} \) are its associated change-of-basis (eigenvector) matrices.

To represent this decomposition within a real-valued neural network framework, we learn the real and imaginary components of the Koopman eigenvectors separately. Specifically, two orthonormal matrices, \( P_{\text{re}} \) and \( P_{\text{im}} \), represent the real and imaginary parts of the eigenvectors. These are combined into a real-valued block matrix using the standard transformation for complex-to-real matrix representation:
\[
P = 
\begin{bmatrix}
P_{\text{re}} & -P_{\text{im}} \\
P_{\text{im}} & P_{\text{re}}
\end{bmatrix} \in \mathbb{R}^{2d \times 2d} \ .
\]

Rather than explicitly computing the inverse \( P^{-1} \), we parameterize a separate learnable matrix \( P_{\text{inv}} \) with the same structure, constructed analogously from two unconstrained real matrices. This improves flexibility and avoids numerical instability associated with matrix inversion.

The eigenvalues of the Koopman operator are encoded using polar coordinates. Specifically, the modulus and phase of each eigenvalue are parameterized as
\[
\lambda_j = e^{-\exp(\nu_j)} \cdot e^{i \theta_j} = e^{-\exp(\nu_j)} \left( \cos(\theta_j) + i \sin(\theta_j) \right),
\]
where \( \nu_j \) and \( \theta_j \) are learnable parameters. Let $\lambda_\text{re} = (\text{real}(\lambda_j)), \lambda_\text{im} = (\text{imag}(\lambda_j)) \in \mathbb{R}^d$ be the aggregated vector of real and imaginary parts of the eigenvalues, respectively.  We assemble these values into a real-valued block-diagonal matrix:
\[
\Lambda = 
\begin{bmatrix}
\operatorname{diag}(\lambda_{\text{re}}) & -\operatorname{diag}(\lambda_{\text{im}}) \\
\operatorname{diag}(\lambda_{\text{im}}) & \operatorname{diag}(\lambda_{\text{re}})
\end{bmatrix} \in \mathbb{R}^{2d \times 2d} \ .
\]

At inference time, the latent state vector \( z \in \mathbb{R}^d \) is first augmented to \( \mathbb{R}^{2d} \) by concatenating a zero vector: \( \tilde{z} = [z; 0] \). The evolution under the Koopman operator is then computed as
\[
z_{\text{next}} = P_\text{inv} \Lambda P \tilde{z} \ .
\]

Finally, we discard the imaginary part and retain only the first \( d \) entries to return a real-valued output of the same dimensionality as the input.

This formulation enables us to simulate rich, expressive dynamics through a fully linear evolution in the lifted Koopman space, while maintaining computational tractability and interpretability. Moreover, this decomposition is particularly amenable to distillation: the structure of the transformation ensures that semantic and temporal coherence is preserved during student training, as the model explicitly disentangles dynamics into magnitude and phase components.

\clearpage
\section{Additional Experiments and Analysis}

\subsection{Ablation Studies}
\label{sec:ablation_studies}

\paragraph{Loss Ablation Study.} We conduct an ablation study to evaluate the contribution of each loss component. As shown in Tab.~\ref{tab:koopman_loss_abl}, using only $\mathcal{L}_{\text{pred}}$ or any pairwise combination of the Koopman-related losses still enables the model to learn, as indicated by moderate FID and IS scores. However, incorporating all three losses together, including $\mathcal{L}_{\text{Koopman}}$, creates a synergistic effect—reducing the FID from around 10 to 7.83 and achieving a higher IS than any partial combination. Finally, we highlight the importance of incorporating an adversarial setup, which further improves both FID and IS scores by enhancing perceptual quality. 

Note, integrating adversarial losses into alternative distillation methods for diffusion models is often non-trivial and presents significant challenges. Many existing approaches lack the architectural flexibility or training stability required to effectively incorporate such objectives, unlike our proposed KDM. Notably, we extended GET with adversarial training, which improved its FID from 6.91 to 6.13. However, KDM still achieves a superior FID of 4.68, demonstrating its effectiveness beyond prior methods.

\begin{table}[htbp]
    \centering
    \caption{Koopman losses ablation; bold is best and underscore is second-best.}
    \label{tab:koopman_loss_abl}
    \vspace{0.5em}
    \resizebox{.9\linewidth}{!}{
    \begin{tabular}{@{}lcccccccc|c@{}}
        \toprule
        &
        $\mathcal{L}_{\text{lat}}$ &
        $\mathcal{L}_{\text{pred}}$ &
        $\mathcal{L}_{\text{rec}}$ &
        $\mathcal{L}_{\text{lat}} + \mathcal{L}_{\text{pred}}$ &
        $\mathcal{L}_{\text{rec}} + \mathcal{L}_{\text{pred}}$ &
        $\mathcal{L}_{\text{rec}} + \mathcal{L}_{\text{lat}}$ &
        $\mathcal{L}_{\text{Koopman}}$ &
        $\mathcal{L}$ & 
        GET + Adv \\
        \midrule
        FID & 487 & 11.2 & 457 & 9.53 & 10.7 &
        11.16 & \underline{7.83} & \textbf{4.97} & 6.13 \\
        IS & 1.00 & 8.23 & 1.00 & 8.82 & 8.26 & 8.68 & \underline{8.83} & \textbf{9.22} & 8.98 \\
        \bottomrule
    \end{tabular}
    }
\end{table}

\paragraph{Ablation on Loss Weighting and $\lambda_{\text{adv}}$ Sensitivity}
\label{sec:loss_weighting}

We investigate the effect of different weightings for the components of the Koopman loss, namely $\mathcal{L}_{\text{rec}}$, $\mathcal{L}_{\text{lat}}$, and $\mathcal{L}_{\text{pred}}$, as well as the influence of the adversarial weight $\lambda_{\text{adv}}$. The results indicate that the model is robust to moderate changes in these loss weights, with FID scores remaining within a narrow range (4.94–5.03). For the adversarial term, we varied $\lambda_{\text{adv}} \in \{0.5, 1.0, 0.25\}$ and observed that $\lambda_{\text{adv}} = 1.0$ achieved the best trade-off between fidelity and stability, yielding an FID of 4.57 compared to 4.98 for $\lambda_{\text{adv}}=0.5$ and 4.67 for the default $\lambda_{\text{adv}}=0.25$ setting used in the main paper. These findings suggest that KDM remains stable under different loss scalings and that its performance is not overly sensitive to the choice of $\lambda_{\text{adv}}$, demonstrating strong robustness to training hyperparameters.

\paragraph{Empirical Investigation of Koopman Matrix Size.} We evaluate the impact of varying the size of the Koopman matrix. As shown in Tab.~\ref{tab:koopman_matrix_size}, smaller matrices tend to generalize better than larger ones. This result is somewhat counterintuitive, as the scaling experiments in Sec.~\ref{sec:one_step_generation} demonstrate a clear trend where increasing the number of parameters typically improves performance. We hypothesize that smaller matrices generalize more effectively, and we leave a deeper understanding of this phenomenon—through the lens of dynamical systems theory and empirical analysis—for future research.

\begin{table}[h]
    \centering
    \caption{FID scores across matrix sizes on the CIFAR-10 unconditional task; bold is best.}
    \label{tab:koopman_matrix_size}
    \vspace{0.5em}
    \begin{tabular}{@{}lccc@{}}
        \toprule
        Size & 1024 & 4096 & 9216 \\
        \midrule
        FID & \textbf{5.08} & 7.24 & 9.41 \\
        \bottomrule
    \end{tabular}
\end{table}

\subsection{Training Stability and Adversarial Loss Convergence}
\label{sec:training_stability}

To evaluate the training stability of \emph{KDM}, we compare the generator and discriminator loss dynamics of KDM and GET across uniformly sampled training steps. 
Despite the inclusion of an adversarial component in KDM, both losses exhibit stable and convergent behavior throughout training, indicating that the adversarial term does not introduce significant instability in practice. Fig.\ref{fig:training_stability} illustrates the generator and discriminator loss trajectories for both KDM and GET.  We discard the 0\% step due to corrupted values in the raw logs.  Overall, the results confirm that KDM maintains stable optimization behavior comparable to GET, while benefiting from enhanced perceptual quality.

\begin{figure}[h]
    \centering
    \begin{tikzpicture}
    \begin{axis}[
        width=0.9\linewidth,
        height=6cm,
        xlabel={Training Progress (\%)},
        ylabel={Loss Value},
        legend style={at={(0.5,-0.25)}, anchor=north, legend columns=2},
        grid=major,
        grid style={dashed,gray!30},
        tick label style={font=\small},
        label style={font=\small},
        title style={font=\small, align=center},
        title={Generator and Discriminator Loss Convergence for KDM and GET},
        ymin=0, ymax=5.5,
        xmin=0, xmax=100,
    ]
    % KDM Generator
    \addplot+[thick, color=blue, mark=*] coordinates {
        (0.5,3.91) (1,4.05) (1.5,3.84) (2,2.90) (4,1.14) (10,0.84)
        (20,0.65) (30,0.75) (40,0.72) (50,0.82) (60,0.88)
        (70,0.81) (80,0.78) (90,0.69) (100,0.65)
    };
    \addlegendentry{KDM Gen. Loss}
    
    % KDM Discriminator
    \addplot+[thick, color=blue!60!black, dashed, mark=square*] coordinates {
        (0.5,0.24) (1,0.31) (1.5,0.65) (2,0.78) (4,0.96) (10,1.15)
        (20,1.27) (30,1.23) (40,1.14) (50,1.25) (60,1.19)
        (70,1.14) (80,1.25) (90,1.21) (100,1.23)
    };
    \addlegendentry{KDM Disc. Loss}
    
    % GET Generator
    \addplot+[thick, color=red, mark=triangle*] coordinates {
        (0.5,4.12) (1,4.35) (1.5,3.21) (2,2.14) (4,1.84) (10,0.89)
        (20,0.75) (30,0.77) (40,0.72) (50,0.69) (60,0.74)
        (70,0.88) (80,0.85) (90,0.74) (100,0.91)
    };
    \addlegendentry{GET Gen. Loss}
    
    % GET Discriminator
    \addplot+[thick, color=red!60!black, dashed, mark=diamond*] coordinates {
        (0.5,0.53) (1,0.69) (1.5,0.71) (2,0.94) (4,1.01) (10,1.11)
        (20,1.15) (30,1.09) (40,1.21) (50,1.16) (60,1.12)
        (70,1.15) (80,1.09) (90,1.14) (100,1.10)
    };
    \addlegendentry{GET Disc. Loss}
    
    \end{axis}
    \end{tikzpicture}
    \caption{Training loss convergence comparison between KDM and GET. 
    Despite including an adversarial loss, KDM maintains stable generator and discriminator convergence comparable to GET.}
    \label{fig:training_stability}
\end{figure}

\subsection{Additional Qualitative Results}
\label{sec:additional_qual_res_high_res}

We include additional qualitative results of our method, extending the evaluation presented in the main text on more datasets and testing scenarios.

\paragraph{Additional Comparison of Student vs Teacher.} In Figs.~\ref{fig:qual_comp_with_edm_ffhq_uncond}, \ref{fig:qual_comp_with_edm_afhq_uncond}, \ref{fig:qual_comp_with_edm_cifar_uncond}, and \ref{fig:qual_comp_with_edm_cifar_cond}, we present generations from our model alongside those of the original EDM model. Corresponding blocks indicate samples generated from the same initial noise.

\begin{figure}[htbp]
    \centering
    \begin{minipage}{\hh}
        \includegraphics[width=\linewidth, trim=0 0 0 0, clip]{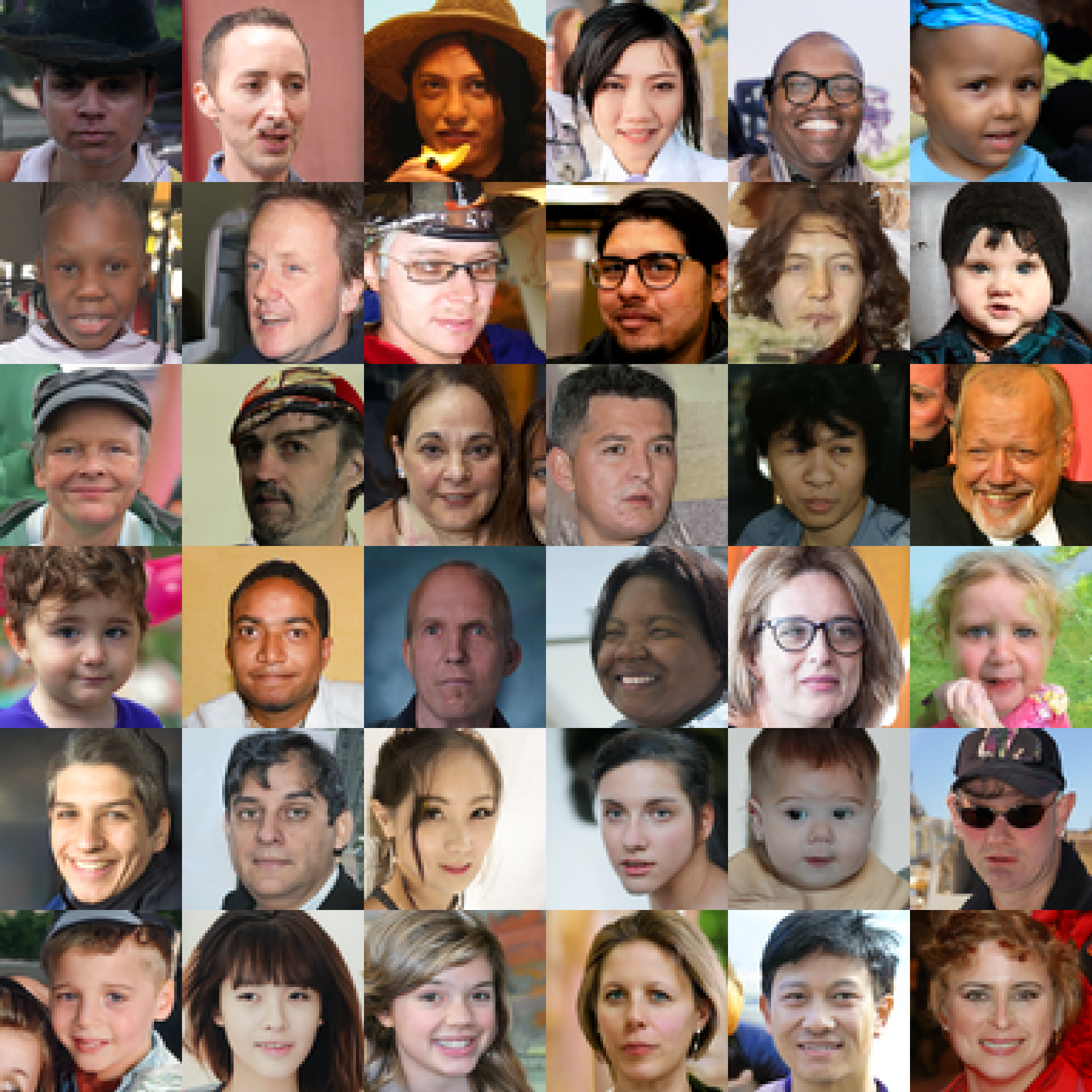}\\
    \end{minipage}\hfill
    \begin{minipage}{\hh}
        \includegraphics[width=\linewidth, trim=0 0 0 0, clip]{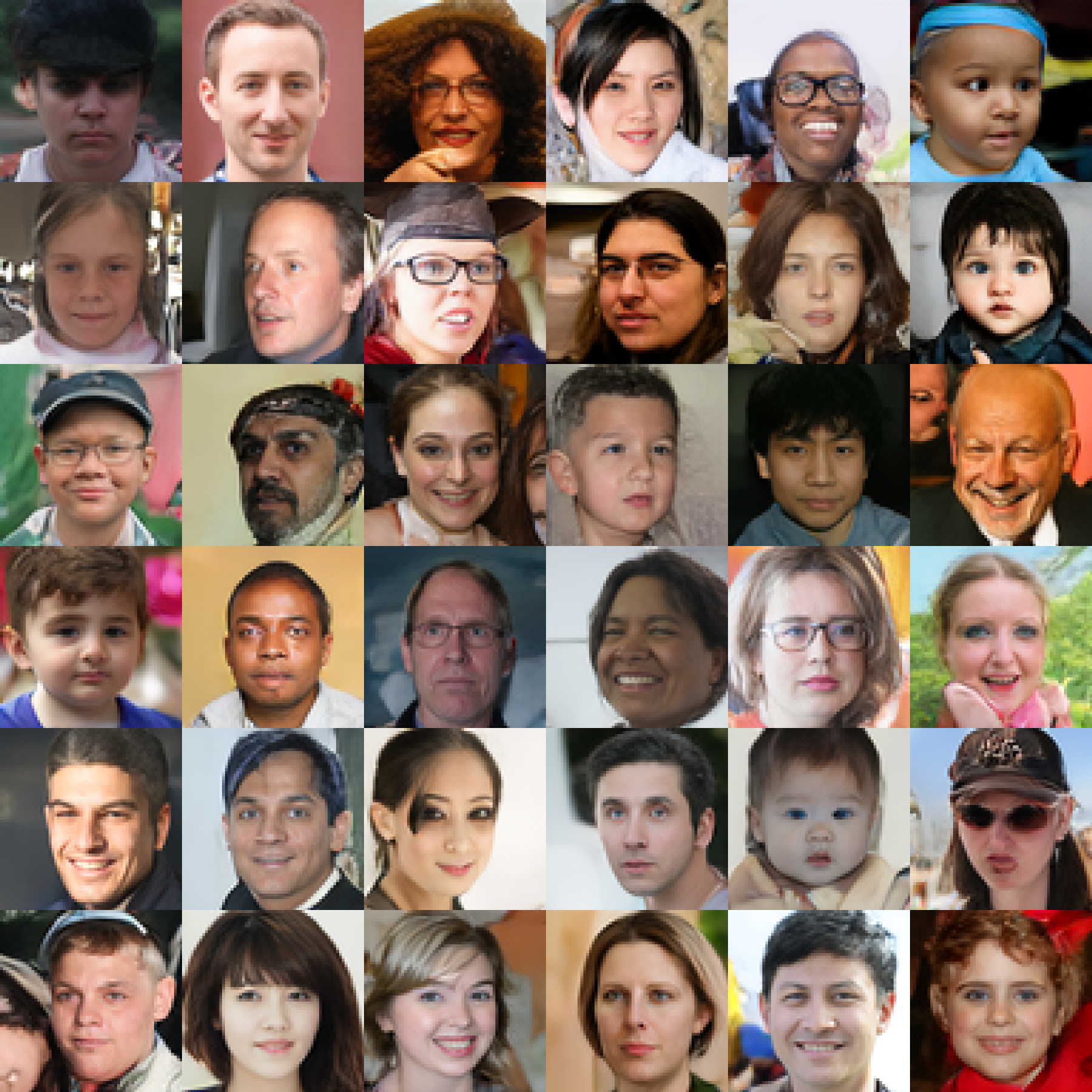}\\
    \end{minipage}
    \caption{Unconditional Generation of FFHQ. Sampled images using 79 NFEs with EDM~\cite{karras2022elucidating} (left), and using 1 NFE with KDM (right).}
    \label{fig:qual_comp_with_edm_ffhq_uncond}
\end{figure}

\begin{figure}[htbp]
    \centering
    \begin{minipage}{\hh}
        \includegraphics[width=\linewidth, trim=0 0 0 0, clip]{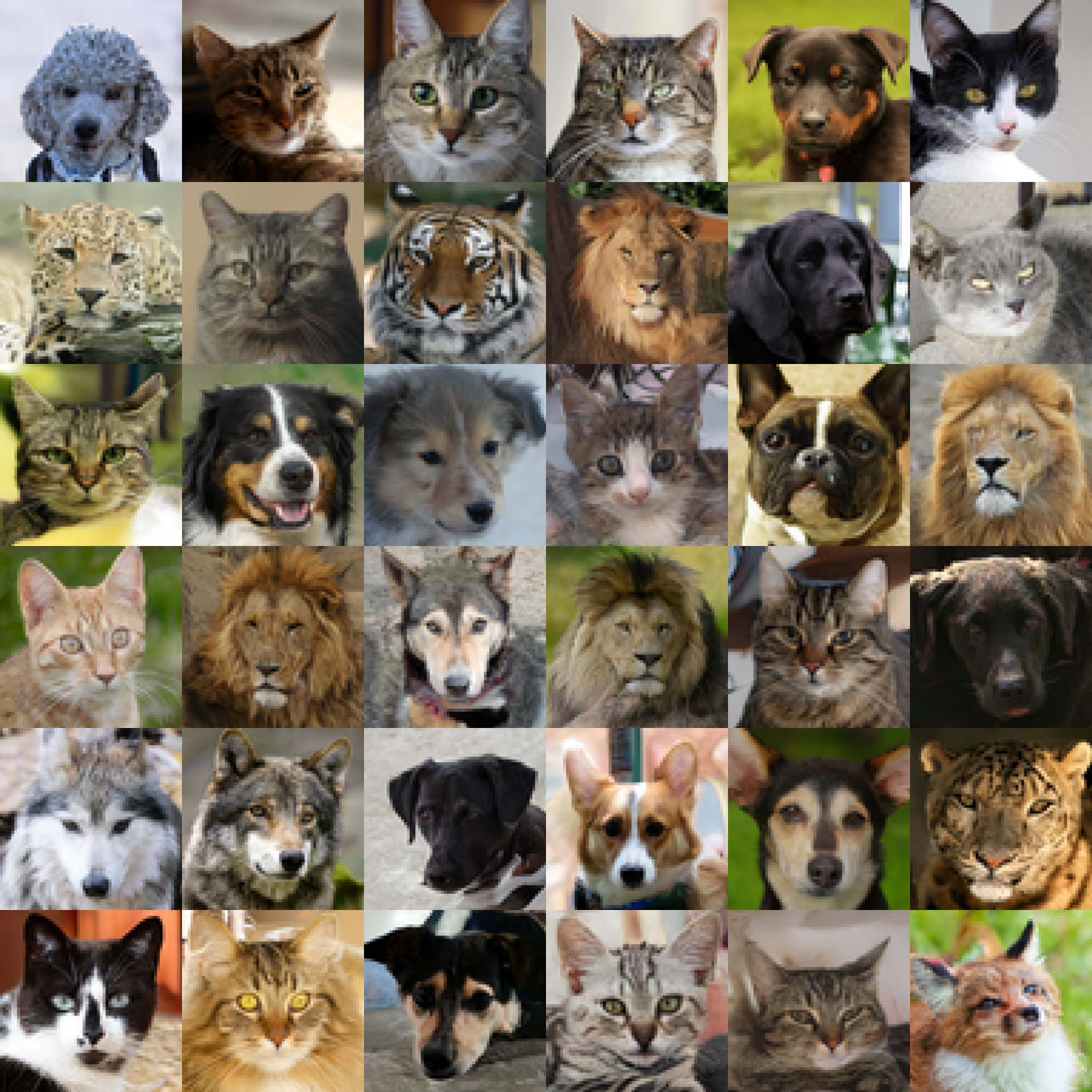}\\
    \end{minipage}\hfill
    \begin{minipage}{\hh}
        \includegraphics[width=\linewidth, trim=0 0 0 0, clip]{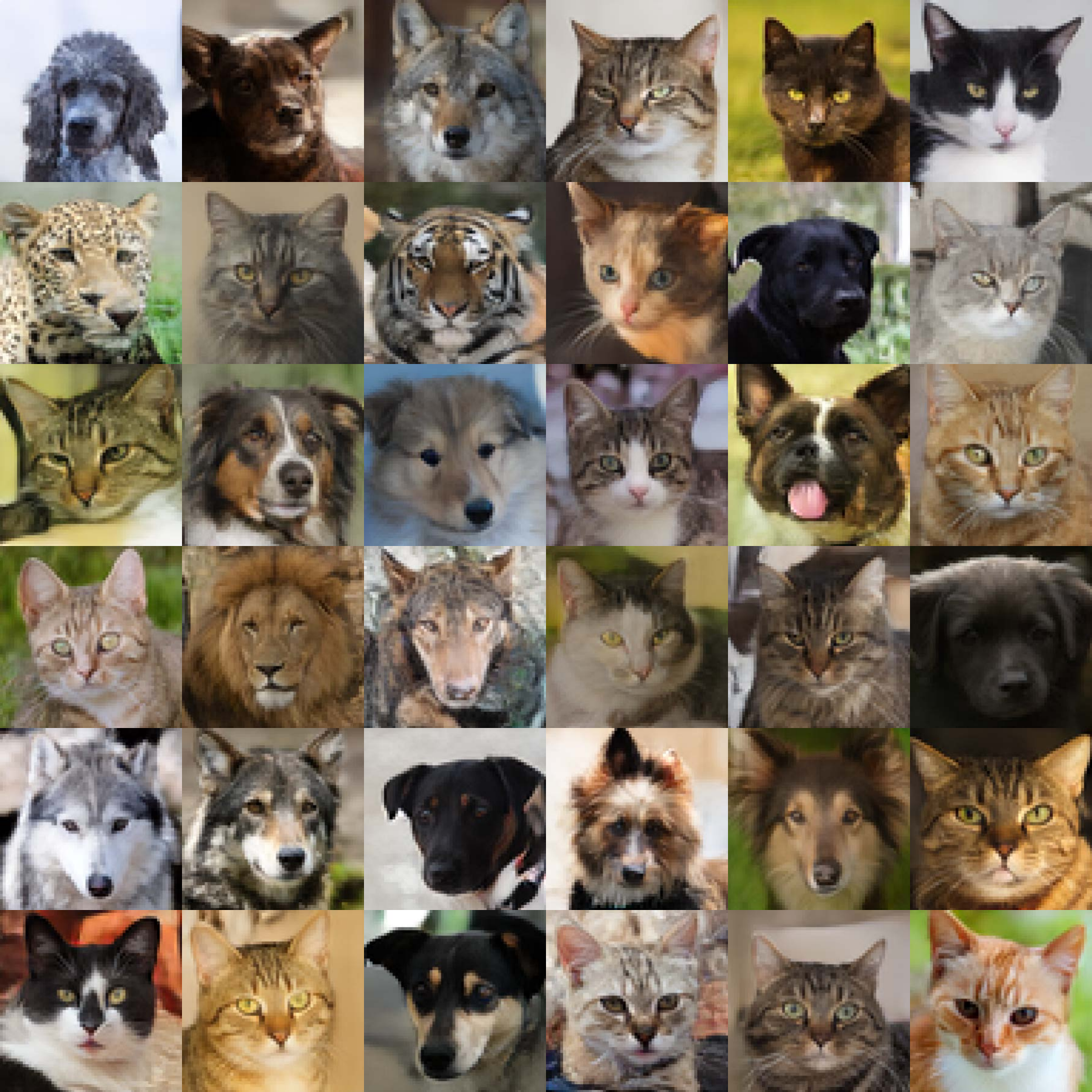}\\
    \end{minipage}
    \caption{Unconditional Generation of FFHQ. Sampled images using 79 NFEs with EDM~\cite{karras2022elucidating} (left), and using 1 NFE with KDM (right).}
    \label{fig:qual_comp_with_edm_afhq_uncond}
\end{figure}

\begin{figure}[htbp]
    \centering
    \begin{minipage}{\hh}
        \includegraphics[width=\linewidth, trim=0 0 0 0, clip]{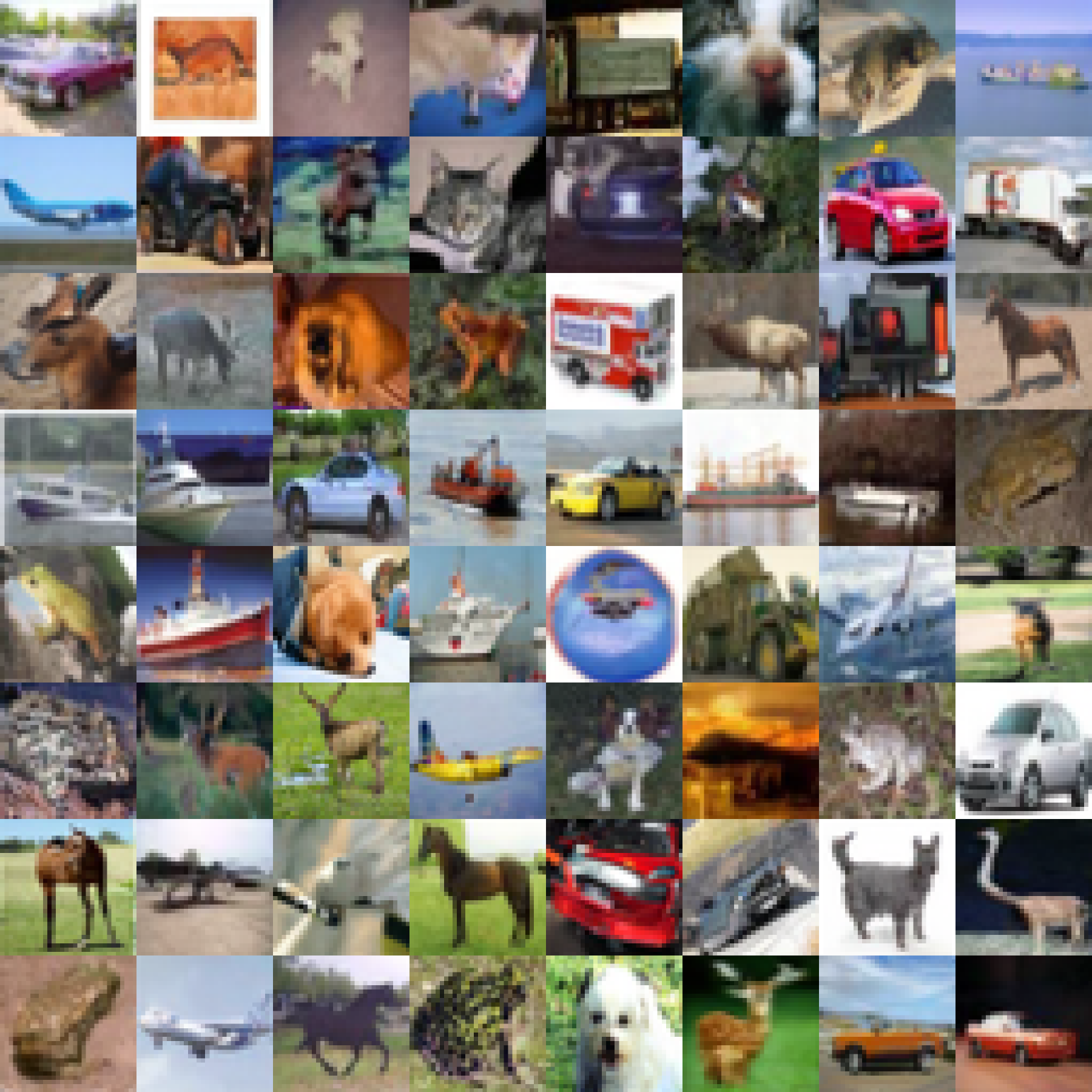}\\
    \end{minipage}\hfill
    \begin{minipage}{\hh}
        \includegraphics[width=\linewidth, trim=0 0 0 0, clip]{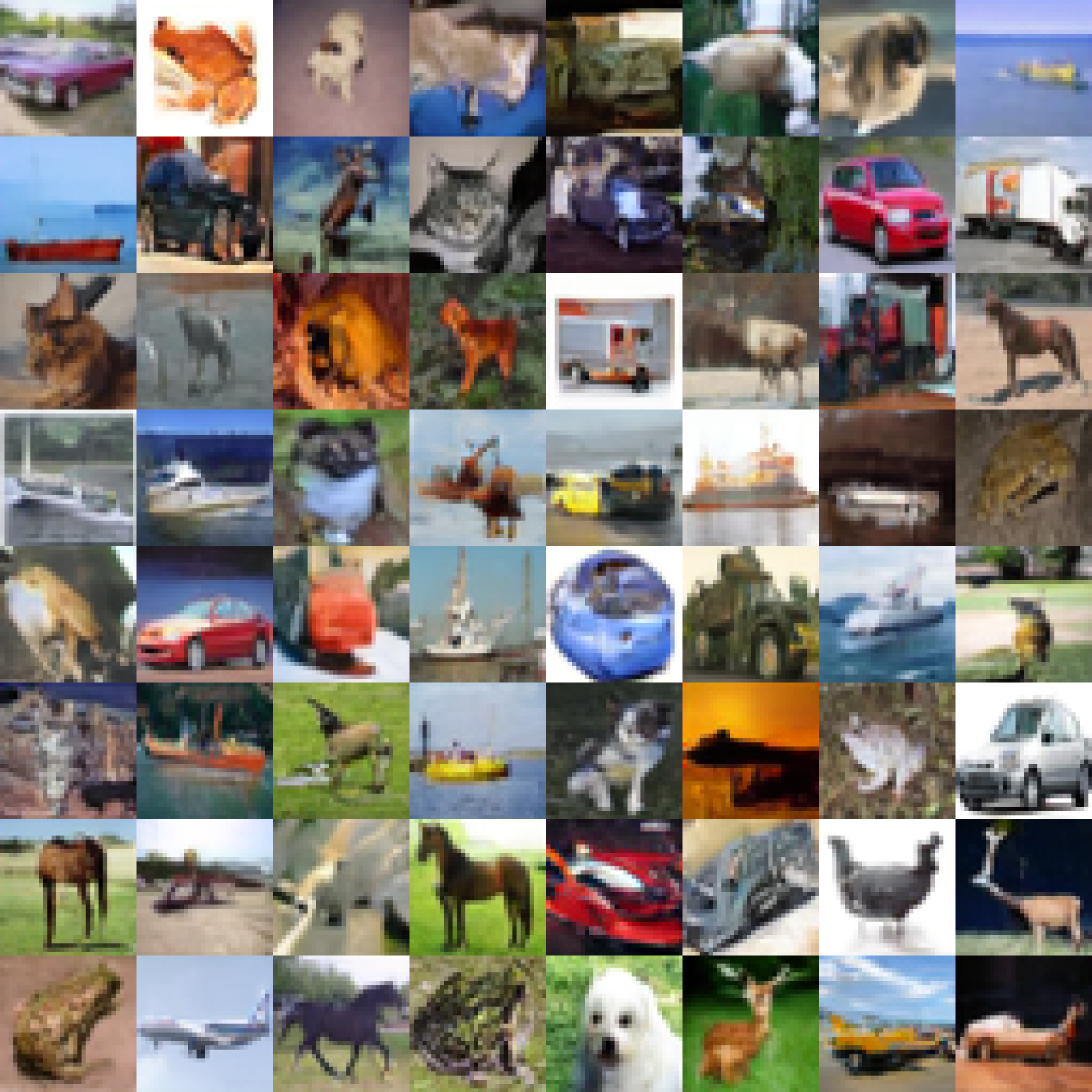}\\
    \end{minipage}
    \caption{Unconditional Generation of CIFAR-10. Sampled images using 35 NFEs with EDM~\cite{karras2022elucidating} (left), and using 1 NFE with KDM (right).}
    \label{fig:qual_comp_with_edm_cifar_uncond}
\end{figure}

\begin{figure}[htbp]
    \centering
    \begin{minipage}{\hh}
        \includegraphics[width=\linewidth, trim=0 0 0 0, clip]{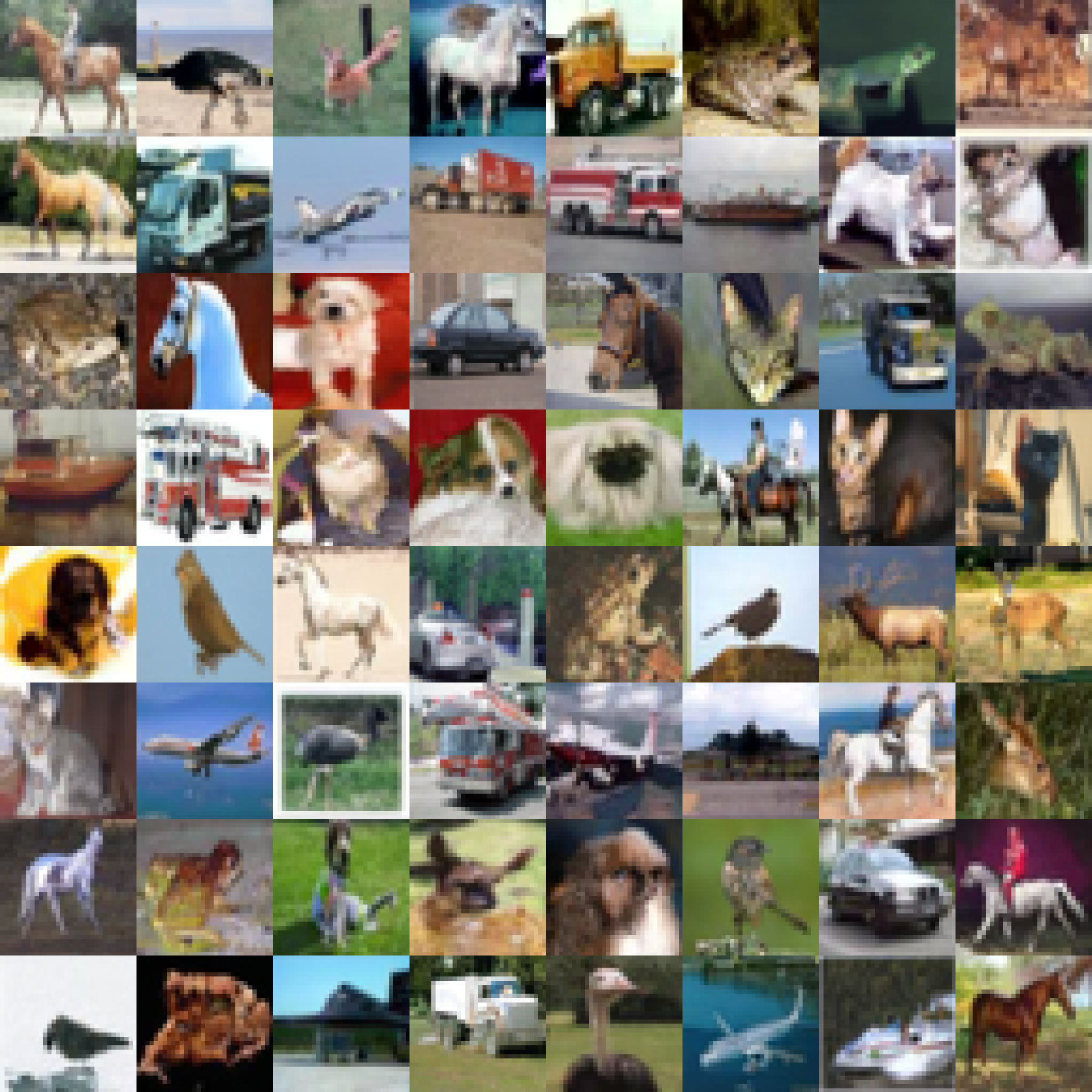}\\
    \end{minipage}\hfill
    \begin{minipage}{\hh}
        \includegraphics[width=\linewidth, trim=0 0 0 0, clip]{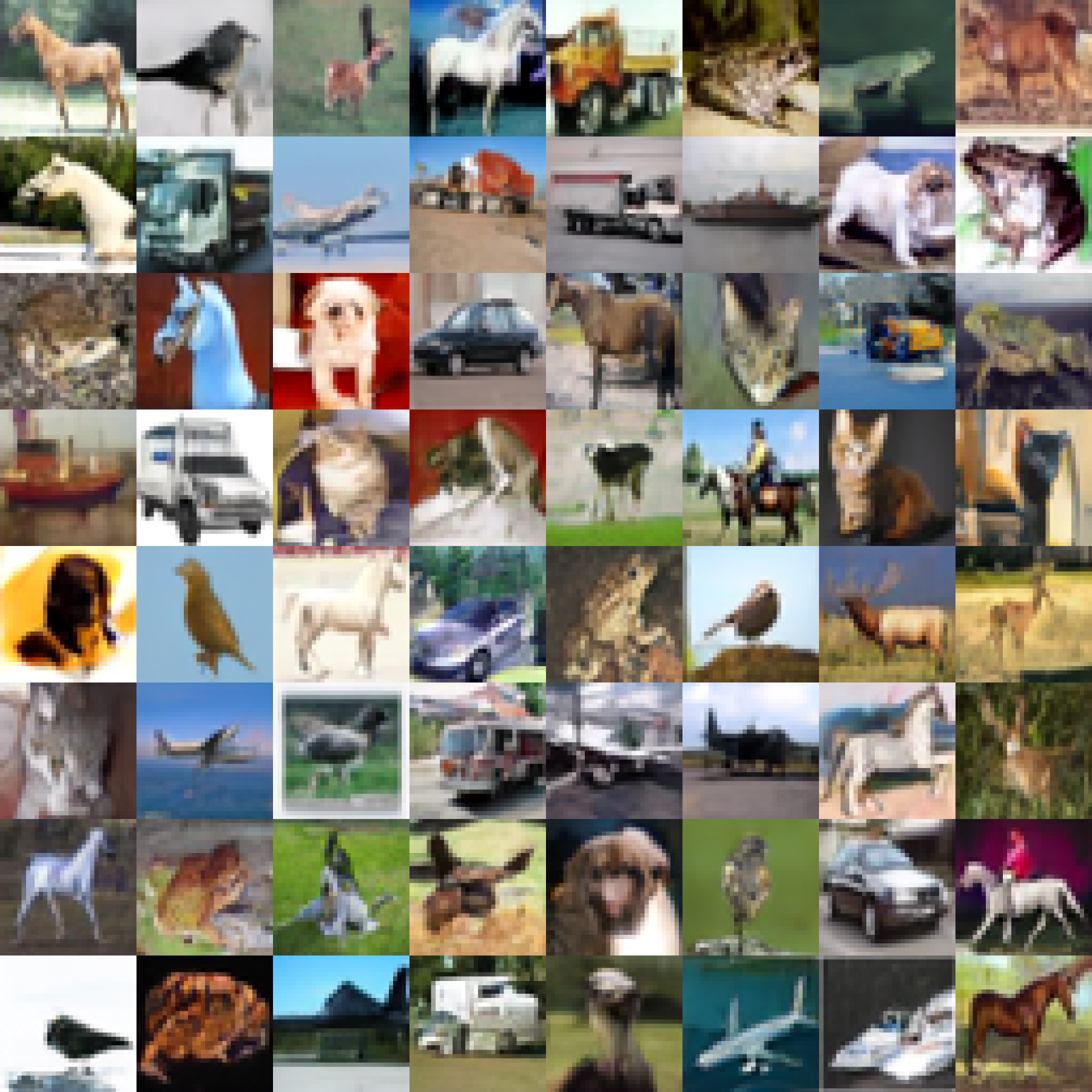}\\
    \end{minipage}
    \caption{Conditional Generation of CIFAR-10. Sampled images using 35 NFEs with EDM~\cite{karras2022elucidating} (left), and using 1 NFE with KDM (right).}
    \label{fig:qual_comp_with_edm_cifar_cond}
\end{figure}

\clearpage
\paragraph{Additional Local Structure Analysis.} We extend the experiment from Sec.~\ref{sec:latent_space_structure} to AFHQv2, conditional CIFAR-10, and unconditional CIFAR-10 datasets, as shown in Fig.~\ref{fig:qual_vicinity_combined_ext_afhq}, Fig.~\ref{fig:qual_vicinity_combined_ext_cifar_cond}, and Fig.~\ref{fig:qual_vicinity_combined_ext_cifar_uncond}, respectively. The results reinforce the conclusions of the main experiment: both EDM and our distilled model exhibit local semantic coherence, preserving features such as position, color, identity, and more---even when traversing farther from the original point in noise space.

\begin{figure}[htbp]
    \centering
    \footnotesize
    \denhruler\hfill\denhruler\\[2pt]
    \begin{minipage}{\hh}
        \includegraphics[width=\linewidth]{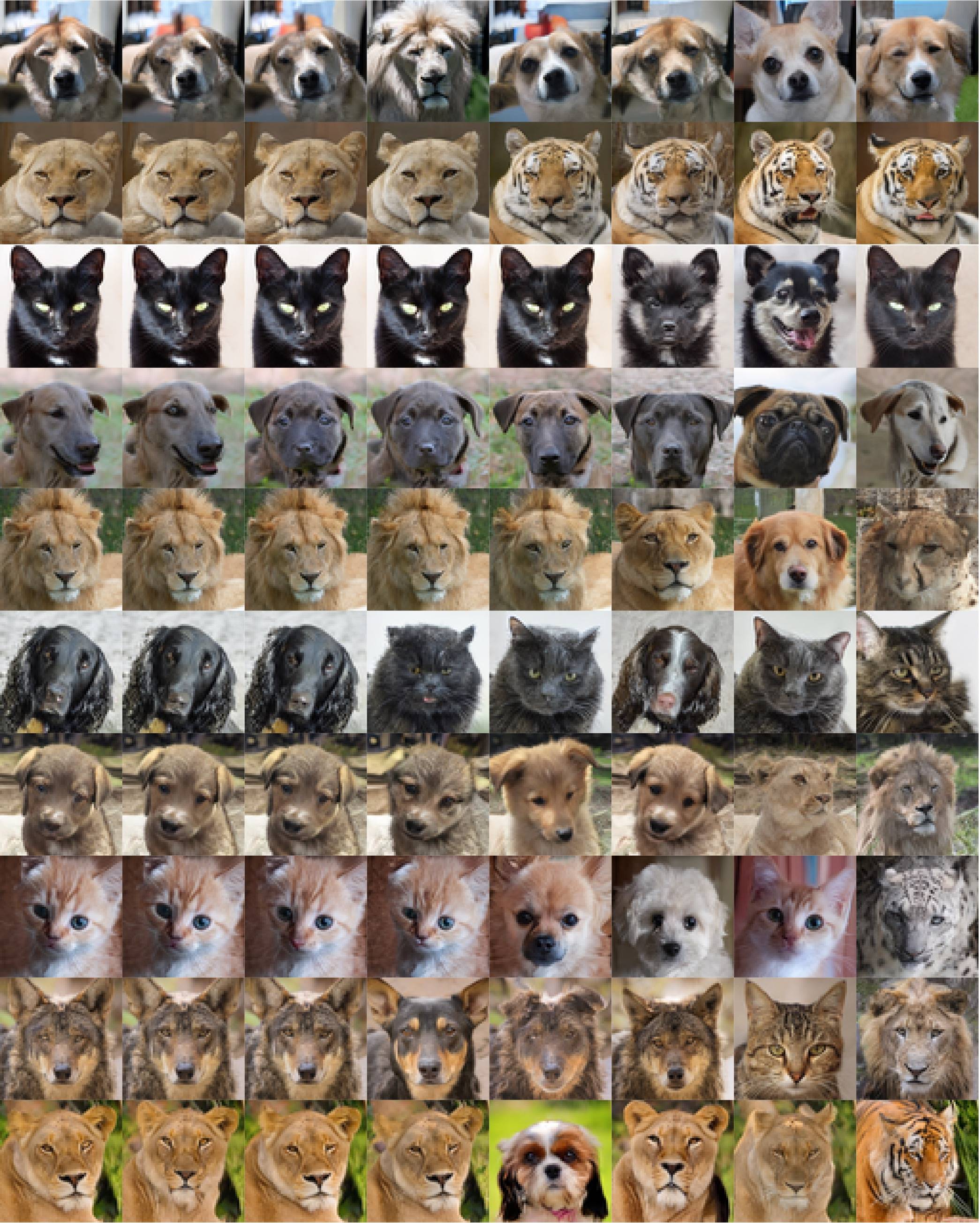}\\
    \end{minipage}\hfill
    \begin{minipage}{\hh}
        \includegraphics[width=\linewidth]{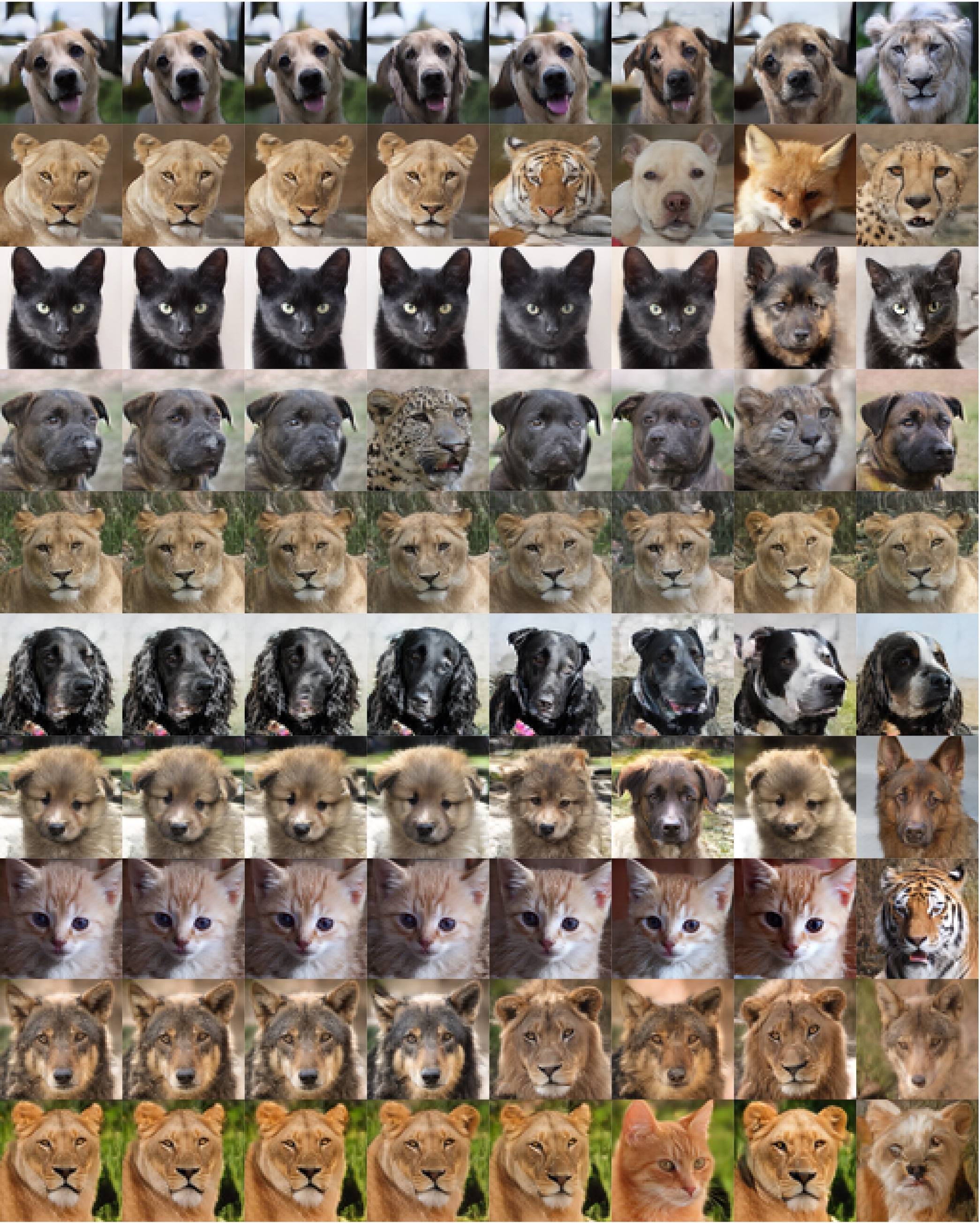}\\
    \end{minipage}
    \caption{Unconditional AFHQv2: Comparison of neighborhood exploration across different noise levels $\sigma$. Left: EDM~\cite{karras2022elucidating}. Right: our method. Each image corresponds to a small perturbation around the original noise sample.}
    \label{fig:qual_vicinity_combined_ext_afhq}
\end{figure}

\begin{figure}[t]
    \centering
    \footnotesize
    \denhruler\hfill\denhruler\\[2pt]
    \vspace{-3mm}
    \begin{minipage}{0.495\linewidth}
        \vspace{3.5mm}
        \includegraphics[width=\linewidth]{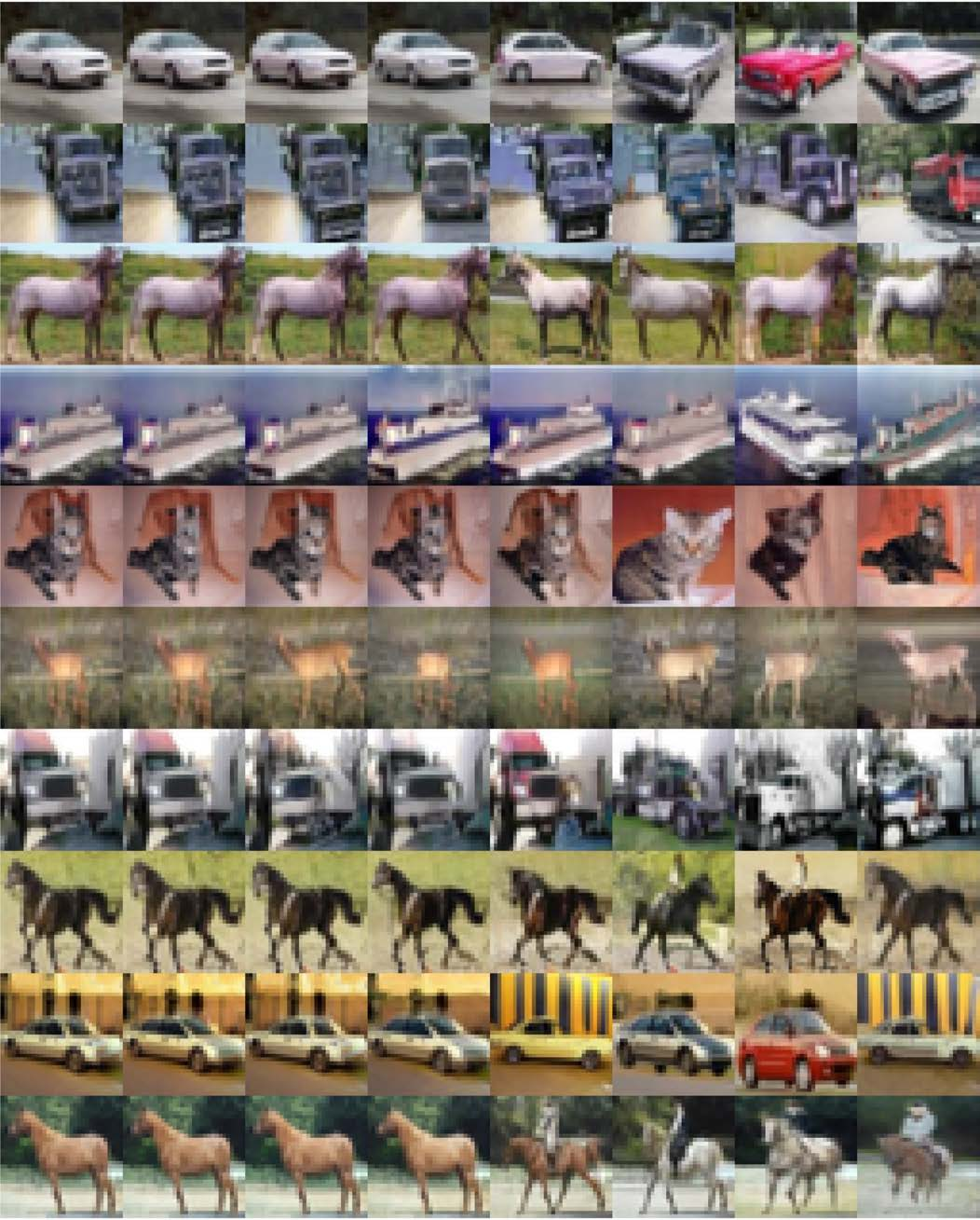}\\
    \end{minipage}\hfill
    \begin{minipage}{0.495\linewidth}
        \includegraphics[width=\linewidth]{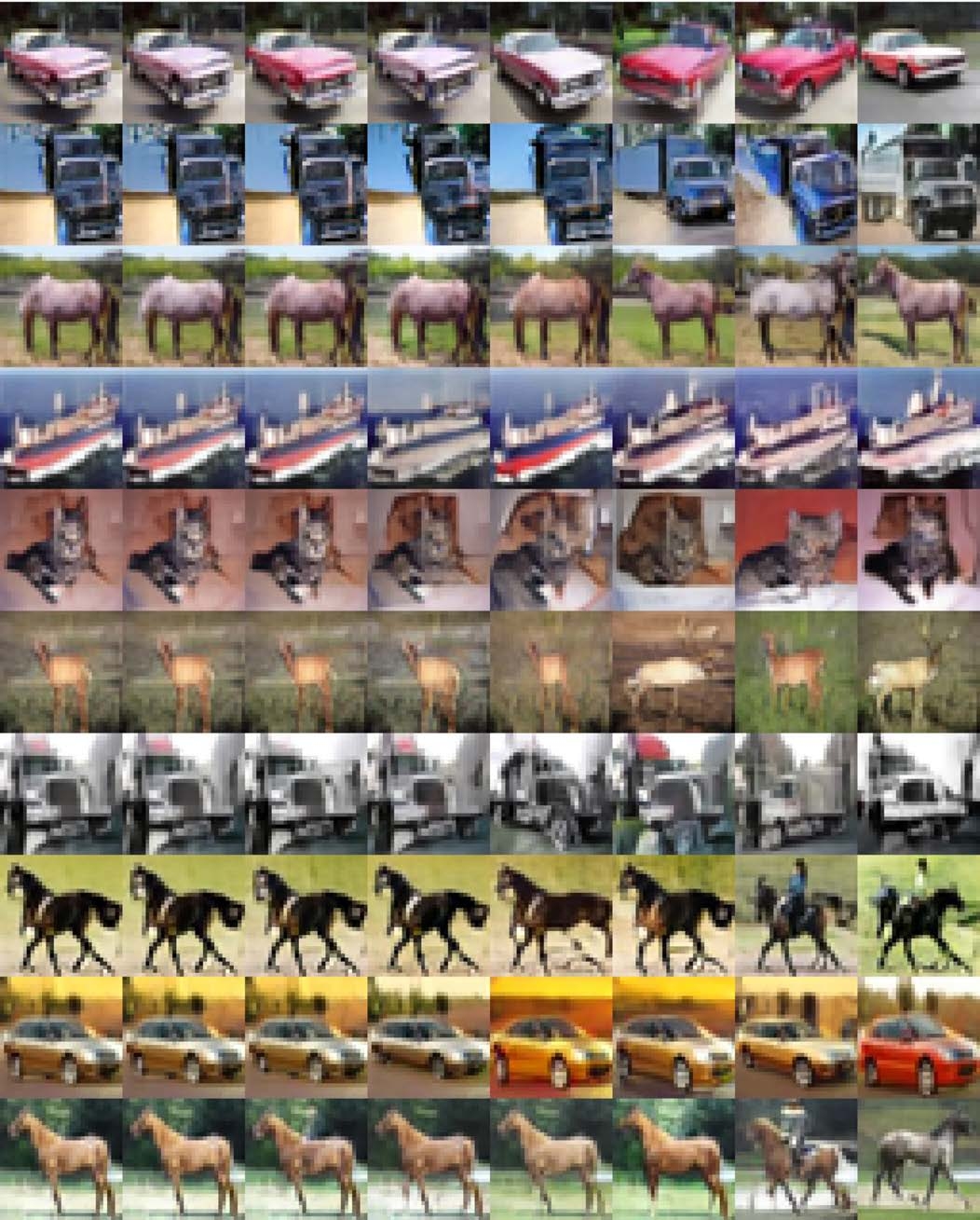}
    \end{minipage}
    \caption{Conditional CIFAR-10: Comparison of neighborhood exploration across different noise levels $\sigma$. Left: EDM~\cite{karras2022elucidating}. Right: our method. Each image corresponds to a small perturbation around the original noise sample.}
    \label{fig:qual_vicinity_combined_ext_cifar_cond}
\end{figure}

\begin{figure}[htbp]
    \centering
    \footnotesize
    \denhruler\hfill\denhruler\\[2pt]
    \begin{minipage}{\hh}
        \includegraphics[width=\linewidth]{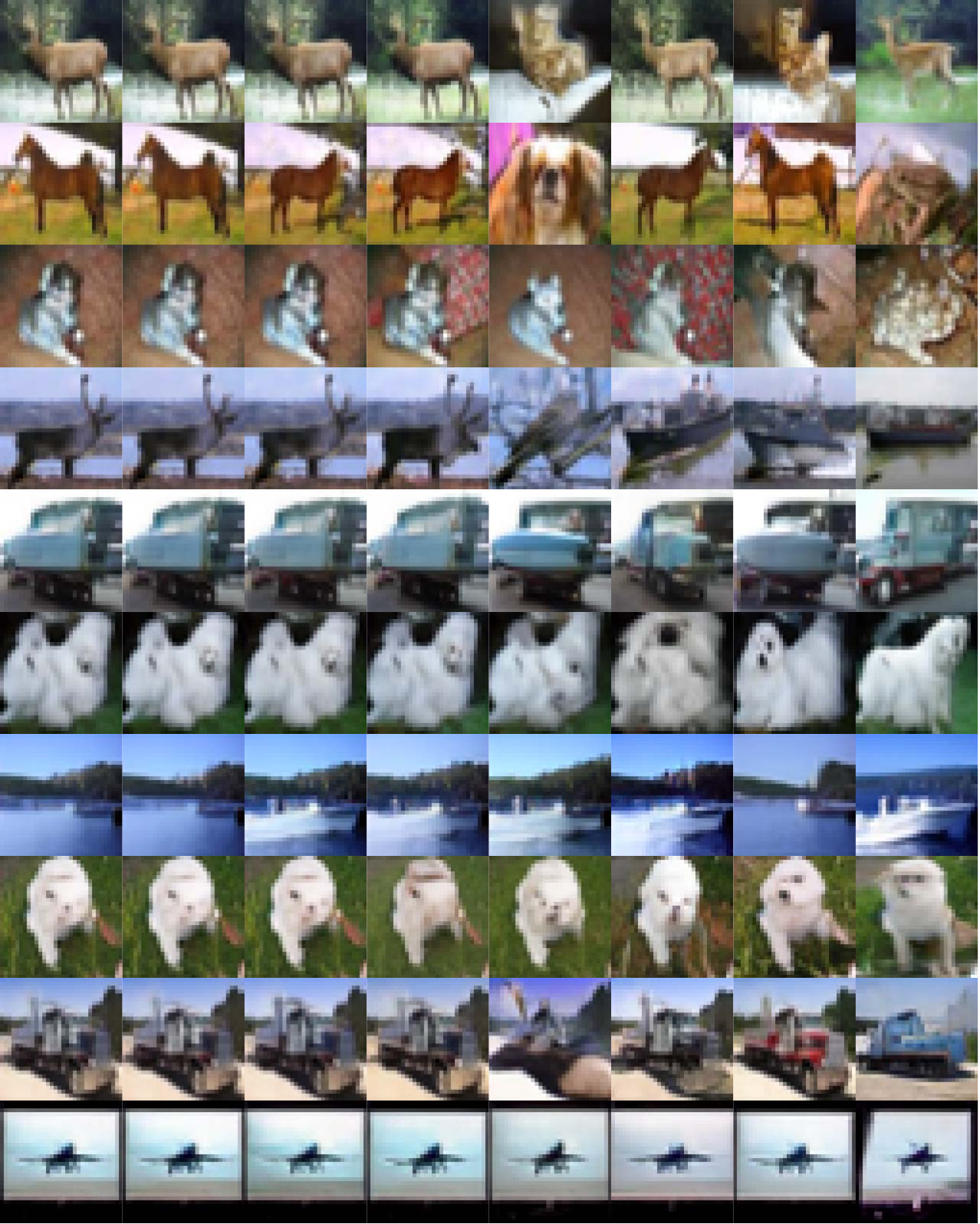}\\
    \end{minipage}\hfill
    \begin{minipage}{\hh}
        \includegraphics[width=\linewidth]{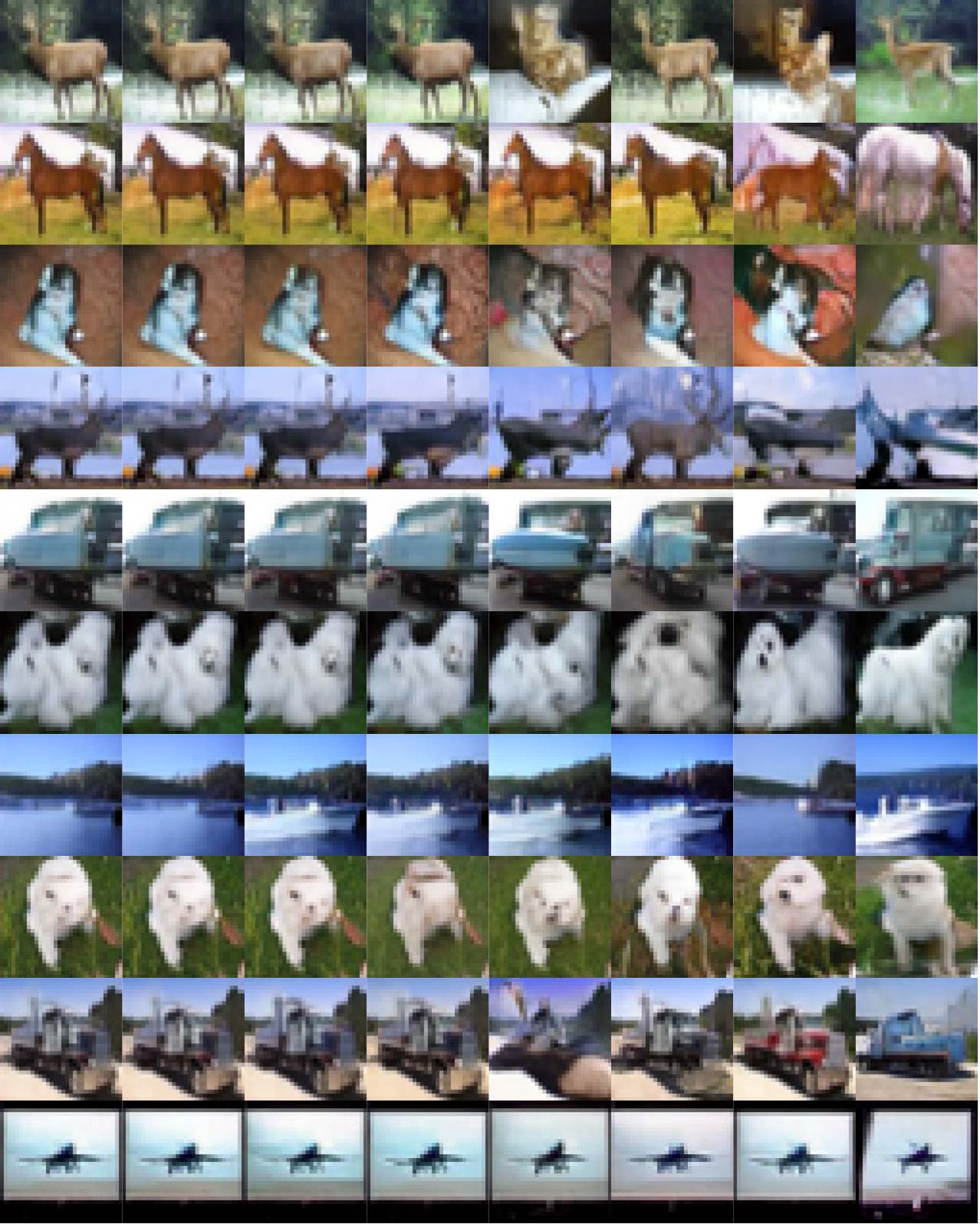}\\
    \end{minipage}
    \caption{Unconditional CIFAR-10: Comparison of neighborhood exploration across different noise levels $\sigma$. Left: EDM~\cite{karras2022elucidating}. Right: our method. Each image corresponds to a small perturbation around the original noise sample.}
    \label{fig:qual_vicinity_combined_ext_cifar_uncond}
\end{figure}

% \subsection{Running GET Method with GANs}
% Throughout our work, we made a concerted effort to compare our method against GET using their publicly available implementation. In particular, we aimed to evaluate the effect of adding adversarial loss to their framework, as we do in our own ablation studies. Unfortunately, we encountered two major obstacles. First, as shown in Sec.~\ref{tab:complexity}, the GET method is extremely computationally intensive—nearly four times slower than our approach. This makes it impractical to run full experiments on our infrastructure. For example, training our method on a single A6000 GPU requires approximately 120 hours, while training GET under similar conditions takes around 480 hours, exceeding our available computational resources. Second, despite these challenges, we attempted to integrate the adversarial loss into GET's pipeline. However, the resulting model proved unstable, often collapsing during training. Due to these combined computational and methodological limitations, we were unable to complete a controlled comparison under the same extended conditions. We hope future work will revisit this question under more favorable resource constraints.

\clearpage
\section{Additional Details}
\subsection{Training and Sampling Procedures Psudo-codes}
\label{sec:psudo-code}

The training process of the model follows a Koopman-inspired backward mapping approach, augmented with adversarial supervision. Given a clean-noisy pair $(x_0, x_T)$, the model learns a latent Koopman operator that evolves the noisy encoding backward to approximate the clean encoding, while a decoder reconstructs the input from the latent space. The model also supports class-conditional dynamics via Koopman control.

\vspace{0.5em}
\begin{algorithm}[H]
\caption{Training Procedure}
\begin{algorithmic}[1]
\Require Training pairs $(x_0, x_T)$, labels $y$ (optional), model $E_\phi$, $E_\psi$,$C_\eta $, $D_\psi$ discriminator $D_\gamma$
\For{each training step}
    \State Encode $z_0 \gets E_\phi(x_0, 0, y)$
    \State Encode $z_T \gets E_\varphi(x_T, T, y)$
    \State Add noise: $\tilde{z}_0 \gets z_0 + \epsilon_0$, $\tilde{z}_T \gets z_T + \epsilon_T$, where $\epsilon_T, \epsilon_0 \sim \mathcal{N}(0, 0.4)$
    \State Push via Koopman: $z_0^{\text{push}} \gets C_\eta(\tilde{z}_T)$
    \If{conditional Koopman}
        \State Add control: $z_0^{\text{push}} \gets z_0^{\text{push}} + C_\mu(y)$
    \EndIf
    \State Decode: $\hat{x}_0 \gets D_\psi(\tilde{z}_0, 0, y)$
    \State Decode pushed: $\hat{x}_0^{\text{push}} \gets D_\psi(z_0^{\text{push}}, 0, y)$
    \State Compute latent loss: $\mathcal{L}_{\text{lat}} \gets \| z_0 - z_0^{\text{push}} \|^2$
    \State Compute reconstruction loss: $\mathcal{L}_{\text{rec}} \gets d(x_0, \hat{x_0})$
    \State Compute full pass loss: $\mathcal{L}_{\text{pred}} \gets d(x_0, \hat{x}_0^{\text{push}})$
    \State Compute model adversarial loss: $\mathcal{L}^{G}_{\text{adv}} \gets CE(1, D_\gamma(\hat{x}_0^{\text{push}}))$
    \State Compute discriminator loss: $\mathcal{L}^{D}_{\text{adv}} \gets CE(1, D_\gamma(x_0) ) + CE(0, D_\gamma(\hat{x}_0^{\text{push}}))$
    \State model Combine losses:
    \[
    \mathcal{L}_{\text{total}} \gets 
        \mathcal{L}_{\text{rec}} + \mathcal{L}_{\text{pred}} + \mathcal{L}_{\text{lat}} + w_{\text{adv}} \cdot \mathcal{L}^{G}_{\text{adv}}
    \]
    \State Update $E_\phi$, $E_\varphi$, $C_\eta$, $C_\mu$, $D_\psi$ and $D_\gamma$ via backpropagation
\EndFor
\end{algorithmic}
\end{algorithm}
$d$ is a distance measure such as MSE or LPIPS. We utilize LPIPS. $CE$ is cross-entropy loss. At inference time, the model can sample clean images by reversing from random Gaussian noise $x_T$, applying the Koopman operator in latent space, and decoding to the image space. 

\vspace{0.5em}
\begin{algorithm}[H]
\caption{Sampling Procedure}
\begin{algorithmic}[1]
\Require Koopman model: $E_\phi$, $C_\eta$, $C_\mu$, $D_\psi$, labels $y$ (optional)
\State Sample noise $x_T \sim \mathcal{N}(0, \sigma^2)$
\State Encode: $z_T \gets E_\varphi(x_T, T, y)$
% \If{add sampling noise}
%     \State Perturb: $z_T \gets z_T + \epsilon$, where $\epsilon \sim \mathcal{N}(0, 0.4)$
% \EndIf
\State Push: $z_0^{\text{push}} \gets C_\eta(z_T)$
\If{conditional Koopman}
    \State Add control: $z_0^{\text{push}} \gets z_0^{\text{push}} + C_\mu(y)$
\EndIf
\State Decode: $\hat{x}_0 \gets D_\psi(z_0^{\text{push}}, 0, y)$
\State \Return Clean sample $\hat{x}_0$ %, input noise $x_T$
\end{algorithmic}
\end{algorithm}

\subsection{Datasets Collection}

For the \textbf{Checkerboard} dataset, we sampled using 10 NFEs with EDM~\cite{karras2022elucidating} or flow matching~\cite{lipman2022flow}. Following \cite{geng2023one}, we sampled \textbf{CIFAR-10} with 35 NFEs from EDM. For \textbf{FFHQ} and \textbf{AFHQv2}, we used 79 NFEs using EDM, as recommended.

\subsection{Method Implementation Details}
\label{sec:method_impl_details}

We provide the full code implementation for the Checkerboard and CIFAR-10 (both conditional and unconditional) datasets in the supplementary material. Due to memory constraints, the implementations for FFHQ and AFHQv2 will be released shortly. In this section, we briefly describe the implementation details of each method, while we encourage the reader to refer to the codebase for exact configurations. 
We train our models for 800k iterations with the Adam optimizer~\cite{kingma2015adam} at a fixed learning rate of 3$\mathrm{e}$-4. We do not use warm‑up, weight decay, or any learning‑rate schedule. All runs are on a single NVIDIA A6000 GPU or RTX4090. Note that our method can fully support distributed training.

\paragraph{Checkerboard.} For this dataset, we build upon the implementation of \cite{lipman2023flow}. Both $x_0$ and $x_T$ are encoded using the same denoising architecture as described in their work. The decoding of $z_0$ is performed using the inverse of this module. The Koopman operator is implemented as a simple linear layer. Complete implementation details are available in the shared code.

\paragraph{CIFAR-10, FFHQ, and AFHQv2.} We adopt the SongUNet architecture from \cite{karras2022elucidating} for encoding $x_0$ and $x_T$, as well as for decoding $z_0$. To reduce latent dimensionality, the output channels of the network are set to 1, effectively reducing the latent space size by a factor of 3. The Koopman operator is implemented either as a standard linear transformation or using our custom Fast Koopman module. To maintain architectural fairness, we downscale all networks to match the total parameter count of the original Karras network~\cite{karras2022elucidating} and the GET distillation model~\cite{geng2023one}. The FFHQ and AFHQv2 configurations follow a similar encoder-decoder structure with one key difference: a linear projection layer is appended to the end of the encoder and the beginning of the decoder to further control the latent space dimensionality. All other aspects remain consistent across datasets. Note that the discriminator is trained using a separate optimizer, akin to the Koopman module's optimizer.

\paragraph{Hyperparameters and Training.} We provide complete training scripts along with plug-and-play configuration files in the supplementary material. An overview of the key hyperparameters used for each dataset is summarized in Tab.~\ref{tab:model_configurations_transposed}. Overall, our method requires minimal hyperparameter tuning and we did not perform any hyperparameter search.
\begin{itemize}
    \item \textbf{Unet Out Channels}: Controls the number of output channels in SongUNet.
    \item \textbf{Unet Model Channels}: Determines the internal channel width, affecting the model's capacity.
    \item \textbf{Noisy Latent Injection}: A Gaussian noise term $\epsilon \sim \mathcal{N}(0, 1)$ added to latents (as described in the pseudocode) to improve generalization and mitigate numerical instabilities.
    \item \textbf{Adversarial Weight}: Applied to the adversarial loss term in our generator loss; not used for the discriminator loss directly.
    \item \textbf{Linear Module}: Indicates the presence of an additional linear layer for compressing the latent space, applied at the output of the encoder and the input of the decoder.
\end{itemize}

\begin{table}[htbp]
\centering
\caption{Model Hyperparameters Across Datasets}
\label{tab:model_configurations_transposed}
\begin{tabular}{l|c|c|c|c}
\toprule
\textbf{Configuration} & \textbf{Checkerboard} & \textbf{CIFAR-10} & \textbf{FFHQ} & \textbf{AFHQv2} \\
\midrule
Batch Size               & 4096  & 128 & 64 & 64 \\
Learning Rate (LR)       & 0.0003 & 0.0003 & 0.0003 & 0.0003 \\
Iterations               & 800k & 800k & 800k & 800k \\
Unet Out Channels        & 1 & 1 & 1 & 1 \\
Unet Model Channels      & 64 & 64 & 32 & 32 \\
Noisy Latent Injection   & 0.4 & 0.4 & 0.4 & 0.4 \\
Adversarial Weight       & 0.01 & 0.01 & 0.01 & 0.01 \\
Linear Module            & No & No & Yes & Yes \\
Latent Dim Size          & 1024 & 1024 & 512 & 512 \\
\bottomrule
\end{tabular}
\end{table}

% \begin{table}[htbp]
%     \centering
%     \caption{\textbf{Human Evaluation of Image Realism.} The reported ratio reflects the number of times participants selected the model-generated collage as more realistic, out of a total of 300 evaluations.}
%     \label{tab:human_eval}
%     \begin{tabular}{p{2.2cm}p{1.5cm}} % control column widths here
%         \toprule
%         \textbf{Method}  & \textbf{Ratio} \\
%         \midrule
%         EDM (Teacher)  & $45\%$  \\
%         \methodname (Student) & $55\%$  \\
%         \bottomrule
%     \end{tabular}
% \end{table}

\begin{table}[htbp]
    \centering
    \caption{Human Evaluation of Image Realism. The reported ratio reflects the number of times participants selected the model-generated collage as more realistic, out of a total of 300 evaluations.}
    \label{tab:human_eval}
    \begin{tabular}{@{}lcc@{}}
        \toprule
        & EDM (Teacher) & \methodname{} (Student) \\
        \midrule
        Ratio & $45\%$ & $55\%$ \\
        \bottomrule
    \end{tabular}
\end{table}

\subsection{Human Evaluation Experiment}
\label{sec:human_eval_details}
While the Fr\'echet Inception Distance (FID) is widely used to evaluate the quality of images generated by generative models, recent studies have highlighted its limitations in aligning with human judgments of image realism. For instance, Jayasumana et al.~\cite{jayasumana2024rethinking} demonstrate that FID can contradict human evaluations, particularly in assessing perceptual quality and semantic alignment, due to its reliance on Inception-v3 features and the assumption of Gaussian distributions. Similarly, Otani et al.~\cite{otani2023toward} emphasize that automatic metrics like FID often fail to capture the nuanced aspects of human perception, underscoring the need for standardized human evaluation protocols. Furthermore, Kynkäänniemi et al.~\cite{kynkaanniemi2022role} reveal that FID can be manipulated by aligning top-N class histograms without genuinely improving image quality, indicating potential biases in the metric. These findings suggest that while FID provides a useful quantitative signal, it may not fully encapsulate human perceptions of image realism, highlighting the importance of incorporating human evaluations when assessing generative models.

Therefore, in the main paper, we report the results of a human evaluation experiment designed to compare our method against EDM. Specifically, we curated 10 image collages (6$\times$6) of human faces generated by the teacher model (EDM, 79 NFEs) and 10 collages generated by our student model (1 NFE), using samples from the FFHQ dataset. Each comparison slide presents one collage from each model, displayed side-by-side as illustrated in Fig.~\ref{fig:qual_comp_with_edm}, with the order randomized. We then asked 30 human raters to evaluate 10 such slides and choose, for each, which collage appeared more realistic. If both appeared equally realistic, they were instructed to make a random choice. This process yielded 300 total decisions. We report in the main paper and in Tab.~\ref{tab:human_eval}, the proportion of times each model was preferred. Finally, we attach to the appendix both the slides that we sent to the practitioners and the Excel that aggregates the results.  The results indicate that decisions were made almost at random, suggesting that participants had difficulty distinguishing between the student and teacher generations. This implies that, despite existing FID gaps, human evaluators found it hard to perceive meaningful differences. Participants reported that both collages included highly realistic images, as well as images with noticeable artifacts that appeared unrealistic, in their judgment.

\subsection{Cost and Efficiency of Offline Distillation}
\label{sec:cost_efficiency}

An important consideration in offline distillation is the total computational cost, which includes both the training of the student model and the one-time cost of generating the teacher-produced noise–data pairs. While this pair generation step incurs an initial overhead, it is performed only once and can be reused across multiple student architectures or experiments, effectively amortizing its cost. 
Following the analysis of \citep{geng2023one}, which explicitly incorporates the cost of data-pair generation into its complexity assessment, the overall computational load of offline distillation remains favorable compared to online approaches. 
This is primarily because online distillation requires repeated access to the full teacher model throughout training, resulting in substantially higher neural function evaluations and memory usage. In our experiments on the main paper, we observe that even when accounting for the data generation step, our method achieves a markedly better trade-off between cost and performance than prior offline frameworks such as GET. These results collectively support that, despite the initial preprocessing overhead, offline distillation constitutes an efficient and scalable alternative for practical deployment. To see the full analysis between online and offline approaches, please refer to \cite{geng2023one}.

\subsection{Parameter Scaling}
\label{sec:scaling_parameter_cont}

We report the full results of the scaling experiment shown in Sec .~\ref {sec:one_step_generation}, including IS score from the parameter scaling comparison in Tab.~\ref{tab:model_size_fid_is_scores}.  
The results demonstrate that our approach consistently outperforms GET in terms of FID across all model sizes, except for IS in 20M, indicating superior perceptual quality. Notably, the improvement is more pronounced as the model size increases, with our 84M model achieving an FID of 4.89 compared to 7.19 for GET. In terms of IS, our method matches or surpasses GET, with particularly strong results at larger scales. These results underscore the scalability and effectiveness of our approach.

\begin{table}[hbtp]
    \centering
    \caption{FID and IS scores for different model sizes, comparing our approach with GET.}
    \label{tab:model_size_fid_is_scores}
    \small
    \begin{tabular}{@{}lcc|cc|cc@{}}
        \toprule
        & 
        \multicolumn{2}{c|}{20M} & 
        \multicolumn{2}{c|}{38M} & 
        \multicolumn{2}{c}{84M} \\
        & Ours & GET & Ours & GET & Ours & GET \\
        \midrule
        FID & 
        \textbf{10.2} & 10.72 & 
        \textbf{5.04} & 8.00 & 
        \textbf{4.89} & 7.19 \\
        IS & 
        8.50 & \textbf{8.69} & 
        \textbf{9.12} & 9.03 & 
        \textbf{9.11} & 9.09 \\
        \bottomrule
    \end{tabular}
\end{table}

\subsection{Extended Outlier Analysis}

In Sec.~\ref{sec:latent_space_structure}, we present the results of our model's component (left side) in the experiment shown in Fig.~\ref{fig:outlier_analysis_full}. Outlier detection was performed using the DBSCAN algorithm with a distance threshold of $0.15$. Interestingly, our model exhibits a slightly increased number of outliers. We hypothesize that this may be due to early stopping during training, as our primary objective was not to maximize performance on this particular dataset, but rather to gain insight and intuition into the nature of the learned dynamics. Finally, we observe that the outlier phenomenon is inherited from the teacher model (right side), suggesting that it is a more general characteristic of the generative dynamics rather than an artifact introduced by the distillation process.

\label{sec:outlier_analysis_cont}
\begin{figure}[htbp]
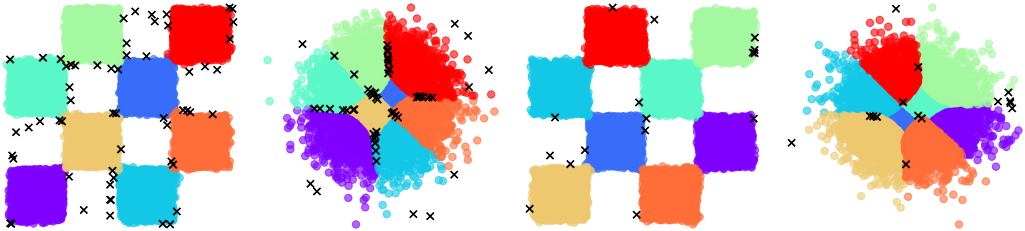

    \centering
    \begin{overpic}[width=1\textwidth, trim=0 0 0 0, clip]{figs/outlier_detection.pdf}
    \end{overpic}
    \caption{Outlier analysis on generated checkerboard patterns. The two left images are our model (student), and the two images on the right are from the flow matching model (teacher).}
    \label{fig:outlier_analysis_full}
\end{figure}

\section{Additional Discussion of Future Work and Current Limitations}
\label{sec:additional_discussion}

In this study, we took a significant step toward leveraging dynamical systems theory for offline distillation, demonstrating its strong potential in this domain. In this section, we outline several future research directions that we believe are both promising and intellectually compelling.

\paragraph{Standalone Koopman Generative Model.}  
Koopman theory enables modeling nonlinear dynamical systems in a transformed space where they can be analyzed linearly. This raises an intriguing question: \textit{Can the Koopman framework serve as a standalone generative model, capable of learning data distributions from scratch?} While the question remains open, exploring this direction could yield a novel and theoretically grounded framework for generative modeling.

\paragraph{Utilization of Full Trajectory Information.}  
Our current offline setup relies solely on noise-data pairs extracted from pre-trained teacher models. However, the diffusion process inherently includes intermediate steps that are discarded. We hypothesize that leveraging this full trajectory information could enhance trajectory alignment and improve student performance. Moreover, Koopman theory naturally accommodates multi-step signals, making this a promising direction for further investigation. Additionally, the current framework does not support a quality-versus-sampling-steps trade-off, which some online methods do enable. Future work incorporating the full diffusion trajectory could integrate this flexibility into our approach.

\paragraph{Incorporating Broader Koopman Theory Literature.}  
The connection between Koopman theory and diffusion model distillation opens the door to a rich body of theoretical and practical tools. On the theoretical side, works on sample complexity and error bounds for Koopman operators~\cite{philipp2023error, chen2018sample, muthirayan2021online} may provide insights into convergence guarantees. Information-theoretic perspectives~\cite{lozano2022information} could offer new modeling constraints or regularization schemes. On the practical side, methods for rare-event sampling~\cite{zhang2022koopman} may improve long-tail generation capabilities. Finally, while the current mathematical formulation supports discrete Koopman evaluation, the underlying theory naturally extends to continuous settings as well. Exploring these directions could substantially extend and generalize the impact of our framework.

\paragraph{Enhancing the Adversarial Setup.}  
While adversarial components have been shown to be critical in many distillation methods~\cite{yin2024improved, kim2024consistency}, our current implementation uses a basic discriminator architecture with standard loss functions and training schemes. We believe that integrating more advanced adversarial paradigms could significantly boost performance, potentially closing the gap with state-of-the-art online distillation methods.

\paragraph{Decision Boundary Insights.}  
In our synthetic experiments, we identified a common cause underlying failure modes or ``outliers'' in the generative dynamics. Further investigation is needed to determine whether this phenomenon generalizes to more complex datasets. If it does, it could guide the development of strategies such as uncertainty estimation, selective sampling to avoid generating low-quality outputs, incorporating structural constraints into the latent space to prevent such failures, or introducing auxiliary self-supervised objectives—such as contrastive learning—to enhance the model's ability to distinguish between semantically similar groups.

\paragraph{Toward Broader Applications.}  
Due to limited computational resources, we were unable to extend our offline distillation framework to more complex tasks, such as text-to-image generation, that require large-scale compute infrastructures. It will be interesting to explore how our method can scale or adapt to other domains and tasks. Additionally, because our method makes no assumptions about the initial and terminal states of the underlying dynamical system, we believe it is applicable to a broader class of generative models, including DDBMs~\cite{zhou2023denoising}, stochastic interpolants~\cite{albergo2023stochastic}, and others~\cite{de2021diffusion}.

\end{document}